\documentclass[dvipsnames]{article}
\usepackage{iclr2025_conference,times}

\usepackage{graphicx} %
\usepackage[font=footnotesize]{caption}
\usepackage{subcaption}
\usepackage{wrapfig}
\usepackage{hyperref}
\usepackage{booktabs}
\usepackage[T1]{fontenc}
\usepackage{tablefootnote}
\usepackage[bottom]{footmisc}
\usepackage{multirow}

\usepackage{tikz}
\usetikzlibrary{positioning,arrows.meta}
\definecolor{TabBlue}{RGB}{31,119,180}
\definecolor{TabRed}{RGB}{214,39,40}
\definecolor{TabGreen}{HTML}{2ca02c}
\definecolor{light-gray}{gray}{0.95}
\definecolor{light-blue}{HTML}{e9f4fb}
\definecolor{light-red}{HTML}{fbe9e9}

\usepackage{amsmath}
\usepackage{amssymb}
\usepackage{bm}
\usepackage{dsfont}
\usepackage{mathtools}
\usepackage{amsthm}
\newcommand\numberthis{\addtocounter{equation}{1}\tag{\theequation}}

\usepackage{delimset}
\usepackage{algorithm}
\usepackage{algpseudocode}

\bibliographystyle{plainnat}

\newcommand{\Sspace}{\mathcal{S}}

\newcommand{\Aspace}{\mathcal{A}}
\newcommand{\M}{\mathcal{M}}

\newcommand*\wc{{}\cdot{}}
\newcommand{\argmax}{\text{argmax}}
\newcommand{\argmin}{\text{argmin}}
\newtheorem{lemma}{Lemma}[section]
\newtheorem{corollary}{Corollary}[section]
\newtheorem{theorem}{Theorem}[section]
\newtheorem{assumption}{Assumption}[section]
\newtheorem{definition}{Definition}[section]
\newcommand{\cprimeT}{c^{\textrm{\tiny MLE}}_T}
\newcommand{\cprimeR}{c^{\textrm{\tiny MLE}}_R}

\usepackage{cleveref}
\usepackage{xcolor}         %
\usepackage{pifont}%
\newcommand{\cmark}{\ding{51}}%
\newcommand{\xmark}{\textcolor{lightgray}{\ding{55}}}%

\title{Preference Elicitation\\for Offline Reinforcement Learning}
\author{%
  Alizée Pace\\
  ETH AI Center, ETH Zürich\\
  MPI for Intelligent Systems, Tübingen \\
  \texttt{alizee.pace@ai.ethz.ch} \\
  \And
  Bernhard Schölkopf \\
    MPI for Intelligent Systems \& \\ ELLIS Institute Tübingen \\
  \AND
  Gunnar Rätsch \hspace{15em} \\
  ETH Zürich \hspace{15em} \\
    \And
   Giorgia Ramponi \quad \quad\\
  University of Zürich \quad \quad\\
}
\iclrfinalcopy
\begin{document}

\maketitle

\begin{abstract}
     Applying reinforcement learning (RL) to real-world problems is often made challenging by the inability to interact with the environment and the difficulty of designing reward functions. Offline RL addresses the first challenge by considering access to an offline dataset of environment interactions labeled by the reward function. In contrast, Preference-based RL does not assume access to the reward function and learns it from preferences, but typically requires an online interaction with the environment. We bridge the gap between these frameworks by exploring efficient methods for acquiring preference feedback in a fully offline setup. We propose Sim-OPRL, an offline preference-based reinforcement learning algorithm, which leverages a learned environment model to elicit preference feedback on simulated rollouts. Drawing on insights from both the offline RL and the preference-based RL literature, our algorithm employs a pessimistic approach for out-of-distribution data, and an optimistic approach for acquiring informative preferences about the optimal policy. We provide theoretical guarantees regarding the sample complexity of our approach, dependent on how well the offline data covers the optimal policy. Finally, we demonstrate the empirical performance of Sim-OPRL in various environments.
\end{abstract}

\section{Introduction}

While reinforcement learning (RL) \citep{sutton2018reinforcement} achieves excellent performance in various decision-making tasks \citep{kendall2019learning,mirhoseini2020chip,degrave2022magnetic}, its practical deployment remains limited by the requirement of direct interaction with the environment. This can be impractical or unsafe in real-world scenarios. For example, patient management and treatment in intensive care units involve complex decision-making that has often been framed as a reinforcement learning problem \citep{raghu2017deep,komorowski2018artificial}. However, the timing, dosage, and combination of treatments required are critical to patient safety, and incorrect decisions can lead to severe complications or death, making the use of traditional RL algorithms unfeasible \citep{gottesman2019guidelines,tang2021model}. Offline RL emerges as a promising solution, allowing policy learning from entirely observational data \citep{levine2020offline}.

Still, a challenge with Offline RL is its requirement for an explicit reward function. Quantifying the numerical value of taking a certain action in a given environment state is challenging in many applications \citep{yu2021reinforcement}. Preference-based RL offers a compelling alternative, relying on comparisons between different actions or trajectories \citep{wirth2017survey} and being often easier for humans to provide \citep{christiano2017deep}. In medical settings, for instance, clinicians may be queried for feedback on which trajectories lead to favorable outcomes. Unfortunately, most algorithms for preference acquisition require environment interaction \citep{pacchiano2021dueling,chen2022human,lindner2021information} and are therefore not applicable to the offline setting.

Lack of environment interaction and reward learning are thus two critical challenges for real-world RL deployment that are rarely tackled jointly. In this work, we address the problem of preference elicitation for offline reinforcement learning by asking: \textit{What trajectories should we sample to minimize the number of human queries required to learn the best offline policy?} This presents a challenging problem as it combines learning from offline data and active feedback acquisition, two frameworks that require opposing inductive biases for conservatism and exploration, respectively.

To the best of our knowledge, the only strategy proposed in prior work is to acquire feedback directly over samples within an offline dataset of trajectories \citep[Offline Preference-based Reward Learning (OPRL)]{shin2022benchmarks}. We propose an alternative solution that queries feedback on \textit{simulated rollouts} by leveraging a learned environment model. Our offline preference-based reinforcement learning algorithm, Sim-OPRL, strikes a balance between conservatism and exploration by combining pessimism when handling states out-of-distribution from the observational data \citep{jin2021pessimism,zhan2023provable}, and optimism in acquiring informative preferences about the optimal policy \citep{pacchiano2021dueling,chen2022human}. We study the efficiency of our approach through both theoretical and empirical analysis, demonstrating the superior performance of Sim-OPRL across various environments.

Our contributions are the following: (1) In \Cref{sec:prelims}, we first formalize the new problem setting of preference elicitation for offline reinforcement learning, which allows for \textbf{complementing offline data with preference feedback}. %
This framework is crucial for real-world RL applications where direct environment interaction is unsafe or impractical and reward functions are challenging to design manually, yet experts can be queried for their knowledge. %
(2)~In \Cref{sec:oprl}, we provide theoretical guarantees on eliciting preference feedback over samples from an offline dataset, complementing work of \citet{shin2022benchmarks}. 
(3)~Next, in \Cref{sec:sim_oprl}, we propose our own \textbf{efficient preference elicitation algorithm} based on simulated rollouts in a learned environment model, and establish its improved theoretical guarantees. %
(4)~Finally, we develop a practical implementation of our algorithm and demonstrate its \textbf{empirical efficiency} and scalability across various decision-making environments.

\vspace{-0.3em}
\section{Related Work}
\vspace{-0.2em}

Our problem setting shares similarities with Offline RL and Preference-based RL, which we summarize below. We position ourselves relative to our closest related works in \Cref{tab:related_work_simple} and extend our discussion in \Cref{appendix:extra_related_work}.
\vspace{-0.2em}

\begin{table}[b]\vspace{-1.6em}
\centering
\caption{\footnotesize \textbf{Comparison of related work on preference elicitation.}} \label{tab:related_work_simple}
\resizebox*{\textwidth}{!}{
\begin{tabular}{lcccc}
    \toprule
    Framework & Offline & Policy-Based Sampling & Robustness Guarantees  & Practical Implementation \\
    \midrule
    PbOP \citep{chen2022human} & \xmark & \cmark & \cmark & \xmark\\
    MoP-RL \citep{liu2023efficient} & \xmark & \xmark & \xmark & \cmark \\
    REGIME \citep{zhan2023query}  & \xmark & \cmark & \cmark & \xmark\\
    FREEHAND \citep{zhan2023provable} & \cmark & \xmark & \cmark & \xmark\\
    OPRL \citep{shin2022benchmarks} & \cmark & \xmark & \xmark & \cmark\\ \midrule
    \textbf{Sim-OPRL (Ours)} & \cmark & \cmark & \cmark & \cmark\\
    \bottomrule
\end{tabular}}\vspace{-0.7em}
\end{table}

\textbf{Offline RL.}
Offline Reinforcement Learning has gained significant traction in recent years, as the practicality of training RL agents without environment interaction makes it relevant to real-world applications \citep{levine2020offline}. However, learning from observational data only is a source of bias in the model, as the data may not cover the entire state-action space. Offline RL algorithms therefore output pessimistic policies, which has been shown to minimize suboptimality \citep{jin2021pessimism}. Model-based approaches show particular promise for their sample efficiency \citep{yu2020mopo,kidambi2020morel,rigter2022rambo,zhai2024optimistic,uehara2021pessimistic}. In this work, we study the setting where reward signals are unavailable and must be estimated by actively querying preference feedback.

\textbf{Preference-based RL.} Rather than accessing numerical reward values for each state-action pair as in traditional online RL, preference-based RL learns the reward model through collecting pairwise preferences over trajectories \citep{wirth2017survey}. Different preference elicitation strategies have been proposed for this framework, generally based on knowing the transition model exactly or on having access to the environment for rollouts \citep{christiano2017deep,pacchiano2021dueling,chen2022human,lindner2021information,zhan2023query,sadigh2018active,brown20safe}.  Prior theoretical and empirical work \citep{lindner2021information,chen2022human} show that, in this setting, the most efficient preference elicitation strategy is to actively reduce the set of candidate optimal policies, rather maximize information gain on the reward function -- our theoretical and empirical results reach the same conclusion for the offline setting.

\textbf{Offline Preference-based RL.} The development of preference-based RL algorithms based on offline data only is critical to settings where environment interaction is not feasible for safety and efficiency reasons. Still, this framework remains largely unexplored in the literature. While \citet{zhu2023principled,zhan2023provable} demonstrate the value of pessimism in offline {preference-based} reinforcement learning, they do not consider how to query feedback actively. On the other hand, \citet{ shin2022benchmarks} propose an empirical comparison of different preference sampling trajectories from an offline trajectories buffer. In \Cref{sec:oprl}, we provide a theoretical analysis of their approach, then propose an alternative sampling strategy based on simulated trajectory rollouts in \Cref{sec:sim_oprl}, which benefits from both theoretical motivation and superior empirical performance. 

\vspace{-0.3em}
\section{Problem formulation}\label{sec:prelims}
\vspace{-0.2em}
\subsection{Preliminaries}

\textbf{Markov Decision Process.} We consider the episodic Markov Decision Process (MDP), defined by the tuple ${\M = (\Sspace,  \Aspace, H, T, {R})}$, where $\Sspace$ is the state space, $\Aspace$ is the action space, $H$ is the episode length, $T:\Sspace \times \Aspace \rightarrow \Delta_\Sspace$ is the transition function, ${R}:\Sspace \times \Aspace \rightarrow \mathbb{R}$ is the reward function. 
We assume an initial state $s_0$, but our analysis generalizes to a fixed initial state distribution. At time $t$, the environment is at state $s_t \in \Sspace$ and an agent selects an action $a_t \in \Aspace$. The agent then receives a reward ${R}(s_t, a_t)$ and the environment transitions to state $s_{t+1} \sim T( \wc | s_t, a_t)$. We describe an agent's behavior through a policy function $\pi: \Sspace \rightarrow \Delta_\Aspace$, such that $\pi( a |s)$ is the probability of taking action $a$ in state $s$. Let $\tau = (s_0, a_0, \ldots s_H, a_H )$ denote the trajectory of state-action pairs of an interaction episode with the environment. %
With an abuse of notation, we also write $R(\tau) = \sum_t R(s_t, a_t)$. Let $d^{\pi}_T$ denote the distribution of trajectories (or state-action pairs, overloading notation) induced by rolling out policy $\pi$ in transition model $T$. We denote the expected return of policy $\pi$ as $V^{\pi}_{T,R} = \mathbb{E}_{\tau \sim d^{\pi}_T}[R(\tau)]$, and $\pi^* = \argmax_{\pi} V^{\pi}_{T,R}$ denotes the optimal policy in $\mathcal{M}$.

\textbf{Preference-based Reinforcement Learning.} Rather than observing rewards for each state and action, we receive preference feedback over trajectories. For a pair of trajectories $(\tau_1, \tau_2)$, we obtain binary feedback $o \in \{0,1\}$ about whether $\tau_1$ is preferred to $\tau_2$. We define the preference function $P_R$ and assume that preference labels follow the Bradley-Terry model \citep{bradley1952rank}: %
\begin{equation}
    P_R(\tau_1 \succ \tau_2) \vcentcolon= P(o=1 | \tau_1, \tau_2) = \frac{\exp(R(\tau_1))}{\exp(R(\tau_1)) + \exp(R(\tau_2))} = \sigma ( R(\tau_1) - R(\tau_2) ),
\end{equation}
where $\succ$ denotes a preference relationship and $\sigma$ is a monotonous increasing link function.  Within this framework, \textit{preference elicitation} refers to the process of sampling preferences to obtain information about both the preference function and the system dynamics \citep{wirth2017survey}.

\subsection{Offline Preference Elicitation}

We assume access to an observational dataset of trajectories $\mathcal{D}_{\textrm{offline}} = \{\tau: \tau \sim d^{\pi_{\beta}}_T\}$, where $\pi_{\beta}$ is an unknown behavioural policy in $\mathcal{M}$. As in Offline RL, we do \textit{not} have access to the environment to observe transition dynamics or rewards under alternative action choices. We assume \textit{not} to have access to the reward function, but we can query preference feedback from a human to generate a dataset of preferences $\mathcal{D}_{\textrm{pref}} = \{(\tau_{1}, \tau_{2},  o)\} $.

\textbf{Optimality Criterion.}
Based only on our offline dataset $\mathcal{D}_{\textrm{offline}}$, our goal is to recover a policy $\hat{\pi}^*$ that minimizes suboptimality in the true environment with as few human preference queries as possible.  Let $\pi^*_{\textrm{offline}}$ denote the \textit{optimal offline policy} estimated based on the offline data, with access to the true reward function $R$, and let $\epsilon_T$ denote its suboptimality. Since preference elicitation only allows us to estimate the reward function, we do not aim to achieve a suboptimality less than $\epsilon_T$ -- although this is not formally a lower bound for our problem, as shown in \Cref{appendix:counterex_epsilonT}. Our objective is then formalized as follows.

\begin{definition}[Optimality Criterion of Offline Preference Elicitation] \label{def:optimality}
      Let $\pi^*$ be the optimal policy in $\mathcal{M}$ and $\hat{\pi}^*$ be the estimated optimal policy based on an offline dataset \textup{$\mathcal{D}_{\textrm{offline}}$} and $N_p > 0$ preference queries. Let $\epsilon_T$ be the inherent suboptimality assuming access to the true reward function. We say that a sampling strategy is $(\epsilon, \delta, N_p)$-correct if for every $\epsilon \geq \epsilon_T$, with probability at least $(1-\delta)$, it holds that $V^{\pi^*}_{T,R} - V^{\hat{\pi}^*}_{T,R} \leq \epsilon$. %
\end{definition}

Our work is the first to formalize this important problem, which faces the challenge of balancing \textbf{exploration} when actively acquiring feedback and \textbf{bias mitigation} in learning from offline data.

\textbf{Function classes.} We estimate the reward function and transition kernel with general function approximation; let $\mathcal{F}_R$ and $\mathcal{F}_T$ denote the classes of functions considered respectively. We also assume a policy class $\Pi$. Our theoretical analysis also requires the following assumptions and definitions, which are standard in preference-based RL \citep{chen2022human,zhan2023provable}.

\begin{assumption}[Realizability]
    The true reward function belongs to the reward class: $R \in \mathcal{F}_R$. The true transition function belongs to the transition class: $T \in \mathcal{F}_T$. The optimal policy belongs to the policy class:  $\pi^* \in \Pi$.
\end{assumption}
\begin{assumption}[Boundedness] \label{assumption:R_bounded}
    The reward function is bounded: $0 \leq \tilde{R}(\tau ) \leq R_{\textup{max}}$ for all $\tilde{R} \in \mathcal{F}_R$ and all trajectories $\tau$.
\end{assumption}

\begin{definition}[$\epsilon$-bracketing number]
    Let $\mathcal{F}$ be a class of real functions $f : \mathcal{X} \rightarrow \mathbb{R}$.
    We say $(l, u)$ is an $\epsilon$-bracket if $l(x) \leq u(x)$ and $\vert u(x) - l(x)\vert_1 \leq \epsilon$ for all $x\in\mathcal{X}$. 
    The $\epsilon$-bracketing number of $\mathcal{F}$, denoted $\mathcal{N}_{\mathcal{F}}(\epsilon)$, is the minimal number of $\epsilon$-brackets $(l^n, u^n)_{n=1}^N$ needed so that for any $f \in \mathcal{F}$, there is a bracket $i \in [N]$ containing it, meaning $l^i(x) \leq f(x) \leq u^i(x)$ for all $x\in\mathcal{X}$.
\end{definition}

Let $\mathcal{N}_{\mathcal{F}_R}(\epsilon)$ and $\mathcal{N}_{\mathcal{F}_T}(\epsilon)$ denote the $\epsilon$-bracketing numbers of $\mathcal{F}_R$ and $\mathcal{F}_T$ respectively. This measures the complexity of the function classes \citep{geer2000empirical}. For instance, with linear rewards $\mathcal{F}_R \vcentcolon= \{ R: \tau \rightarrow \theta^T \phi(\tau) \}$, the $\epsilon$-bracket number is bounded by $\log\mathcal{N}_{\mathcal{F}_R}(\epsilon) \leq \mathcal{O}(d \log \frac{BG}{\epsilon})$, where $\Vert\theta\Vert_2 \leq B$ and $\Vert\phi(\tau)\Vert_2 \leq G \: \forall \tau \in \mathcal{T}$, and $d$ is the dimension of the feature space \citep{zhan2023provable}.

\begin{definition}[Transition concentrability coefficient, \citet{zhan2023provable}]
    The concentrability coefficient w.r.t. transition classes $\mathcal{F}_T$ and the optimal policy $\pi^*$ is defined as:
    \begin{equation*}
        C_T(\mathcal{F}_T, \pi^*) = \sup_{\tilde{T} \in \mathcal{F}_T}  \left[ \frac{\mathbb{E}_{(s,a) \sim d^{\pi^*}_T} [\vert T(\cdot|s,a) - \tilde{T}(\cdot|s,a)\vert_1]}{\sqrt{\mathbb{E}_{(s,a) \sim \mathcal{D}_{\textup{offline}}} [\vert T(\cdot|s,a) - \tilde{T}(\cdot|s,a)\vert_1^2]}} \right],
    \end{equation*}
\end{definition}
The concentrability coefficient measures the coverage of the optimal policy in the offline dataset. Note that $C_T$ is upper-bounded by the density-ratio coefficient: $C_T(\mathcal{F}_T, \pi^*) \leq \sup_{(s,a) \in \Sspace \times \Aspace} {d^{\pi^*}_T(s,a)}/{d^{\pi_{\beta}}_T(s,a)}$, where $\pi_{\beta}$ is the behavioural policy underlying $\mathcal{D}_{\textrm{offline}}$.

\section{Offline Preference-based RL with Preference Elicitation} \label{sec:method_off_rl}

In this section, we propose a {general framework for offline preference-based reinforcement learning}. The next two sections propose two different preference elicitation strategies. As learning must be carried out in two distinct stages, with environment dynamics based on $\mathcal{D}_{\textrm{offline}}$ and reward learning on $\mathcal{D}_{\textrm{pref}}$, we adopt a model-based approach which we summarize in \Cref{alg:algo}. %

\textbf{Model Learning.} We first leverage the offline data to learn a model of the environment dynamics, fitting a transition model $\hat{T}$ and an uncertainty function $u_T$ through maximum likelihood: %
\begin{align*}
    \hat{T}\quad &= \argmax_{\tilde{T} \in \mathcal{F}_{T}} \mathbb{E}_{(s,a,s') \sim \mathcal{D}_{\textrm{offline}}} \left[\log\tilde{T}(s' |s, a) \right], \\
        u_T(s,a) &= \max_{\tilde{T}_1, \tilde{T}_2 \in \mathcal{T}} \vert \tilde{T}_1(\cdot |s,a) -  \tilde{T}_2(\cdot | s,a) \vert_1 \cdot {R_{\textrm{max}}},
\end{align*}
where $\mathcal{T} = \{ \tilde{T} \in \mathcal{F}_T \: | \: \mathbb{E}_{(s,a,s') \sim \mathcal{D}_{\textrm{offline}}} \left[ \log \left({\hat{T}(s' |s, a)}/{\tilde{T}(s' |s, a)}\right) \right] \leq \beta_T \}$, defining a confidence set over the maximum likelihood estimator (MLE), and $\beta_T$ is a margin hyperparameter.

\begin{figure}[t]%
\begin{algorithm}[H]
\footnotesize
\caption{Offline Preference-based Reinforcement Learning with Preference Elicitation}\label{alg:algo}
\begin{algorithmic}[1]
\Require Observational trajectories dataset $\mathcal{D}_{\textrm{offline}}$. Significance $\delta \in (0,1)$, preference budget $N_p$. %
\Ensure $\hat{\pi}^*$
\State Estimate $\hat{T}$ and $u_{T}$ via maximum likelihood over the observational data $\mathcal{D}_{\textrm{offline}}$.
\State $ \mathcal{D}_{\textrm{pref}} \gets \emptyset$. %
\For{$k=1,... N_p$} %
    \State Generate trajectory pairs $ (\tau_1, \tau_2) $. \Comment{\textbf{Preference Elicitation}: \Cref{sec:oprl,sec:sim_oprl}} %
    \State Collect preference label $o$ for $ (\tau_1, \tau_2) $.
    \State $\mathcal{D}_{\textrm{pref}} \gets \mathcal{D}_{\textrm{pref}} \cup \{ (\tau_1, \tau_2, o)\}$.
    \State Estimate $\hat{R}$ and $u_{R}$ via maximum likelihood over the preference data $\mathcal{D}_{\textrm{pref}}$.
\EndFor
\State $\hat{\pi}^* \gets \argmax_{\pi \in \Pi} \mathbb{E}_{\tau \sim d^{\pi}_{\hat{T}}} [\hat{R}(\tau) - u_R(\tau) - u_T(\tau)]$ 
\end{algorithmic}
\end{algorithm}
\vspace{-1em}
\end{figure}

\textbf{Iterative Preference Elicitation and Reward Learning.} As with the transition model, our algorithm estimates the reward function $\hat{R}$ and its uncertainty function through maximum likelihood over iteratively collected preference data $\mathcal{D}_{\textrm{pref}}$:
    \begin{align*}
        \hat{R} \quad &= \argmax_{\tilde{R} \in \mathcal{F}_R} \mathbb{E}_{(\tau_1, \tau_2, o) \sim \mathcal{D}_{\textrm{pref}}} \left[ o\log P_{\tilde{R}}(\tau_1 \succ \tau_2) + (1-o)\log P_{\tilde{R}}(\tau_2 \succ \tau_1) \right],\\
        u_R(\tau) &= \max_{\tilde{R}_1, \tilde{R}_2 \in \mathcal{R}} \vert \tilde{R}_1(\tau) -  \tilde{R}_2(\tau) \vert_1,
    \end{align*}
where $\mathcal{R} = \{ \tilde{R} \in \mathcal{F}_R \: | \: \mathbb{E}_{(\tau_1, \tau_2, o) \sim \mathcal{D}_{\textrm{pref}}} \left[ \log \left({P_{\hat{R}}(\tau_1 \succ \tau_2)}/{P_{\tilde{R}}(\tau_1 \succ \tau_2)} \right) \right] \leq \beta_R \}$ defines the confidence set and $\beta_R$ is a hyperparameter.
We also define preference uncertainty between two trajectories $\tau_1, \tau_2$: \begin{equation}\label{eq:u_pref}
u_{P_R}(\tau_1, \tau_2) =  \max_{\tilde{R}_1, \tilde{R}_2 \in \mathcal{R}} \vert P_{\tilde{R}_1}(\tau_1 \succ \tau_2) -  P_{\tilde{R}_2}(\tau_1 \succ \tau_2) \vert_1. \end{equation}

The choice of trajectory sampling strategy for preference elicitation in line 4 (\Cref{alg:algo}) is critical to efficiently obtaining an $\epsilon$-optimal policy. We present two possible strategies in \Cref{sec:oprl,sec:sim_oprl}.

\textbf{Pessimistic Policy Optimization.} Finally, our algorithm outputs a policy $\hat{\pi}^*$ that is optimal while ensuring robustness to modeling error. This means optimizing for the worst-case value function over the remaining transition and reward uncertainties \citep{levine2020offline}:
\begin{equation}\label{eq:pessimsitic_objective}
\hat{\pi}^* = \argmax_{\pi \in \Pi} \min_{\tilde{T} \in \mathcal{T}, \tilde{R} \in \mathcal{R}} V^{\pi}_{\tilde{T}, \tilde{R}}.
\end{equation}
Based on this objective, we define the pessimistic transition and reward models as follows: $\hat{T}_\text{inf},\hat{R}_\text{inf}= \argmin_{\tilde{T} \in \mathcal{T}, \tilde{R} \in \mathcal{R}} \max_{\pi \in \Pi} V^{\pi}_{\tilde{T}, \tilde{R}}$. Our analysis provides a worst-case robustness guarantee when considering well-calibrated confidence intervals, as detailed in \Cref{sec:oprl_theory,sec:simoprl_theory}. In other words, following prior work \citep{chen2022human,zhan2023provable}, our theoretical analysis assumes that modeling elements can be identified with no optimization error. We then complement this algorithmic framework with a flexible practical implementation in \Cref{sec:practical_algo}.

\vspace{-0.3em}
\section{Preference Elicitation from Offline Trajectories} \label{sec:oprl} 
\vspace{-0.2em}

A first strategy for preference elicitation without environment interaction is to sample trajectories directly from the offline dataset. \citet{shin2022benchmarks} propose this approach as {Offline Preference-based Reward Learning (OPRL)}, and design a uniform and uncertainty-sampling variant:
\begin{align*}
    &\text{\textbf{OPRL Uniform Sampling}: } & \tau_1, \tau_2 \sim \mathcal{D}_{\textrm{offline}} \\
    &\text{\textbf{OPRL Uncertainty Sampling}: } & \tau_1, \tau_2 =\argmax_{\tau_1, \tau_2 \in \mathcal{D}_{\textrm{offline}}} u_{P_R} (\tau_1, \tau_2)
\end{align*}
We provide a theoretical analysis of the performance of OPRL. %

\subsection{Theoretical Guarantees.} \label{sec:oprl_theory}

We obtain the following result, proven in \Cref{appendix:proof_thm_complexity_offline}. The suboptimality of the estimated policy $\hat{\pi}^*$ is bounded by the policy evaluation error for the optimal policy $\pi^*$. This error decomposes into a term depending on transition model estimation, and one on reward model estimation.

\begin{theorem}%
\label{thm:sample_complex_offline}For any $\delta \in (0,1]$, let \textup{$\beta_T= \cprimeT \log(H \mathcal{N}_{\mathcal{F}_T}(1/N_o)/\delta)/N_o$} and \textup{$\beta_R ={\cprimeR\log(\mathcal{N}_{\mathcal{F}_R}(1/N_p)/\delta)}/{N_p}$}, where \textup{$ N_o=  H|\mathcal{D}_{\textrm{offline}}| $} is the number of observed transitions in the observational dataset and \textup{$\cprimeT,\cprimeR$} are universal constants. The policy $\hat{\pi}^*$ estimated by \Cref{alg:algo}, with preference elicitation based on offline trajectories, achieves the following suboptimality with probability $1-\delta$:
\textup{\begin{equation*}
\resizebox{\textwidth}{!}{
    $V^{\pi^*} -V^{\hat{\pi}^*} \leq \underbrace{H R_{\textrm{max}}  C_T(\mathcal{F}_T, \pi^* )\sqrt{\frac{c_T}{N_o} \log \left( \frac{H }{\delta} \mathcal{N}_{\mathcal{F}_T}\left(\frac{1}{N_o}\right)\right) }}_{\text{transition term } \epsilon_T}     + \underbrace{2\alpha \kappa C_R(\mathcal{F}_R, \pi^*) 
    \sqrt{\frac{c_R}{N_p}  \log \left(\frac{1}{\delta}\mathcal{N}_{\mathcal{F}_R}\left(\frac{1}{N_p}\right) \right)}}_{\text{reward term}},$}
\end{equation*}}
where $\alpha=1$ for uniform sampling or $\alpha\leq1$ for uncertainty sampling, $C_R$ is a concentrability measure for the reward function, \textup{$\kappa = \sup_{r \in [-R_{\textrm{max}}, R_{\textrm{max}}]}\frac{1}{\sigma'(r)}$} measures the degree of non-linearity of the link function, and $c_T,c_R$ are universal constants. 
\end{theorem}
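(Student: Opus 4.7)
The plan is to follow the standard pessimism-based suboptimality decomposition used in model-based offline RL, split the resulting policy-evaluation error into a transition part and a reward part, and then invoke maximum-likelihood concentration together with the two concentrability coefficients.

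First I set up the pessimism argument. Because $\beta_T$ and $\beta_R$ are calibrated so that the true models lie in the confidence sets $\mathcal{T}$ and $\mathcal{R}$ with probability at least $1-\delta$ (by standard bracketing-number MLE concentration, which gives the $\log \mathcal{N}_{\mathcal{F}_T}(1/N_o)$ and $\log \mathcal{N}_{\mathcal{F}_R}(1/N_p)$ factors), I get $V^{\hat{\pi}^*}_{T,R} \geq V^{\hat{\pi}^*}_{\hat{T}_\text{inf}, \hat{R}_\text{inf}} \geq V^{\pi^*}_{\hat{T}_\text{inf}, \hat{R}_\text{inf}}$, where the second inequality uses the definition of $\hat{\pi}^*$ as the maximizer of the pessimistic objective. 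Chaining yields $V^{\pi^*}_{T,R} - V^{\hat{\pi}^*}_{T,R} \leq V^{\pi^*}_{T,R} - V^{\pi^*}_{\hat{T}_\text{inf}, \hat{R}_\text{inf}}$, i.e.\ the gap collapses to a policy-evaluation error evaluated only at $\pi^*$. Inserting the intermediate value $V^{\pi^*}_{\hat{T}_\text{inf}, R}$ and using the triangle inequality decomposes this error into a transition contribution and a reward contribution.

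For the transition contribution, a simulation-lemma step bounds $V^{\pi^*}_{T,R} - V^{\pi^*}_{\hat{T}_\text{inf}, R}$ by $H R_{\textup{max}} \, \mathbb{E}_{(s,a) \sim d^{\pi^*}_T}[|T(\cdot|s,a) - \hat{T}_\text{inf}(\cdot|s,a)|_1]$. Standard MLE concentration applied to the transition likelihood (as in Zhan et al., 2023) gives $\mathbb{E}_{(s,a)\sim \mathcal{D}_\text{offline}}[|T - \tilde{T}|_1^2] \lesssim \beta_T$ uniformly over $\tilde{T} \in \mathcal{T}$, and the concentrability coefficient $C_T(\mathcal{F}_T, \pi^*)$ transports this $\ell^2$ bound under $\mathcal{D}_\text{offline}$ to an $\ell^1$ bound under $d^{\pi^*}_T$ (by its definition). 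Plugging in $\beta_T = \cprimeT \log(H \mathcal{N}_{\mathcal{F}_T}(1/N_o)/\delta)/N_o$ delivers exactly the first term $\epsilon_T$.

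For the reward contribution I need to bound $\mathbb{E}_{\tau \sim d^{\pi^*}_T}[R(\tau) - \hat{R}_\text{inf}(\tau)]$. The Bradley-Terry MLE bound gives $\mathbb{E}_{(\tau_1, \tau_2)\sim \mu_\text{pref}}\bigl[(P_R(\tau_1 \succ \tau_2) - P_{\tilde{R}}(\tau_1 \succ \tau_2))^2\bigr] \lesssim \beta_R$ uniformly over $\tilde{R} \in \mathcal{R}$, where $\mu_\text{pref}$ is the distribution of queried pairs (the product distribution on $\mathcal{D}_\text{offline}$ for uniform OPRL, or its argmax-uncertainty reweighting for uncertainty OPRL). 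Inverting the sigmoid link function via the mean-value theorem costs the factor $\kappa = \sup 1/\sigma'$ and converts this into a bound on reward-difference errors $|R(\tau_1) - R(\tau_2) - \tilde{R}(\tau_1) + \tilde{R}(\tau_2)|$ under $\mu_\text{pref}$. Boundedness of $\tilde{R}$ (Assumption 3.2) handles the additive-constant identifiability issue; the reward concentrability coefficient $C_R(\mathcal{F}_R, \pi^*)$ then transfers the bound to $d^{\pi^*}_T$. For uniform sampling this transfer is immediate with $\alpha=1$; for uncertainty sampling, picking the pair of maximal $u_{P_R}$ each round forces the remaining uncertainty on any fixed pair below a fraction $\alpha \leq 1$ of the average, yielding the improvement. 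A union bound over the two MLE events at level $\delta/2$ concludes.

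The main obstacle I anticipate is the reward term: cleanly combining (i) the sigmoid inversion so the $\kappa$ factor appears exactly once, (ii) the anchor/boundedness step needed to pass from preference-based identifiability to pointwise reward bounds, and (iii) the definition of $C_R(\mathcal{F}_R, \pi^*)$ so that a single expression captures both sampling rules with the $\alpha$ factor correctly reflecting the variance reduction of uncertainty sampling. The transition term is essentially a direct application of the Zhan et al.\ machinery; the reward term is where this theorem does its nontrivial work.
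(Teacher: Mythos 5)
Your overall route matches the paper's: pessimism collapses the suboptimality to the policy-evaluation error at $\pi^*$; this error is split into a transition part and a reward part; the transition part is handled by a simulation/telescoping lemma, the coefficient $C_T$, and MLE concentration over $\mathcal{F}_T$ (the paper's \Cref{lemma:transitions}); and the reward part by Bradley--Terry MLE concentration, a mean-value-theorem inversion costing $\kappa$, the concentrability transfer via $C_R$, and the observation that uncertainty sampling can only increase the average elicited preference uncertainty, which yields $\alpha \leq 1$ (the paper's \Cref{lemma:preferences_offline,lemma:alpha}). The union bound over the two MLE events at level $\delta/2$ is also exactly the paper's bookkeeping.

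There is, however, one genuine inconsistency in your decomposition step. You insert the intermediate value $V^{\pi^*}_{\hat{T}_{\textrm{inf}},R}$, whose induced reward term would be $V^{\pi^*}_{\hat{T}_{\textrm{inf}},R} - V^{\pi^*}_{\hat{T}_{\textrm{inf}},\hat{R}_{\textrm{inf}}} = \mathbb{E}_{\tau\sim d^{\pi^*}_{\hat{T}_{\textrm{inf}}}}[R(\tau)-\hat{R}_{\textrm{inf}}(\tau)]$, yet the reward term you then bound is $\mathbb{E}_{\tau\sim d^{\pi^*}_{T}}[R(\tau)-\hat{R}_{\textrm{inf}}(\tau)]$, which arises from the \emph{other} intermediate value, $V^{\pi^*}_{T,\hat{R}_{\textrm{inf}}}$. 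Your transition term $V^{\pi^*}_{T,R}-V^{\pi^*}_{\hat{T}_{\textrm{inf}},R}$ and your reward term $V^{\pi^*}_{T,R}-V^{\pi^*}_{T,\hat{R}_{\textrm{inf}}}$ therefore come from two different decompositions, and their sum does not in general dominate the target $V^{\pi^*}_{T,R}-V^{\pi^*}_{\hat{T}_{\textrm{inf}},\hat{R}_{\textrm{inf}}}$: the discrepancy is $\mathbb{E}_{d^{\pi^*}_{T}}[R-\hat{R}_{\textrm{inf}}]-\mathbb{E}_{d^{\pi^*}_{\hat{T}_{\textrm{inf}}}}[R-\hat{R}_{\textrm{inf}}]$, which has no guaranteed sign, so the claimed inequality chain can fail. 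The fix is one line and is what the paper does in \Cref{eq:subopt_terms_offline}: insert $V^{\pi^*}_{T,\hat{R}_{\textrm{inf}}}$ consistently, so the transition term becomes $V^{\pi^*}_{T,\hat{R}_{\textrm{inf}}}-V^{\pi^*}_{\hat{T}_{\textrm{inf}},\hat{R}_{\textrm{inf}}}$ --- the simulation lemma and $C_T$ argument apply verbatim, since they hold for any bounded reward in the confidence set, in particular $\hat{R}_{\textrm{inf}}$ --- and the reward term is exactly the quantity your $C_R$-based argument handles. Keeping the reward term under the \emph{true} dynamics $d^{\pi^*}_T$ is not optional here: the paper's $C_R(\mathcal{F}_R,\pi^*)$ has $\tau_1\sim d^{\pi^*}_T$ in its numerator, and this placement is precisely what distinguishes \Cref{thm:sample_complex_offline} from \Cref{thm:sample_complex}, where the alternative decomposition (reward term under $\hat{T}_{\textrm{inf}}$, evaluated at $\pi^*_{\textrm{offline}}$) is chosen deliberately to eliminate $C_R$.
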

In the special case where both the transition and reward functions are learned on a fixed initial preference dataset (no preference elicitation; $|\mathcal{D}_{\textrm{offline}}| = 2N_p$), we recover Theorem 1 from \citet{zhan2023provable}. Importantly, the coefficient $\alpha$ allows us to motivate the superior efficiency of uncertainty sampling over uniform sampling, observed empirically in \citet{shin2022benchmarks} and in our own experiments (\Cref{sec:experiments}). Uncertainty sampling learns accurate reward models with fewer preference queries when $\alpha < 1$, but can perform like uniform sampling in harder problems ($\alpha=1$).

\section{Preference Elicitation from Simulated Trajectories}\label{sec:sim_oprl}

We now propose our alternative strategy for generating trajectories for offline preference elicitation: \textbf{Simulated Offline Preference-based Reward Learning (Sim-OPRL)}. This method {simulates} trajectories $(\tau_1, \tau_2)$ by leveraging the \textit{learned environment model}. This overcomes a limitation of OPRL, which is only designed to reduce uncertainty about the reward functions in $\mathcal{R}$, by instead reducing uncertainty about which policies are {plausibly optimal}. Our approach is inspired by efficient online preference elicitation algorithms \citep{pacchiano2021dueling,chen2022human}, which we modify for practical implementation. We account for the offline nature of our problem by avoiding regions that are out of the distribution of the data: the sampling strategy is \textit{optimistic} with respect to uncertainty in rewards, but \textit{pessimistic} with respect to uncertainty in transitions.

We summarize our approach to generating simulated trajectories for preference elicitation in \Cref{alg:model_sampling}. First, we construct a set of {candidate optimal policies} $\Pi_{\textrm{offline}}$, containing policy $\pi^*_{\textrm{offline}}$ (optimal policy under the pessimistic model and the {true} reward function) with high probability -- as demonstrated in \Cref{appendix:reward_term}. Next, within this set of candidate policies, we identify the two most {exploratory policies} $\pi_1, \pi_2$, chosen to maximize preference uncertainty $u_{P_R}$. Finally, we roll out these policies within our learned transition model to generate a trajectory pair $(\tau_1, \tau_2)$ for preference feedback.

\begin{algorithm}[H]
\footnotesize
\caption{Preference Elicitation through Simulated Trajectory Sampling.}\label{alg:model_sampling}
\begin{algorithmic}[1]
\Require Pessimistic transition model $\hat{T}_{\textrm{inf}}$. Reward confidence set $\mathcal{R}$ and preference uncertainty function $u_{P_R}$. %
\Ensure $(\tau_1, \tau_2)$
\State Estimate optimal offline policy set: %
$\Pi_{\textrm{offline}} = \{ \pi\:\vert \: \exists \tilde{R} \in \mathcal{R} : \pi=\argmax_{\pi \in \Pi} \mathbb{E}_{\tau \sim d^{\pi}_{\hat{T}_{\textrm{inf}}}} \big[ \tilde{R} ( \tau)\big ]  \}$
\State Identify exploratory policies: $\pi_1, \pi_2 = \argmax_{\pi_1, \pi_2 \in \Pi_{\textrm{offline}}} \mathbb{E}_{\tau_1 \sim d^{\pi_1}_{\hat{T}_{\textrm{inf}}}, \tau_2 \sim d^{\pi_2}_{\hat{T}_{\textrm{inf}}}} \left [ u_{P_R} (\tau_1, \tau_2)  \right]$
\State Rollouts in model: $\tau_1 \sim d^{\pi_1}_{\hat{T}_{\textrm{inf}}}, \tau_2 \sim d^{\pi_2}_{\hat{T}_{\textrm{inf}}}$.
\end{algorithmic}
\end{algorithm}
\vspace{-1.5em}

We first provide a theoretical analysis of the performance of Sim-OPRL, before proposing a practical implementation of our entire preference elicitation and policy optimization algorithm.

\subsection{Theoretical Guarantees} \label{sec:simoprl_theory}

We decompose suboptimality in a similar way to \Cref{sec:oprl_theory}, but obtain a reward suboptimality term that depends on the learned dynamics model instead of the true one, and on $\pi^*_{\textrm{offline}}$ instead of $\pi^*$:
\begin{equation}\label{eq:subopt_decomp2}
    V^{\pi^*} -V^{\hat{\pi}^*} \leq \underbrace{(V^{{\pi}^*}_{T,R} - V^{\pi^*}_{\hat{T}_{\textrm{inf}},R})}_{\text{transition term }\epsilon_T} + \underbrace{(V^{\pi^*_{\textrm{offline}}}_{\hat{T}_{\textrm{inf}},R} - V^{\pi^*_{\textrm{offline}}}_{\hat{T}_{\textrm{inf}},\hat{R}_{\textrm{inf}}})}_{\text{reward term}}.
\end{equation}
Analysis of the suboptimality due to transition error is identical to above, but the reward term is thus significantly different. By design, our sampling strategy ensures good coverage of preferences over $\pi^*_{\textrm{offline}}$ within the learned environment model, which \textbf{eliminates the concentrability term for the reward} $C_R$. We refer the reader to \Cref{appendix:proof_thm_complexity} for the proof of \Cref{thm:sample_complex}.

\begin{theorem}\label{thm:sample_complex} For any $\delta \in (0,1]$, let \textup{$\beta_T= \cprimeT \log(H \mathcal{N}_{\mathcal{F}_T}(1/N_o)/\delta)/N_o$} and \textup{$\beta_R ={\cprimeR\log(\mathcal{N}_{\mathcal{F}_R}(1/N_p)/\delta)}/{N_p}$}, where \textup{$ N_o=  H|\mathcal{D}_{\textrm{offline}}| $} is the number of observed transitions in the observational dataset and \textup{$\cprimeT,\cprimeR$} are universal constants. The policy $\hat{\pi}^*$ estimated by \Cref{alg:algo}, with a preference sampling strategy based on rollouts in the learned transition model, achieves the following suboptimality with probability $1-\delta$: 
\textup{\begin{align*}
    V^{\pi^*} -V^{\hat{\pi}^*} \leq \underbrace{H R_{\textrm{max}}  C_T(\mathcal{F}_T, \pi^* )\sqrt{\frac{c_T}{N_o} \log \left( \frac{H }{\delta} \mathcal{N}_{\mathcal{F}_T}\left(\frac{1}{N_o}\right)\right) }}_{\text{transition term } \epsilon_T}     
    + \underbrace{2 \kappa 
    \sqrt{\frac{c_R}{N_p}  \log \left(\frac{1}{\delta}\mathcal{N}_{\mathcal{F}_R}\left(\frac{1}{N_p}\right) \right)}}_{\text{reward term}}.
\end{align*}}
where \textup{$\kappa = \sup_{r \in [-R_{\textrm{max}}, R_{\textrm{max}}]}\frac{1}{\sigma'(r)}$} measures the degree of non-linearity of the link function, and $c_T,c_R$ are universal constants. 
\end{theorem}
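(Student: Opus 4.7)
The plan is to leverage the suboptimality decomposition stated in \Cref{eq:subopt_decomp2} and bound the two terms separately. The decomposition itself follows by inserting $\pi^*_{\textrm{offline}}$ between $\pi^*$ and $\hat{\pi}^*$ and exploiting the saddle-point structure of $(\hat{\pi}^*, \hat{T}_{\textrm{inf}}, \hat{R}_{\textrm{inf}})$: under the high-probability realizability event $(T, R) \in \mathcal{T} \times \mathcal{R}$, we have $V^{\hat{\pi}^*}_{T,R} \geq V^{\hat{\pi}^*}_{\hat{T}_{\textrm{inf}}, \hat{R}_{\textrm{inf}}} \geq V^{\pi^*_{\textrm{offline}}}_{\hat{T}_{\textrm{inf}}, \hat{R}_{\textrm{inf}}}$ (the second step using $\hat{\pi}^*$'s optimality under $\hat{R}_{\textrm{inf}}$), together with $V^{\pi^*}_{\hat{T}_{\textrm{inf}}, R} \leq V^{\pi^*_{\textrm{offline}}}_{\hat{T}_{\textrm{inf}}, R}$ by optimality of $\pi^*_{\textrm{offline}}$ under the true reward.

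The transition term is bounded identically to the proof of \Cref{thm:sample_complex_offline}: MLE concentration for $\mathcal{F}_T$ on $\mathcal{D}_{\textrm{offline}}$ gives $\mathbb{E}_{\mathcal{D}_{\textrm{offline}}}[\vert T - \tilde{T}\vert_1^2] \lesssim \beta_T$ uniformly on $\mathcal{T}$, and a simulation-lemma telescoping combined with change of measure through $C_T(\mathcal{F}_T, \pi^*)$ recovers the stated $H R_{\max} C_T \sqrt{c_T \log(H\mathcal{N}_{\mathcal{F}_T}(1/N_o)/\delta)/N_o}$.

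The novel analysis lies in the reward term. I proceed in four steps: (i) invoke preference MLE concentration (the Bradley--Terry analogue of Lemma~B.1 of \citet{zhan2023provable}) to obtain, uniformly over $\tilde{R} \in \mathcal{R}$,
\[
\mathbb{E}_{(\tau_1, \tau_2) \sim d^{\pi_1}_{\hat{T}_{\textrm{inf}}} \otimes d^{\pi_2}_{\hat{T}_{\textrm{inf}}}}\!\bigl[\vert P_R(\tau_1 \succ \tau_2) - P_{\tilde{R}}(\tau_1 \succ \tau_2) \vert^2\bigr] \lesssim \beta_R,
\]
since Line~3 of \Cref{alg:model_sampling} samples preferences from exactly this distribution; a triangle inequality over the two worst elements of $\mathcal{R}$ upgrades this to $\mathbb{E}[u_{P_R}^2] \lesssim \beta_R$. (ii) The $\argmax$ in Line~2 of \Cref{alg:model_sampling} transfers this bound to any pair of policies in $\Pi_{\textrm{offline}}$, and both $\pi^*_{\textrm{offline}}$ and $\hat{\pi}^*$ lie in $\Pi_{\textrm{offline}}$ with high probability (taking $\tilde{R} = R$ and $\tilde{R} = \hat{R}_{\textrm{inf}}$ respectively in the defining condition). (iii) The Bradley--Terry identity $\tilde{R}(\tau_1) - \tilde{R}(\tau_2) = \sigma^{-1}(P_{\tilde{R}}(\tau_1 \succ \tau_2))$ combined with $\kappa$-Lipschitzness of $\sigma^{-1}$ on $[-R_{\max}, R_{\max}]$ then produces the contrastive value identity
\[
\bigl\vert (V^{\pi_A}_{\hat{T}_{\textrm{inf}}, R} - V^{\pi_A}_{\hat{T}_{\textrm{inf}}, \hat{R}_{\textrm{inf}}}) - (V^{\pi_B}_{\hat{T}_{\textrm{inf}}, R} - V^{\pi_B}_{\hat{T}_{\textrm{inf}}, \hat{R}_{\textrm{inf}}}) \bigr\vert \leq \kappa \sqrt{\mathbb{E}[u_{P_R}^2]} \lesssim \kappa\sqrt{\beta_R},
\]
for any $\pi_A, \pi_B \in \Pi_{\textrm{offline}}$. (iv) Instantiating this with $\pi_A = \pi^*_{\textrm{offline}}, \pi_B = \hat{\pi}^*$ and combining with the dual optimality inequalities $V^{\pi^*_{\textrm{offline}}}_{\hat{T}_{\textrm{inf}}, R} \geq V^{\hat{\pi}^*}_{\hat{T}_{\textrm{inf}}, R}$ and $V^{\hat{\pi}^*}_{\hat{T}_{\textrm{inf}}, \hat{R}_{\textrm{inf}}} \geq V^{\pi^*_{\textrm{offline}}}_{\hat{T}_{\textrm{inf}}, \hat{R}_{\textrm{inf}}}$ absorbs the reference term and, after substituting $\beta_R = \cprimeR \log(\mathcal{N}_{\mathcal{F}_R}(1/N_p)/\delta)/N_p$, yields the stated $2\kappa\sqrt{c_R \log(\mathcal{N}_{\mathcal{F}_R}(1/N_p)/\delta)/N_p}$ bound on the reward term, with no reward concentrability coefficient.

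The main obstacle will be step (iv): the contrastive identity only controls a difference-of-differences of values, so closing it into a single-quantity bound requires a delicate combination of the two dual optimality inequalities together with the saddle-point pessimism of $\hat{R}_{\textrm{inf}}$ at $\hat{\pi}^*$. This coupling, enabled by restricting preference sampling to policies in $\Pi_{\textrm{offline}}$ (via Line~2 of \Cref{alg:model_sampling}), is the technical heart of the improvement of Sim-OPRL over OPRL: without this restriction one would have to reintroduce a density-ratio coverage term $C_R$ to transfer preference bounds from the sampling distribution back to $d^{\pi^*_{\textrm{offline}}}_{\hat{T}_{\textrm{inf}}}$.
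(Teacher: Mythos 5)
Your transition-term analysis and steps (i)--(iii) essentially reproduce the paper's argument: MLE concentration (\Cref{lemma:MLE_TV}), the mean-value-theorem conversion with constant $\kappa$, the transfer of the preference-uncertainty bound through the $\argmax$ in Line~2 of \Cref{alg:model_sampling}, and $\pi^*_{\textrm{offline}} \in \Pi_{\textrm{offline}}$ (\Cref{lemma:optimal_in_S}). The genuine gap is exactly where you flag it, step (iv), and it cannot be closed in the way you claim. Write $X = V^{\pi^*_{\textrm{offline}}}_{\hat{T}_{\textrm{inf}},R}$, $Y = V^{\pi^*_{\textrm{offline}}}_{\hat{T}_{\textrm{inf}},\hat{R}_{\textrm{inf}}}$, $Z = V^{\hat{\pi}^*}_{\hat{T}_{\textrm{inf}},R}$, $W = V^{\hat{\pi}^*}_{\hat{T}_{\textrm{inf}},\hat{R}_{\textrm{inf}}}$, and let $\epsilon_{\textrm{pref}}$ denote your contrastive bound. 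Your ingredients are (a) $(X-Y)-(Z-W) \leq \epsilon_{\textrm{pref}}$, (b) $X \geq Z$, (c) $W \geq Y$. Substituting (b) and (c) into (a) only gives $Z - W \leq X - Y$, hence $X - Y \leq (X - Y) + \epsilon_{\textrm{pref}}$: the derivation is circular. Moreover, no combination of these inequalities (even adding the saddle-point inequality $W \leq X$) can bound $X - Y$, because that quantity is not identifiable from preferences: Bradley--Terry probabilities are invariant to adding a constant to every trajectory return. Whenever $\mathcal{F}_R$ contains the shifted reward $R - c$ (shift each step reward by a constant, permitted by \Cref{assumption:R_bounded} when the true rewards are bounded away from zero), that shifted reward lies in $\mathcal{R}$ no matter how large $N_p$ is, and pessimism will actively select it since it lowers all values. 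In that configuration the contrastive quantity is zero and (b), (c) hold with equality (indeed $\hat{\pi}^* = \pi^*_{\textrm{offline}}$), yet $X - Y = c$ independently of $N_p$. So the reward term of \Cref{eq:subopt_decomp2}, read with ordinary value functions as you do, genuinely cannot be bounded by $2\kappa\sqrt{c_R\beta_R}$-type terms.

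The paper's proof avoids this with a device your proposal is missing: it redefines the value function with a reference-trajectory baseline, $V^{\pi}_{T,R} \vcentcolon= \mathbb{E}_{\tau \sim d^{\pi}_T}[R(\tau)] - \mathbb{E}_{\tau \sim d_{\textrm{pref}}}[R(\tau)]$, where $d_{\textrm{pref}} = d^{\pi_1}_{\hat{T}_{\textrm{inf}}}/2 + d^{\pi_2}_{\hat{T}_{\textrm{inf}}}/2$ is precisely the sampling distribution of \Cref{alg:model_sampling}, and carries this baseline through the whole suboptimality decomposition (it cancels in every policy-versus-policy comparison and in the transition term, so those steps are unaffected). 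With this definition the reward term is itself the shift-invariant quantity $\mathbb{E}_{\tau \sim d^{\pi^*_{\textrm{offline}}}_{\hat{T}_{\textrm{inf}}}}[R(\tau) - \hat{R}_{\textrm{inf}}(\tau)] - \mathbb{E}_{\tau \sim d_{\textrm{pref}}}[R(\tau) - \hat{R}_{\textrm{inf}}(\tau)]$, which your steps (i)--(iii) bound directly (this is \Cref{lemma:reward_term}): only $\pi^*_{\textrm{offline}} \in \Pi_{\textrm{offline}}$ is needed, $\hat{\pi}^*$ never enters the reward analysis, and there is no residual reference term to absorb. In short, keep (i)--(iii), drop (iv), and instead install the baseline-subtracted value function at the start of the decomposition.
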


\subsection{Discussion}

Our theoretical results demonstrate that the learned policy can achieve performance comparable to the optimal policy, and thus satisfy our optimality criterion in \Cref{def:optimality}, provided it is covered by the offline data ($C_T(\mathcal{F}_T, \pi^* ), C_R(\mathcal{F}_R, \pi^* ) < \infty$).  Following our analysis, a suboptimal dataset requires more preferences to achieve a certain policy performance, as the concentrability terms $C_T$ or $C_R$ are large. Empirical results in \Cref{sec:experiments} confirm that sample efficiency is worse when the behavioral policy is more suboptimal.

\textbf{Offline Trajectories vs. Simulated Rollouts.} While both OPRL and Sim-OPRL depend on the offline dataset for estimating environment dynamics, they induce different suboptimality in modeling preference feedback. Simulated rollouts are designed to achieve good coverage of the optimal offline policy ${\pi}^*_{\textrm{offline}}$, which avoids wasting preference budget on trajectories with low rewards or high transition uncertainty. In contrast, as shown in \citet{zhan2023provable}, due to the dependence of preferences on full trajectories, the reward concentrability term $C_R$ in \Cref{thm:sample_complex_offline} can be very large. While sampling from the offline buffer is not sensitive to the quality of the transition model, good coverage of the optimal policy is needed from both transition and preference data to achieve low suboptimality.

\textbf{Transition vs. Preference Model Quality.}  Our theoretical analysis also suggests an interesting trade-off in the sample efficiency of our approach, depending on the accuracy of the transition model. The width of the confidence interval reduces as significance parameter $\delta$ or dataset size increase, or as function class complexity $\mathcal{N}_{\mathcal{F}_T}$ decreases. For a target suboptimality gap $\epsilon$, provided the optimal offline policy $\pi^*_{\textrm{offline}}$ has a gap $\epsilon_T < \epsilon$, then the number of preferences required is of the order of $\mathcal{O}({\log(1/\delta)}/{(\epsilon-\epsilon_T)^2})$. A more accurate transition model should therefore require fewer preference samples to achieve a given suboptimality, which we again confirm empirically.

\subsection{Practical Implementation} \label{sec:practical_algo}

We now complete the general algorithmic framework discussed above with a possible implementation strategy, allowing for empirical validation. In fact, with minor changes to the following framework, our paper also {proposes a feasible implementation of related theoretical algorithms} \citep{chen2022human,zhan2023provable}. We refer the reader to \Cref{appendix:implementation} for further detail.

\paragraph{Model Learning and Policy Optimization.} Following prior work in offline reinforcement learning \citep{yu2020mopo}, we train ensembles of $N_T$ and $N_R$ neural network models for the transition and reward functions on different bootstraps of the data \citep{lakshminarayanan2017simple}, denoted $\{\hat{T}_1, \ldots \hat{T}_{N_T}\}$ and $\{\hat{R}_1, \ldots \hat{R}_{N_R}\}$. We estimate MLE and uncertainty functions as follows: \begin{align*}
    \hat{T}(\cdot|s,a) = \frac{1}{N_T}\sum_{i=1}^{N_T} \hat{T}_i(\cdot|s,a);& \quad u_T(s,a) = \max_{i,j \in [\![1,N_T]\!]} | \hat{T}_i(\cdot|s,a) - \hat{T}_j(\cdot|s,a)|_1\cdot R_{\textrm{max}} \\
\hat{R}(s,a) = \frac{1}{N_R}\sum_{i=1}^{N_T} \hat{R}_i(s,a);& \quad u_R(s,a) = \max_{i,j \in [\![1,N_R]\!]} | \hat{R}_i(s,a) - \hat{R}_j(s,a)|_1
\end{align*} 
Each $\hat{R}_i$ in the ensemble has an associated preference function defined by the Bradley-Terry model, with  $\sigma$ as the sigmoid function. We obtain preference uncertainty through variation over the ensemble as in \Cref{eq:u_pref}. Recall that transition and reward models are trained on $ \mathcal{D}_{\textrm{offline}}$ and $ \mathcal{D}_{\textrm{pref}}$ respectively; for computational efficiency, we sample preferences in batches of to reduce the number of reward model updates needed.

We approximate the pessimistic objective in \Cref{eq:pessimsitic_objective} by penalizing the reward function with the uncertainty, as in Lagrangian formulations of model-based offline RL \citep{yu2020mopo,rigter2022rambo}. We solve for the following objective with a traditional reinforcement learning algorithm: \begin{equation}\label{eq:uncertainty_penalty_objective}\hat{\pi}^* = \argmax_{\pi \in \Pi} \mathbb{E}_{(s,a) \sim d^{\pi}_{\hat{T}}} [\hat{R}(s,a) - \lambda_R u_R(s,a) - \lambda_T u_T(s,a)],\end{equation}
where hyperparameters $\lambda_T,\lambda_R$ control the degree of conservatism. Note that in our theoretical analysis, this was achieved through parameters $\beta_T, \beta_R$ which affect the width of the confidence intervals $u_R$ and $u_T$, but their exact value cannot be estimated. We show in \Cref{appendix:model_based_pessimism} that \Cref{eq:uncertainty_penalty_objective} indeed lower bounds the true value function, under well-calibrated uncertainty estimates. %

\paragraph{Near-Optimal Policy Set and Exploratory Policies.} Sim-OPRL requires constructing $\Pi_{\textrm{offline}}$, a set of near-optimal policies within a pessimistic model of the environment. Following \citet{lindner2021information}, we obtain a policy model for each element $\hat{R}_i$ of the reward ensemble. Policy models are optimized to maximize returns under the transition model $\hat{T}$ and the reward function $\hat{R}_i - \lambda_T u_T$, ensuring pessimism w.r.t transitions.
Next, the most exploratory policies are identified by generating rollouts of each candidate policy within the learned model $\hat{T}$. The trajectories $(\tau_1, \tau_2)$ induced by different policies and maximizing the preference uncertainty function $u_{P_R} (\tau_1, \tau_2)$ are used for preference feedback. We refer the reader to \Cref{appendix:implementation} for further detail.

\section{Experimental results}\label{sec:experiments}

In this section, we demonstrate the effectiveness of our preference elicitation strategy, Sim-OPRL, across a range of offline reinforcement learning environments and datasets. We demonstrate its \textbf{superior performance over OPRL, as expected from our theoretical analysis}. %

Since our closest related works do not propose any experimental validation \citep{chen2022human,zhan2023provable}, we propose a practical implementation of Preference-based Optimistic Planning ({PbOP}) in \Cref{appendix:implementation}; this elicitation method queries feedback over trajectory rollouts in the \textit{true environment} \citep{chen2022human}. We also compare against OPRL \citep{shin2022benchmarks} with uniform and uncertainty-sampling. Finally, we report the performance of $\pi^*_{\textrm{offline}}$ and $\pi^*$ as upper bounds for the performance of our algorithm: the former is trained in the learned transition model with access to the {true} reward, and the latter has full knowledge of both transition and reward function. %

We compare different preference elicitation strategies on a range of environments detailed in \Cref{appendix:envs}. %
Among others, we explore environments from the D4RL benchmark \citep{fu2020d4rl} identified as particularly challenging offline preference-based reinforcement learning tasks \citep{shin2022benchmarks}, as well as a medical simulation designed to model the evolution of patients with sepsis \citep{oberst2019counterfactual}. As detailed in \Cref{appendix:envs}, these environments consist of high-dimensional state spaces with continuous or discrete action spaces, follow complex transition dynamics, and have sparse or non-linear rewards and termination conditions. This makes them representative of the challenge of learning a reward function and learning offline in a real-world application. In particular, the sepsis simulation environment is commonly used in medically-motivated offline RL work \citep{tang2021model, pace2023delphic}, and highlights another advantage of Sim-OPRL over OPRL: it does not require feedback on {real} trajectories from the observational dataset. In a sensitive setting such as healthcare where data access is carefully controlled, it may be attractive to query experts about \textit{synthetic} trajectories rather than real samples.

\textbf{Performance against State-of-the-Art.} 
Performance and sample complexity results with different preference elicitation methods are given in \Cref{fig:perf_vs_Nprefs} and \Cref{tab:all_envs_Np}. Within the offline approaches, Sim-OPRL consistently achieves better environment returns than OPRL with much fewer preference queries. In line with our theoretical analysis, our empirical results therefore demonstrate that policy-based sampling in Sim-OPRL is more efficient than maximizing information gain on the reward function (uncertainty-based OPRL), which echoes similar conclusions reached in prior work on online preference elicitation \citep{lindner2021information, chen2022human}.

As an upper bound for the performance of our algorithm, we include baselines that have access to the environment: we report the performance of the optimal policy $\pi^*$, as well as that of an algorithm querying feedback over optimistic rollouts in the \textit{real} environment~\citep[PbOP]{chen2022human}. In \Cref{fig:perf_vs_Nprefs}, the PbOP method naturally reaches a superior policy with fewer samples as it allows environment interaction and can thus improve its estimate of the transition model in parallel to learning the preference function. %
As supported by our theoretical analysis, this result stresses the importance of having a high-quality transition model to make our method effective. We explore this in more detail in our following ablations.

\begin{table}[tb] 
\centering
\caption{\footnotesize\textbf{Sample complexity $N_p$ under different preference elicitation strategies}, to reach a suboptimality gap of $\epsilon=20$ over normalized returns. Mean and 95\% confidence interval over 6 experiments. The best-performing offline method is highlighted in bold, \,\xmark\, marks when the target suboptimality could not be achieved. Note that PbOP has an advantage by having access to direct interaction with the environment.
}\label{tab:all_envs_Np} %
\vspace{-0.7em}
\resizebox{\textwidth}{!}{
\begin{tabular}{lcccc}
\toprule
Environment        & OPRL Uniform    & OPRL Uncertainty & \textbf{Sim-OPRL (Ours)} & \textcolor{gray}{PbOP (Online) } \\
\midrule
Star MDP & 32 $\pm$ 4      & 30 $\pm$ 4       & \textbf{4 $\pm$ 2}       & \textcolor{gray}{4 $\pm$ 2}      \\
Gridworld          &   105 $\pm$ 11  &  {66 $\pm$ 7}   & \textbf{49 $\pm$ 7} &  \textcolor{gray}{32 $\pm$ 4}    \\
MiniGrid-FourRooms & 	92 $\pm$ 7 & 53 $\pm$ 5 & \textbf{41 $\pm$ 5} & \textcolor{gray}{25 $\pm$ 3} \\
HalfCheetah-Random  & 108 $\pm$ 9 & 71 $\pm$ 8 & \textbf{50 $\pm$ 10} & \textcolor{gray}{36 $\pm$ 3} \\
Sepsis Simulation  & \xmark & 642 $\pm$ 72 & \textbf{225 $\pm$ 46} &  \textcolor{gray}{75 $\pm$ 11} \\    
\bottomrule
\end{tabular}}
\vspace{-1em}
\end{table}

\begin{figure}[tb]
    \centering
    \begin{subfigure}[b]{0.32\textwidth}
    \includegraphics[width=\textwidth]{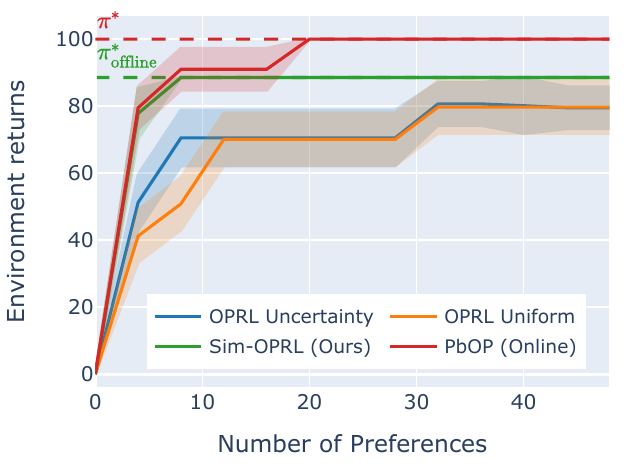}
    \caption{\footnotesize StarMDP.} \label{fig:results_simple_baselines}
    \end{subfigure}
    \hfill
    \begin{subfigure}[b]{0.32\textwidth}
    \includegraphics[width=\textwidth]{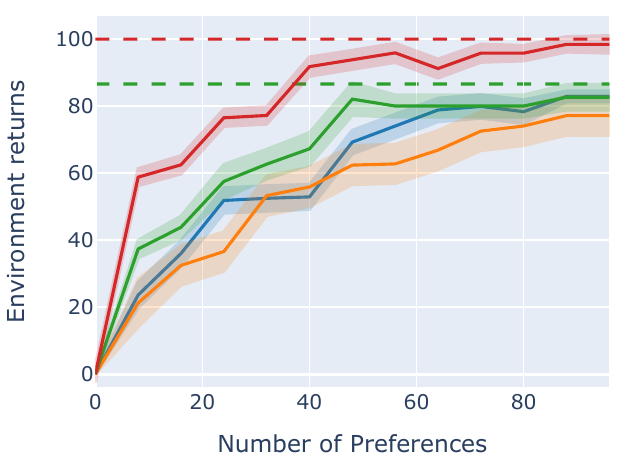}
    \caption{\footnotesize Gridworld.}    \label{fig:results_gridworld}
    \end{subfigure}
    \hfill
    \begin{subfigure}[b]{0.32\textwidth}
    \includegraphics[width=\textwidth]{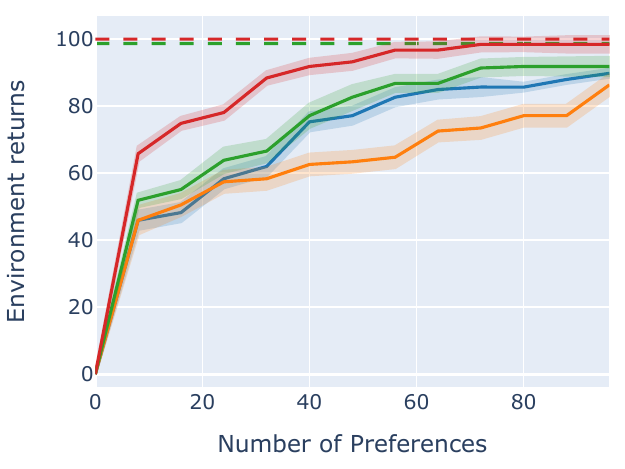}
    \caption{\footnotesize MiniGrid-FourRooms.} \label{fig:results_minigrid}
    \end{subfigure}\\
    \centering
    \begin{subfigure}[b]{0.32\textwidth}
    \includegraphics[width=\textwidth]{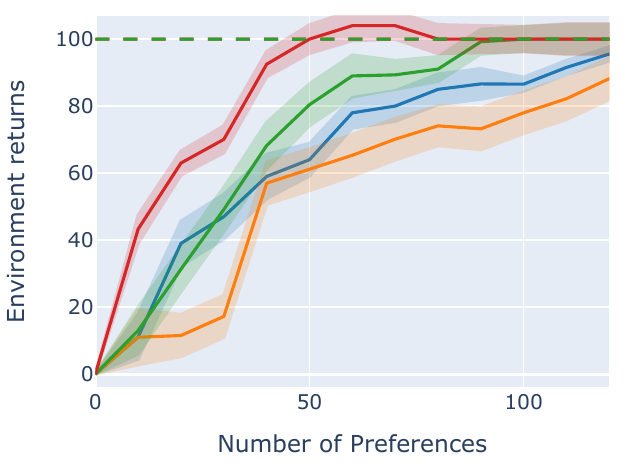}
    \caption{\footnotesize HalfCheetah-Random.} \label{fig:results_halfcheetah}
    \end{subfigure}    \hspace{2em}
    \begin{subfigure}[b]{0.32\textwidth}
    \includegraphics[width=\textwidth]{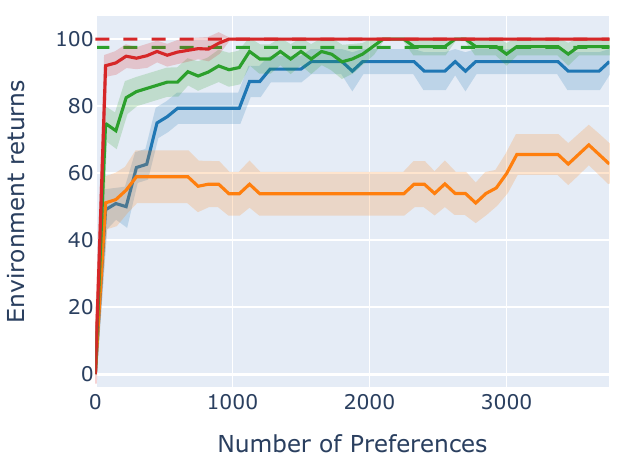}
    \caption{\footnotesize Sepsis Simulation.} \label{fig:results_sepsis}
    \end{subfigure}
    \caption{\footnotesize\textbf{Environment returns under different preference elicitation strategies.} Mean and 95\% confidence interval over 6 experiments. Environment returns are normalized between 0 and 100. Only OPRL and Sim-OPRL are fully offline.}
    \label{fig:perf_vs_Nprefs}\vspace{-1em}
\end{figure}

\begin{wrapfigure}[13]{r}{0.4\textwidth}
    \vspace{-1.2em}
    \centering
    \includegraphics[width=\linewidth]{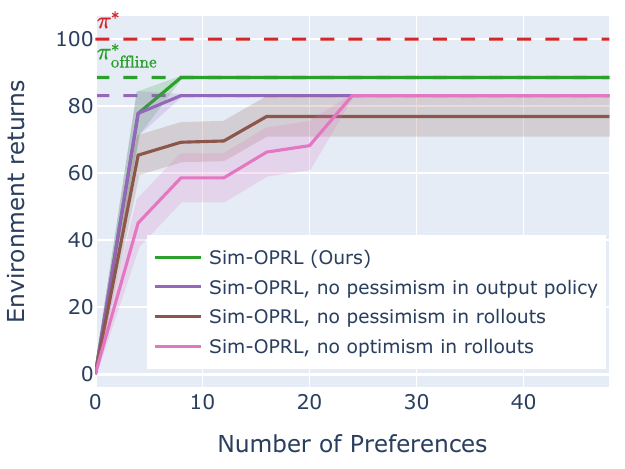} %
    \vspace{-1.5em} \caption{\textbf{Algorithm ablations} (StarMDP).}\label{fig:results_simple_ablation}
\end{wrapfigure}
\textbf{Algorithm Ablations.} We conduct ablations for our algorithm on a simple tabular MDP, with results in \Cref{fig:results_simple_ablation}. This example (transition and reward details deferred to \Cref{appendix:envs}) illustrates the {importance of pessimism} with respect to the transition model. Even with access to true rewards, $\pi^*_{\textrm{offline}}$ is pessimistic to avoid the out-of-distribution state, as it is unclear how to reach it. Thus, in \Cref{fig:results_simple_ablation}, we see a drop in performance if pessimism is not applied to the output policy (purple lines). This confirms the theoretical insights from \citet{zhu2023principled,zhan2023provable}, who demonstrate the importance of pessimism in offline preference-based RL problems. Pessimism is also crucial in simulated rollouts, to avoid wasting preference budget on regions of low confidence — as value estimates are inaccurate in any case. This is reflected in lower performance without pessimism w.r.t the transition model in \Cref{fig:results_simple_ablation} (brown line), and which \textbf{could be seen as the naive adaptation of online preference elicitation methods to our setting}~\citep{chen2022human,lindner2022active}. We also note the importance of optimism with respect to the reward uncertainty, both in OPRL in \Cref{fig:perf_vs_Nprefs} and in our own model-based rollouts in \Cref{fig:results_simple_ablation}.

\textbf{Transition vs.\ Preference Model Quality.} Next, we empirically study the trade-off between transition and preference model performance in our problem setting. Still in the Star MDP, in the low-data regime, the error $\epsilon_T$ incurred in estimating the value function due to the misspecification of the transition model is large. As dictated by our theoretical analysis and as visualized in \Cref{fig:np_vs_no}, this significantly increases the number of preference samples $N_p$ required to achieve good final performance. At the other end of the spectrum, if the offline dataset is large and allows modeling the transition model accurately, then $\epsilon_T$ is small and the number of preference samples $N_p$ needed shrinks. We observe a similar trend for both Sim-OPRL and our OPRL uncertainty-sampling baseline.

We also measure how the coverage of the optimality of the dataset affects performance in our setting. In \Cref{fig:np_vs_Ct}, we vary the behavioral policy $\pi_{\beta}$ underlying the offline data, ranging from optimal (density ratio coefficient $= 1$) to highly suboptimal (large density ratio coefficient). The concentrability terms $C_T$ and $C_R$ are challenging to measure as they require considering entire function classes, but we report the accuracy of the maximum likelihood estimate for both models in \Cref{appendix:extra_experiments}. %
We observe that preference elicitation methods perform best when the data is close to optimal (with the exception of a fully optimal, non-diverse dataset making reward learning from preferences challenging). More preference samples are required if the observational dataset has poor coverage of the optimal policy (large $C_T(\mathcal{F}_T, \pi^*)$), as the transition and reward models become less accurate for the trajectory distribution of interest. We also validate this conclusion on HalfCheetah datasets of varying optimality in \Cref{appendix:extra_experiments}.

\begin{figure}[t]\vspace{-1em}
    \centering
    \hspace{1.5em}
    \begin{subfigure}[b]{0.39\textwidth}
    \includegraphics[width=\textwidth]{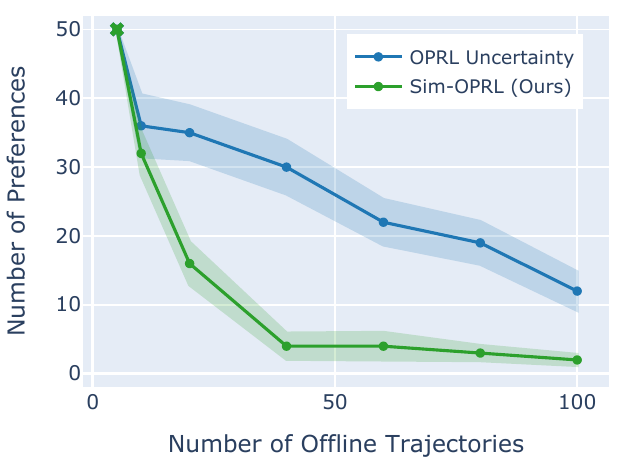}
   \caption{\footnotesize As a function of offline dataset size.}\label{fig:np_vs_no}
    \end{subfigure}
    \hfill
    \begin{subfigure}[b]{0.39\textwidth}
     \includegraphics[width=\textwidth]{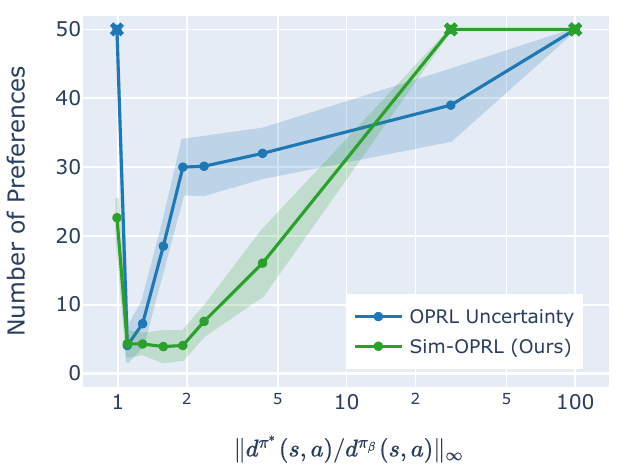}
    \caption{\footnotesize As a function of dataset optimality.}\label{fig:np_vs_Ct}
    \end{subfigure}
    \hspace{1.5em}
    \caption{\footnotesize\textbf{Preference sample complexity $N_p$ as function of the properties of the observational data}, to reach a suboptimality gap of $\epsilon=20$ over normalized environment returns (Star MDP). Mean and 95\% confidence intervals over 6 experiments.  $\times$ marks when the target suboptimality could not be achieved. }%
    \label{fig:complexity_vs_data}\vspace{-1.5em}
\end{figure}

\vspace{-0.5em}
\section{Conclusion}
\vspace{-0.5em}

Our work shows the potential of integrating human feedback within the framework of offline RL. We address the challenges of preference elicitation in a fully offline setup by exploring two key methods: sampling from the offline dataset \citep[OPRL]{shin2022benchmarks} and generating model rollouts (Sim-OPRL). By employing a pessimistic approach to handle out-of-distribution data and an optimistic strategy to acquire informative preferences, Sim-OPRL balances the need for robustness and informativeness in learning an optimal policy.

We provide theoretical guarantees on the sample complexity of both approaches, demonstrating that performance depends on how well the offline data covers the optimal policy. Empirical evaluations on various environments confirm the practical effectiveness of our algorithm, as Sim-OPRL consistently outperforms OPRL baselines in all settings.

Overall, our approach not only advances the state-of-the-art in offline preference-based RL but also takes a significant step toward improving the practical utility of offline RL. This opens up new avenues for real-world applications of RL in healthcare, robotics, and manufacturing, where interaction with the environment is challenging but domain experts can be queried for feedback. Looking forward, a natural extension will be to explore alternative sources of information from experts, still without direct environment interaction.

\bibliography{references}
\newpage

\appendix

\section{Theoretical Details} 
This appendix provides proofs for the presented theorems and lemmas. In subsection \ref{appendix:confidence}, we provide details on how we define the maximum likelihood estimators and confidence intervals of the preference and transition models. In subsection \ref{appendix:model_based_pessimism} we provide the proof that our uncertainty-penalized objective in \Cref{eq:uncertainty_penalty_objective} lower bounds the true value function and thus forms a valid pessimistic framework. In \Cref{appendix:counterex_epsilonT}, we show that the suboptimality of our offline preference elicitation framework is not lower-bounded by the performance of the optimal offline policy. In \Cref{appendix:proof_thm_complexity_offline}, we provide our proof of \cref{thm:sample_complex_offline},  analyzing the suboptimality of preferences sampled from an offline dataset. Finally, in \Cref{appendix:proof_thm_complexity}, we prove \Cref{thm:sample_complex}, which analyzes the suboptimality of preference sampling over simulated rollouts.

\subsection{Maximum Likelihood and Confidence Intervals}
\label{appendix:confidence}
Let $\mathcal{F}_g$ denote a function class over $\mathcal{X} \rightarrow \Delta_{\mathcal{Y}}$, where $\mathcal{X}, \mathcal{Y}$ are measurable sets, and $g \in \mathcal{F}_g$ denotes a function to be estimated.

Let $\hat{g}$ denote the maximum likelihood estimator (MLE) of $g$ based on a dataset $\mathcal{D} = \{(x^n, y^n)\}_{n=1}^N$: $\hat{g} = \argmax_{\tilde{g} \in \mathcal{F}_g} \mathbb{E}_{(x,y) \sim \mathcal{D}} \log(\tilde{g}(y|x))$. We construct the confidence set around the MLE as follows:
\begin{equation*}
    \mathcal{G} = \{ \tilde{g} \in \mathcal{F}_g \: | \: \mathbb{E}_{ (x,y) \sim \mathcal{D}} \left[ \log \frac{\hat{g}(y|x)}{\tilde{g}(y|x)} \right] \leq \beta\}
\end{equation*}

\begin{lemma}[MLE Guarantee, Lemma 1 in \citet{zhan2023provable}] \label{lemma:MLE_in_CI} %
    Let $\delta \in (0, 1]$ and define the event $\mathcal{E}$ that $g \in \mathcal{G}$. If $$\beta=\frac{c_{\textrm{MLE}}}{N} \log \left(\frac{1}{\delta} \mathcal{N}_{\mathcal{F}_g}\left(\frac{1}{N}\right) \right),$$ 
    where $c_{MLE}>0$ is a universal constant, then $P(\mathcal{E}) \geq 1-\delta/2$. %
\end{lemma}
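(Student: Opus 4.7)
The plan is to follow the standard bracketing-based concentration argument for maximum likelihood estimation, essentially adapting the proof of Lemma 1 from \citet{zhan2023provable} (which in turn builds on classical MLE guarantees, e.g.\ \citet{geer2000empirical}). Observe first that the event $g \in \mathcal{G}$ is equivalent to $\frac{1}{N}\sum_{n=1}^N \log(\hat{g}(y^n|x^n)/g(y^n|x^n)) \leq \beta$. Since $\hat{g}$ is the empirical maximizer, this quantity is non-negative, so the task is genuinely to upper-bound it. The obstruction is that $\hat{g}$ depends on the sample, so one cannot concentrate directly; the proof must obtain a uniform bound over $\mathcal{F}_g$.

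Next, I would pass to a $\frac{1}{N}$-bracketing cover $(l^i, u^i)_{i=1}^{\mathcal{N}}$ of $\mathcal{F}_g$, with $\mathcal{N} = \mathcal{N}_{\mathcal{F}_g}(1/N)$. For each bracket, consider the upper envelope $u^i$ and the exponential moment
\[
\mathbb{E}\!\left[\prod_{n=1}^N \sqrt{u^i(y^n|x^n)/g(y^n|x^n)}\right] \;=\; \prod_{n=1}^N \mathbb{E}_{y^n \sim g(\cdot|x^n)}\!\left[\sqrt{u^i(y^n|x^n)/g(y^n|x^n)}\right].
\]
The key identity $\mathbb{E}_{y \sim g(\cdot|x)}[\sqrt{f(y|x)/g(y|x)}] \leq 1 + \frac{1}{N}$ (which, for the density case $f=\tilde g$, is exactly $1 - \tfrac12 H^2(\tilde g(\cdot|x), g(\cdot|x))$; the $1/N$ slack absorbs the gap $|u^i - l^i|_1 \leq 1/N$ since $u^i$ need not integrate to one) turns the product above into a supermartingale with controlled growth. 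A Chernoff/Markov argument then yields, for each fixed $i$,
\[
\mathbb{P}\!\left(\tfrac{1}{N}\sum_{n=1}^N \log\frac{u^i(y^n|x^n)}{g(y^n|x^n)} \;\geq\; \tfrac{c}{N}\log(1/\delta_i)\right) \;\leq\; \delta_i,
\]
for some universal constant $c$.

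Third, I would take a union bound over the $\mathcal{N}$ brackets with $\delta_i = \delta/(2\mathcal{N})$, then translate the bound from $u^i$ back to $\hat g$: on the union-bound event, for the bracket $i^\star$ containing $\hat g$, we have $\hat g(y|x) \leq u^{i^\star}(y|x)$ pointwise, so $\log(\hat g/g) \leq \log(u^{i^\star}/g)$ pointwise; taking empirical averages gives $\frac{1}{N}\sum_n \log(\hat g(y^n|x^n)/g(y^n|x^n)) \leq \frac{c_{\textrm{MLE}}}{N}\log(\mathcal{N}_{\mathcal{F}_g}(1/N)/\delta)$ with probability at least $1-\delta/2$, which is exactly $g \in \mathcal{G}$ for the stated choice of $\beta$.

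The main obstacle is the handling of the data-dependent MLE: the bracketing device is the standard workaround, but the delicate part is verifying that the exponential moment inequality can be applied to the (possibly non-normalized) upper bracket $u^i$ with only a $\mathcal{O}(1/N)$ correction, and that this correction can be absorbed into the universal constant $c_{\textrm{MLE}}$. All other steps are routine once this Hellinger-based supermartingale bound is in place.
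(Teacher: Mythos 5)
Your proposal is correct and follows essentially the same route as the paper's proof: a $1/N$-bracketing cover, an exponential-moment (Cram\'er--Chernoff) bound exploiting that upper brackets integrate to at most $1+1/N$, a union bound over the brackets, and pointwise domination of the data-dependent MLE $\hat{g}$ by its containing upper bracket to transfer the uniform bound to $\hat{g}$. The only cosmetic difference is that you exponentiate the half log-ratio (the Hellinger-style moment $\mathbb{E}_{y\sim g}[\sqrt{u^i/g}]$, handled via Cauchy--Schwarz for the non-normalized bracket), whereas the paper bounds the full ratio directly via $\mathbb{E}_{y\sim g}\left[{u^i(y|x)}/{g(y|x)}\right]=\Vert u^i(\cdot|x)\Vert_1\leq 1+1/N$; both give the stated guarantee up to the universal constant $c_{\textrm{MLE}}$.
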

\begin{proof} The proof follows that of Lemma 1 in \citet{zhan2023provable} and uses Cramér-Chernoff's method. %

    Let $\bar{\mathcal{B}}$ be a $1/N$-bracket of $\mathcal{F}_g$ with $\vert \bar{\mathcal{B}} \vert_1 = \mathcal{N}_{\mathcal{F}_g} (1/N)$. Denote the set of all right brackets in $\bar{\mathcal{B}}$ by $\tilde{\mathcal{B}} = \{ b: \exists b' s.t. [b', b] \in \bar{\mathcal{B}} \}$. For $b \in \tilde{\mathcal{B}}$, we have:
    \begin{align*}
        \mathbb{E} \left[ \exp \left( \sum_{n=1}^N \log \frac{b(y^n|x^n)}{g(y^n|x^n)} \right) \right] &= \prod_{n=1}^N \mathbb{E} \left[ \exp \left( \log \frac{b(y^n|x^n)}{g(y^n|x^n)} \right) \right] \\
        &= \prod_{n=1}^N \mathbb{E} \left[ \frac{b(y^n|x^n)}{g(y^n|x^n)} \right] \\
        &= \prod_{n=1}^N \mathbb{E} \left[ \sum_{y} b(y|x^n) \right] \\
        & \leq  \left( 1 + 1/N\right)^N \leq e.
    \end{align*}
    as samples in $\mathcal{D}$ as i.i.d. We use the Tower property in the third step and the fact that $b$ is a $1/N$-bracket for $\mathcal{F}_g$ in the fourth: there exists $g' \in \mathcal{F}_g$ such that $\Vert g(\cdot|x) - b(\cdot|x) \Vert_1 \leq 1/N$ and thus $\Vert b(\cdot|x) \Vert_1 \leq 1 + 1/N$,  for all $x\in\mathcal{X}$.

    Then by Markov's inequality, for any $\delta \in (0,1]$, we have:
    \begin{align*}
        P \left(\sum_{n=1}^N \log \frac{b(y^n|x^n)}{g(y^n|x^n)}  > \log(\delta) \right) &\leq \mathbb{E} \left[ \exp \left( \sum_{n=1}^N \log \frac{b(y^n|x^n)}{g(y^n|x^n)} \right) \right] \cdot \exp(-\log(1/\delta)) \\
        &\leq e\delta.
    \end{align*}

    By union bound, we have for all $b \in \tilde{\mathcal{B}}$, %
    \begin{align*}
        P \left(\sum_{n=1}^N \log \frac{b(y^n|x^n)}{g(y^n|x^n)}  > c_{MLE}\log \left(\frac{1}{\delta} \mathcal{N}_{\mathcal{F}_g}\left(\frac{1}{N}\right) \right)  \right)\leq \delta/2,
    \end{align*}
    where $c_{MLE}>0$ is a universal constant.

    Finally, for all $\tilde{g} \in \mathcal{F}_g$, there exists $b \in \tilde{\mathcal{B}}$ such that $g(\cdot|x) \leq \tilde{g}(\cdot|x)$ for all $x\in\mathcal{X}$. As a result, for all $\tilde{g} \in \mathcal{F}_g$, we have:
    \begin{align*}
        P \left(\sum_{n=1}^N \log \frac{\tilde{g}(y^n|x^n)}{g(y^n|x^n)}  > c_{MLE}\log \left(\frac{1}{\delta} \mathcal{N}_{\mathcal{F}_g}\left(\frac{1}{N}\right) \right) \right) \leq \delta/2.
    \end{align*}
\end{proof}

Under this event $\mathcal{E}$, we have $g \in \mathcal{G}$ with probability $1-\delta/2$. A confidence interval constructed via loglikelihood also incurs a bound on the total variation (TV) distance between $g$ and $\tilde{g} \in \mathcal{G}$:
\begin{lemma}[TV-distance to MLE]\label{lemma:MLE_TV}
    Under the event $\mathcal{E}$, we have, with probability $1-\delta$, for all $\tilde{g}\in \mathcal{G}$:
    \begin{equation}
        \mathbb{E}_{x \sim \mathcal{D}} \left[ \Vert g(\cdot|x) - \tilde{g}(\cdot|x) \Vert_1^2 \right] \leq \frac{c}{N}\log \left(\frac{1}{\delta} \mathcal{N}_{\mathcal{F}_g}\left(\frac{1}{N}\right) \right),
    \end{equation} 
    where $c>0$ is a universal constant.    
\end{lemma}
\begin{proof} 
The proof follows that of \citet{liu2022partially}, Proposition 14. %
\end{proof} 

This guarantees that the true function is within an interval around the MLE estimate with high probability. %

We apply these lemmas to our MLE estimates of transition and reward functions in \Cref{alg:algo} to obtain the following guarantees.

Let $\mathcal{E}_R$ denote the event $R \in \mathcal{R}$ and $\mathcal{E}_T$ denote the event $T \in \mathcal{T}$, $\mathcal{R}$ and $\mathcal{T}$ denote the respective confidence sets around the MLE. By \Cref{lemma:MLE_in_CI}, events $\mathcal{E}_R$ and $\mathcal{E}_T$ have probability $1-\delta/2$ if we choose $\beta_R={\cprimeR\log(\mathcal{N}_{\mathcal{F}_R}(1/N_p)/\delta)}/{N_p}$ and $\beta_T= \cprimeT \log(H \mathcal{N}_{\mathcal{F}_T}(1/N_o)/\delta)/N_o$, where $\cprimeR, \cprimeT$ are universal constants.

\subsection{Model-based Pessimism and Uncertainty Penalties} \label{appendix:model_based_pessimism}

\begin{lemma}[Telescoping Lemma] \label{lemma:telescoping}
For any reward model ${R} \in \mathcal{F}_R$, and any two transition models $T, \hat{T} \in \mathcal{F}_T$:
    \begin{align*}
        V_{T, {R}}^{\pi} -  V_{\hat{T}, {R}}^{\pi} %
        & \leq \mathbb{E}_{\tau \sim d^{\pi}_{\hat{T}}} \left[ \sum_{s_j, a_j \in \tau} \Vert T(\cdot| s_j, a_j) - \hat{T}(\cdot| s_j, a_j) \Vert_1\right] \cdot R_{\textrm{max}}
    \end{align*}
\end{lemma}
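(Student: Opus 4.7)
The plan is to apply the standard simulation / performance-difference lemma, telescoping over the $H$ timesteps. First I would introduce the value-to-go $V^{\pi}_{T,R,h}(s) := \mathbb{E}\bigl[\sum_{t \geq h} R(s_t, a_t) \,\big|\, s_h = s, \pi, T\bigr]$, and observe that since the trajectory return $R(\tau) = \sum_t R(s_t, a_t)$ lies in $[0, R_{\textrm{max}}]$ by Assumption \ref{assumption:R_bounded} (and the per-step rewards are non-negative), we have $\Vert V^{\pi}_{T,R,h}\Vert_\infty \leq R_{\textrm{max}}$ for every $h$.

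Next I would build the telescoping identity via hybrid trajectory distributions $P_j$ that roll out $\pi$ using $\hat{T}$ for the first $j$ transitions and $T$ thereafter. By construction, $P_0$ is the $T$-rollout distribution and $P_H$ is the $\hat{T}$-rollout distribution, so
\[
V^{\pi}_{T,R} - V^{\pi}_{\hat T, R} \;=\; \sum_{j=0}^{H-1} \bigl( \mathbb{E}_{\tau \sim P_j}[R(\tau)] - \mathbb{E}_{\tau \sim P_{j+1}}[R(\tau)] \bigr).
\]
Under both $P_j$ and $P_{j+1}$ the state-action pair $(s_j, a_j)$ has the same marginal, namely $d^{\pi, j}_{\hat T}$, since both processes use $\hat T$ up to step $j-1$ and both use $\pi$ to sample $a_j$. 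Moreover, $P_j$ and $P_{j+1}$ differ \emph{only} in the one-step transition $s_j \to s_{j+1}$. Conditioning on $(s_j, a_j)$ therefore cancels all rewards outside the continuation and reduces each summand to
\[
\mathbb{E}_{(s_j, a_j) \sim d^{\pi, j}_{\hat T}} \Bigl[ \bigl\langle T(\cdot|s_j,a_j) - \hat T(\cdot|s_j,a_j),\, V^{\pi}_{T,R,j+1} \bigr\rangle \Bigr].
\]

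Finally, I would bound each inner product by H\"older's inequality, obtaining $|\langle T - \hat T,\, V^{\pi}_{T,R,j+1}\rangle| \leq R_{\textrm{max}} \cdot \Vert T(\cdot|s_j,a_j) - \hat T(\cdot|s_j,a_j)\Vert_1$ using the value-to-go bound above. Summing over $j$ and repackaging $\sum_j \mathbb{E}_{(s_j,a_j) \sim d^{\pi,j}_{\hat T}}[\cdot]$ as a single expectation over a trajectory $\tau \sim d^{\pi}_{\hat T}$ with an inner sum over its state-action pairs yields exactly the claimed inequality.

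The main subtlety is the telescoping step: I need to verify that the hybrid distributions $P_j$ can indeed be coupled so that $(s_j, a_j)$ carries the same marginal under $P_j$ and $P_{j+1}$, and that the reward contributions at times $t < j$ truly cancel and those at times $t \geq j+1$ reduce to an expectation of $V^{\pi}_{T,R,j+1}$ under the differing next-state distributions. Once that bookkeeping is correctly set up, the H\"older bound and the trajectory-reward bound $R_{\textrm{max}}$ close the argument without further work; in particular, no factor of $H$ appears in front because the bound $R_{\textrm{max}}$ is on the total (not per-step) reward.
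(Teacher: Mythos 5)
Your proposal is correct and takes essentially the same route as the paper: your hybrid distributions $P_j$ are exactly the paper's hybrid returns $W_j$ (rolling out $\hat{T}$ for the first $j$ steps and the other model thereafter), and both arguments conclude with the same one-step conditioning on $(s_j, a_j)$ followed by the H\"older bound $\vert\langle T - \hat{T}, V^{\pi}_{T,R,j+1}\rangle\vert \leq R_{\textrm{max}} \Vert T(\cdot|s_j,a_j) - \hat{T}(\cdot|s_j,a_j)\Vert_1$, using boundedness of the value-to-go. The bookkeeping you flag as the main subtlety (equal marginals of $(s_j,a_j)$, cancellation of rewards before step $j$, reduction of the tail to $V^{\pi}_{T,R,j+1}$) is handled in the paper in precisely the way you describe.
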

\begin{proof}
    The proof follows that of Lemma 4.1 in \citet{yu2020mopo} or Lemma 4 in \citet{zhan2023provable}.

    Let $W_j$ be the expected return under policy $\pi$, with transition model $\hat{T}$ for the first $j$ steps, then transition model $T$ for the rest of the episode. We have:
        \begin{align*}
            V_{T, {R}}^{\pi} - V_{\hat{T}, {R}}^{\pi} = \sum_{j=0}^{H-1} W_{j} - W_{j+1}.
        \end{align*}
    Now,
        \begin{align*}
            W_j &= R_j + \mathbb{E}_{s_j, a_j \sim \pi, \hat{T}} \left[ \mathbb{E}_{s_{j+1} \sim T(\cdot| s_j, a_j)} [V^{\pi}_{T,R}(s_{j+1})] \right] \\
            W_{j+1} &= R_j + \mathbb{E}_{s_j, a_j \sim \pi, \hat{T}} \left[ \mathbb{E}_{s_{j+1} \sim \hat{T}(s_j, a_j)} [V^{\pi}_{T,R}(s_{j+1})] \right]
        \end{align*}
    where $R_j$ is the expected return of the first $j$ steps taken in $\hat{T}$. Therefore,
    \begin{align*}
            W_j - W_{j+1} &= \mathbb{E}_{s_j, a_j \sim \pi, \hat{T}} \left[ \mathbb{E}_{s_{j+1} \sim T(\cdot| s_j, a_j)} [V^{\pi}_{T,R}(s_{j+1})] - \mathbb{E}_{s_{j+1} \sim \hat{T}(s_j, a_j)} [V^{\pi}_{T,R}(s_{j+1})] \right] \\
            &\leq \mathbb{E}_{s_j, a_j \sim \pi, \hat{T}} \left[ \Vert T(\cdot| s_j, a_j) - \hat{T}(\cdot| s_j, a_j) \Vert_1 \cdot R_{\textrm{max}} \right]
        \end{align*}
    under the boundedness assumption for $R$. Finally, we have:
    \begin{align*}
        V_{T, {R}}^{\pi} - V_{\hat{T}, {R}}^{\pi} &= \sum_{j=0}^{H-1} W_{j} - W_{j+1} \\
        &= \sum_{j=0}^{H-1} \mathbb{E}_{s_j, a_j \sim \pi, \hat{T}} \left[ \mathbb{E}_{s_{j+1} \sim T(\cdot| s_j, a_j)} [V^{\pi}_{T,R}(s_{j+1})] - \mathbb{E}_{s_{j+1} \sim \hat{T}(s_j, a_j)} [V^{\pi}_{T,R}(s_{j+1})] \right] \\
        &\leq \sum_{j=0}^{H-1}  \mathbb{E}_{s_j, a_j \sim \pi, \hat{T}} \left[ \Vert T(\cdot| s_j, a_j) - \hat{T}(\cdot| s_j, a_j) \Vert_1 \cdot R_{\textrm{max}} \right] \\
        &= \mathbb{E}_{\tau \sim d^{\pi}_{\hat{T}}} \left[ \sum_{s_j, a_j \in \tau} \Vert T(\cdot| s_j, a_j) - \hat{T}(\cdot| s_j, a_j) \cdot R_{\textrm{max}} \Vert_1\right]
    \end{align*}
    
\end{proof}

\begin{lemma}[Pessimistic Transition Model]
Under event $\mathcal{E}_T$, for all $\pi \in \Pi, \tilde{R} \in \mathcal{F}_R$: $$V_{\hat{T}, \tilde{R}-u_T}^{\pi} \leq V_{T, \tilde{R}}^{\pi}.$$    
\end{lemma}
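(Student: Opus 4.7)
The plan is to combine the telescoping lemma with linearity of the value function in the reward, using the fact that $u_T$ by construction upper bounds the $\ell_1$ error of the transition MLE under event $\mathcal{E}_T$.

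First, I would apply Lemma \ref{lemma:telescoping} in the reverse direction to control how much $V^{\pi}_{\hat{T},\tilde{R}}$ can exceed $V^{\pi}_{T,\tilde{R}}$. The telescoping proof uses $|\mathbb{E}_{p}[V] - \mathbb{E}_{q}[V]| \leq \|p-q\|_1 \cdot R_{\max}$, which is symmetric, so the same argument gives
\[
V_{\hat{T},\tilde{R}}^{\pi} - V_{T,\tilde{R}}^{\pi} \leq \mathbb{E}_{\tau \sim d^{\pi}_{\hat{T}}} \Bigl[\sum_{(s_j,a_j)\in\tau} \|T(\cdot|s_j,a_j) - \hat{T}(\cdot|s_j,a_j)\|_1 \Bigr] \cdot R_{\max}.
\]

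Next, I would invoke event $\mathcal{E}_T$. By definition $\hat{T} \in \mathcal{T}$ (it is the MLE and vacuously satisfies the log-likelihood constraint), and on $\mathcal{E}_T$ the true transition $T \in \mathcal{T}$. Hence for every $(s,a)$,
\[
\|T(\cdot|s,a) - \hat{T}(\cdot|s,a)\|_1 \cdot R_{\max} \leq \max_{\tilde{T}_1,\tilde{T}_2 \in \mathcal{T}} \|\tilde{T}_1(\cdot|s,a) - \tilde{T}_2(\cdot|s,a)\|_1 \cdot R_{\max} = u_T(s,a).
\]
Combining with the previous display yields
\[
V_{\hat{T},\tilde{R}}^{\pi} - V_{T,\tilde{R}}^{\pi} \leq \mathbb{E}_{\tau \sim d^{\pi}_{\hat{T}}} \Bigl[\sum_{(s_j,a_j)\in\tau} u_T(s_j,a_j)\Bigr].
\]

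Finally, because returns are additive and the value functional is linear in the per-step reward, the penalized value satisfies
\[
V_{\hat{T},\tilde{R}-u_T}^{\pi} = V_{\hat{T},\tilde{R}}^{\pi} - \mathbb{E}_{\tau \sim d^{\pi}_{\hat{T}}} \Bigl[\sum_{(s_j,a_j)\in\tau} u_T(s_j,a_j)\Bigr],
\]
and substituting the bound above immediately gives $V_{\hat{T},\tilde{R}-u_T}^{\pi} \leq V_{T,\tilde{R}}^{\pi}$, as desired.

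There is no real obstacle here: the only subtle point is checking that the telescoping lemma, although stated as an upper bound on $V_{T,R}^{\pi} - V_{\hat{T},R}^{\pi}$, applies verbatim to the reverse difference because the key inequality $\mathbb{E}_p[V] - \mathbb{E}_q[V] \leq \|p-q\|_1 R_{\max}$ is symmetric in $(p,q)$. Everything else is a direct consequence of the definition of $u_T$, membership $T,\hat{T} \in \mathcal{T}$ under $\mathcal{E}_T$, and linearity of expected cumulative reward in the reward function.
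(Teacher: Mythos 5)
Your proposal is correct and follows essentially the same route as the paper's proof: apply the telescoping lemma to bound $V^{\pi}_{\hat{T},\tilde{R}} - V^{\pi}_{T,\tilde{R}}$ by the expected cumulative $\ell_1$ transition error under $d^{\pi}_{\hat{T}}$, dominate that error by $u_T$ using $T,\hat{T}\in\mathcal{T}$ on the event $\mathcal{E}_T$, and conclude by linearity of the value in the reward. Your explicit checks of the two points the paper leaves implicit — that the telescoping bound applies to the reverse difference because the underlying inequality is symmetric, and that $\hat{T}\in\mathcal{T}$ holds vacuously for the MLE — are both valid and welcome.
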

\begin{proof}
    \begin{align*}
        V_{T, \tilde{R}}^{\pi}  &= V_{\hat{T}, \tilde{R}}^{\pi} - (V_{\hat{T}, \tilde{R}}^{\pi} - V_{T, \tilde{R}}^{\pi} ) \\
        &\geq  \mathbb{E}_{\tau \sim d^{\pi}_{\hat{T}}} \left[ \tilde{R}(\tau)\right] - \mathbb{E}_{\tau \sim d^{\pi}_{\hat{T}}} [ u_T (\tau) ]  \\
        &= \mathbb{E}_{\tau \sim d^{\pi}_{\hat{T}}} \left[ \tilde{R}(\tau) - u_T (\tau)\right]
    \end{align*}
    where we have used the telescoping lemma (\Cref{lemma:telescoping}), and where $u_T(\tau) = \sum_{(s,a) \in \tau} u_T (s,a) \geq \sum_{(s,a) \in \tau} \Vert \hat{T}(\cdot|s,a) - T(\cdot|s,a)\Vert_1 \cdot R_{\textrm{max}} $ under event $\mathcal{E}_T$.
\end{proof}

\begin{lemma}[Pessimistic Reward Model]
Under event $\mathcal{E}_R$, for all $\pi \in \Pi, \tilde{T} \in \mathcal{F}_T$: $$V_{\tilde{T}, \hat{R}-u_R}^{\pi} \leq V_{\tilde{T}, R}^{\pi}.$$    
\end{lemma}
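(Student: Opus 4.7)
The plan is to mirror the structure of the Pessimistic Transition Model lemma, but without needing a telescoping argument since rewards do not propagate through transition dynamics. The core fact to exploit is that, under event $\mathcal{E}_R$, the true reward $R$ lies in the confidence set $\mathcal{R}$, so the estimated MLE $\hat{R}$ and $R$ are both in $\mathcal{R}$ and thus their pointwise difference at any trajectory $\tau$ is bounded by the uncertainty function $u_R(\tau)$.

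Concretely, I would first invoke event $\mathcal{E}_R$ to conclude $R, \hat{R} \in \mathcal{R}$. Then by the definition
\[
u_R(\tau) = \max_{\tilde{R}_1, \tilde{R}_2 \in \mathcal{R}} |\tilde{R}_1(\tau) - \tilde{R}_2(\tau)|_1,
\]
we have the pointwise inequality $|\hat{R}(\tau) - R(\tau)| \leq u_R(\tau)$ for every trajectory $\tau$, which in particular gives $\hat{R}(\tau) - u_R(\tau) \leq R(\tau)$.

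Finally I would take expectations over trajectories $\tau \sim d^{\pi}_{\tilde{T}}$ induced by rolling out $\pi$ in any $\tilde{T} \in \mathcal{F}_T$, preserving the inequality since it holds pointwise:
\[
V^{\pi}_{\tilde{T}, \hat{R} - u_R} = \mathbb{E}_{\tau \sim d^{\pi}_{\tilde{T}}} [\hat{R}(\tau) - u_R(\tau)] \leq \mathbb{E}_{\tau \sim d^{\pi}_{\tilde{T}}} [R(\tau)] = V^{\pi}_{\tilde{T}, R}.
\]
This yields the claim. There is no real obstacle here: the reward model case is strictly simpler than the transition model case because rewards enter the value function linearly and do not compound through the dynamics, so no analogue of \Cref{lemma:telescoping} is required. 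The only subtlety to be careful about is that $u_R$ is defined as a trajectory-level quantity (the sum of per-step reward discrepancies implicitly aggregated), so one should verify that the inequality is applied to the trajectory return rather than to per-step rewards, which is consistent with how $\hat{R}(\tau)$ and $R(\tau)$ are written throughout the paper.
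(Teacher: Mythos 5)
Your proof is correct and takes essentially the same route as the paper: under $\mathcal{E}_R$ both $R$ and $\hat{R}$ lie in $\mathcal{R}$, so the pointwise bound $|\hat{R}(\tau) - R(\tau)| \leq u_R(\tau)$ holds, and taking expectations over $\tau \sim d^{\pi}_{\tilde{T}}$ gives the claim. The only cosmetic difference is that the paper phrases this as a value decomposition and bounds $|\hat{R}(\tau)-R(\tau)|$ via the per-step sum $\sum_{(s,a)\in\tau} u_R(s,a)$, whereas you invoke the trajectory-level definition of $u_R$ directly; the argument is otherwise identical.
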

\begin{proof}
    \begin{align*}
        V_{\tilde{T}, R}^{\pi} &= V_{\tilde{T}, \hat{R}}^{\pi} - ( V_{\tilde{T}, \hat{R}}^{\pi} - V_{\tilde{T}, R}^{\pi}) \\
        & = \mathbb{E}_{\tau \sim d^{\pi}_{\tilde{T}}} \left[ \hat{R}(\tau)\right] - \mathbb{E}_{\tau \sim d^{\pi}_{\tilde{T}}} \left[ \hat{R}(\tau) - R(\tau) \right] \\
        & \geq \mathbb{E}_{\tau \sim d^{\pi}_{\tilde{T}}} \left[ \hat{R}(\tau) - u_R(\tau) \right]
    \end{align*}
    where we have used the fact that $\vert \hat{R}(\tau) - R(\tau) \vert_1 \leq \sum_{s,a \in \tau} \vert \hat{R}(s,a) - R(s,a)\vert_1 =  \sum_{(s,a) \in \tau} u_R (s,a) = u_R(\tau)$ under event $\mathcal{E}_R$.
\end{proof}

Combining the above two lemmas gives the following result:
\begin{corollary} \label{corollary:pessimism}
Under events $\mathcal{E}_T$ and $\mathcal{E}_R$, for all $\pi \in \Pi$: $$V_{\hat{T}, \hat{R}-u_T - u_R}^{\pi} \leq V_{T, R}^{\pi}.$$
\end{corollary}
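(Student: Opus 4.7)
The plan is to obtain \Cref{corollary:pessimism} as a direct consequence of the two preceding lemmas by a two-step chaining argument, inserting an intermediate value function $V^{\pi}_{T,\hat{R}-u_R}$ between $V^{\pi}_{\hat{T},\hat{R}-u_T-u_R}$ and $V^{\pi}_{T,R}$. No new inequality needs to be proven; the entire content of the corollary is the composition of the pessimism guarantees for the transition and reward models under the joint event $\mathcal{E}_T \cap \mathcal{E}_R$.

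Concretely, I would first invoke the Pessimistic Transition Model lemma with the particular choice $\tilde{R} = \hat{R} - u_R$ to obtain, under event $\mathcal{E}_T$,
\[
V^{\pi}_{\hat{T},\hat{R}-u_R-u_T} \;\leq\; V^{\pi}_{T,\hat{R}-u_R}.
\]
Then I would apply the Pessimistic Reward Model lemma with $\tilde{T} = T$ to obtain, under event $\mathcal{E}_R$,
\[
V^{\pi}_{T,\hat{R}-u_R} \;\leq\; V^{\pi}_{T,R}.
\]
Chaining these two inequalities yields $V^{\pi}_{\hat{T},\hat{R}-u_R-u_T} \leq V^{\pi}_{T,R}$, which is exactly the claim since the order of the subtracted uncertainties in the corollary is immaterial.

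The only subtlety I expect is a minor issue of admissibility: the Pessimistic Transition Model lemma is stated for reward functions $\tilde{R} \in \mathcal{F}_R$, but the chained proof applies it with $\tilde{R} = \hat{R} - u_R$, which in general need not lie in $\mathcal{F}_R$. Inspecting the lemma's proof, however, the only property of $\tilde{R}$ actually used is boundedness to carry the telescoping argument (\Cref{lemma:telescoping}) through $R_{\textrm{max}}$-type bounds. Since $\hat{R} \in \mathcal{F}_R$ and $u_R$ is constructed from differences of elements of $\mathcal{F}_R$, both are bounded by constants depending only on $R_{\textrm{max}}$, so a rescaling of $R_{\textrm{max}}$ suffices, and the chaining goes through with no essential difficulty. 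Thus the proof is essentially a one-line composition, with the main care going to making explicit that both events $\mathcal{E}_T$ and $\mathcal{E}_R$ are required simultaneously.
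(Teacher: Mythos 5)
Your proof is correct and is exactly the paper's approach: the paper justifies this corollary with the single sentence ``Combining the above two lemmas gives the following result,'' i.e., precisely the two-step chaining of the Pessimistic Transition Model and Pessimistic Reward Model lemmas that you describe. One remark: the admissibility subtlety you flag (and resolve somewhat informally) can be avoided altogether by chaining in the opposite order --- first apply the Pessimistic Reward Model lemma with $\tilde{T} = \hat{T} \in \mathcal{F}_T$ to get $V^{\pi}_{\hat{T},\hat{R}-u_R} \leq V^{\pi}_{\hat{T},R}$, subtract $\mathbb{E}_{\tau \sim d^{\pi}_{\hat{T}}}[u_T(\tau)]$ from both sides to get $V^{\pi}_{\hat{T},\hat{R}-u_R-u_T} \leq V^{\pi}_{\hat{T},R-u_T}$, then apply the Pessimistic Transition Model lemma with $\tilde{R} = R$, which lies in $\mathcal{F}_R$ by realizability --- so neither lemma is ever invoked outside its stated hypotheses and no rescaling argument is needed.
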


This justifies the overall objective considered in our pessimistic policy optimization procedure in \Cref{sec:practical_algo}.

\subsection{Suboptimality lower bound: a counterexample} \label{appendix:counterex_epsilonT}

Let $\pi^*_{\textrm{offline}} = \argmax_{\pi \in \Pi} \min_{\tilde{T} \in \mathcal{T} } V_{\tilde{T}, R}^{\pi}$ denote the optimal offline policy, which has access to the ground-truth reward function. In this section, we ask whether its suboptimality $\epsilon_T=V ^{\pi^*}_{T,R} - V^{\pi^*_{\textrm{offline}}}_{T,R}$ is a lower bound for the suboptimality of our learned policy $\hat{\pi}^*$ after preference elicitation.

\paragraph{Counterexample.} Consider the MDP illustrated in \Cref{fig:counterexample_mdp}. The value function for the optimal policy $\pi^*$, which consists of taking action $a_0$, is: $V ^{\pi^*}_{T,R} = 0.8 \cdot 1 + 0.2 \cdot 0 = 0.8$.

Now assume the following MLE estimate and uncertainty function for both the transition and reward models:
\begin{align*}
    &\hat{T}(s_1 |s_0, a_0) = 0.5 ; & u_T(s_0, a_0) = 0.4 \\
    &\hat{T}(s_1 |s_0, a_1) = 0.5 ; & u_T(s_0, a_1) = 0.1 \\
    &\hat{r}(s_1) = \hat{r}(s_2) = 0.5 ; & u_R(s_1) = u_R(s_2) = 0.5
\end{align*}
Assuming access to the learned transition model and the \textit{true} reward function, we pessimistically estimate the value of both actions:
\begin{align*}
    V^{a_0}_{\hat{T}_{\textrm{inf}}, R} &= 0.1 \cdot 1 + 0.9 \cdot 0 = 0.1 \\
    V^{a_1}_{\hat{T}_{\textrm{inf}}, R} &= 0.6 \cdot 0 + 0.4 \cdot 1 = 0.4 \\
\end{align*}
Thus, we have: $\Pi_{\textrm{offline}}^*(s_0) = \argmax_a V^{a}_{\hat{T}_{\textrm{inf}}, R} = a_1$. The offline policy picks the suboptimal action since the worst-case returns of this action are lower than those estimated for $a_0$. Evaluating this policy in the real environment, we get $ V^{\pi^*_{\textrm{offline}}}_{T,R} = 0.6 \cdot 0 + 0.4 \cdot 1 = 0.4$.

We now estimate the optimal policy in the learned transition and reward model. 
Applying pessimism with respect to both models, we get an equal estimated value of $0$ for both actions $a_0$ and $a_1$. If policy optimization converges to $\hat{\pi}^* = a_0$, we achieve optimal performance with $V ^{\hat{\pi}^*}_{T,R} = 0.8 > V^{\pi^*_{\textrm{offline}}}_{T,R}$.

This example demonstrates that $\epsilon_T =V ^{\pi^*}_{T,R} - V^{\pi^*_{\textrm{offline}}}_{T,R} = 0.4$ is not a lower bound for the suboptimality of  $\hat{\pi}^*$, as policy $\hat{\pi}^*$ can achieve better performance (even optimal performance in this case) if errors in transition and reward model estimation compensate each other.

\begin{figure}[t]
    \centering
    \begin{tikzpicture}[auto,
    vertex/.style={minimum size=0.8cm,draw,circle,text width=0.8mm},
    ]%
        \node[vertex,label=center:$s_{0}$] (s0) {};
        \node[vertex,above right=1cm and 1cm of s0,label=center:$s_1$] (s1) {}; %
        \node[right=0.25cm of s1] (r1) {\footnotesize $R=1$};
        \node[vertex, below right= 1cm and 1cm of s0, label=center:$s_2$] (s2) {};
        \node[right=0.25cm of s2] (r2) {\footnotesize $R=0$};

         \path[-{Stealth[]}] %
          (s0) edge node[sloped, anchor=center,align=center,above] {$(a_0, 0.8)$} (s1)
          (s0) edge node[sloped, anchor=center,align=center,below] {$(a_1, 0.6)$} (s2)
        ;
        
    \end{tikzpicture}
    \caption{\footnotesize\textbf{Tabular MDP.} The environment starts in state $s_0$ and has horizon $H=1$. Transition probabilities from state $s_0$ are given for the two binary actions $a_0, a_1$ (which send the agent to the other state with complementary probability).}
    \label{fig:counterexample_mdp}
\end{figure}

\subsection{Suboptimality of OPRL: Proof of \Cref{thm:sample_complex_offline}} \label{appendix:proof_thm_complexity_offline}

\subsubsection{Suboptimality Decomposition} \label{appendix:subopt_decomp_offline}

Recall that $\hat{T}_{\textrm{inf}}, \hat{R}_{\textrm{inf}} = \argmin_{\tilde{T} \in \mathcal{T}, \tilde{R} \in \mathcal{R}} V_{\hat{T}, \hat{R}}^{\pi}$ denote the pessimistic transition and reward models, such that $\hat{\pi}^* = \argmax_{\pi \in \Pi} V_{\hat{T}_{\textrm{inf}}, \hat{R}_{\textrm{inf}}}^{\pi}$. We have: 
\begin{align*}
    V^{\pi^*} - V^{\hat{\pi}^*} &= V^{{\pi}^*}_{T,R}  - V^{\hat{\pi}^*}_{T,R} \\
    &=( V^{{\pi}^*}_{T,R} - V^{\pi^*}_{\hat{T}_{\textrm{inf}},\hat{R}_{\textrm{inf}}}  ) - ( V^{\hat{\pi}^*}_{T,R} - V^{{\pi}^*}_{\hat{T}_{\textrm{inf}},\hat{R}_{\textrm{inf}}}  ) \\
    &\leq ( V^{{\pi}^*}_{T,R} - V^{\pi^*}_{\hat{T}_{\textrm{inf}},\hat{R}_{\textrm{inf}}}  ) - ( V^{\hat{\pi}^*}_{T,R} - V^{\hat{\pi}^*}_{\hat{T}_{\textrm{inf}},\hat{R}_{\textrm{inf}}}  ) \\
    &\leq V^{{\pi}^*}_{T,R} - V^{\pi^*}_{\hat{T}_{\textrm{inf}},\hat{R}_{\textrm{inf}}} ,\numberthis \label{eq:eval_error}
\end{align*}
where we have first used the optimality of $\hat{\pi}^*$ (stating that $V^{\hat{\pi}^*}_{\hat{T}_{\textrm{inf}},\hat{R}_{\textrm{inf}}}\geq V^{\pi}_{\hat{T}_{\textrm{inf}},\hat{R}_{\textrm{inf}}}$, for all $\pi$) and then the pessimism principle (stating that $V^{\hat{\pi}^*}_{\hat{T}_{\textrm{inf}},\hat{R}_{\textrm{inf}}} \leq V^{\hat{\pi}^*}_{T,R}$).

Finally, we decompose the last term above as follows:
\begin{align}
V^{\pi^*} - V^{\hat{\pi}^*} & \leq \underbrace{(V^{{\pi}^*}_{T,\hat{R}_{\textrm{inf}}} - V^{\pi^*}_{\hat{T}_{\textrm{inf}},\hat{R}_{\textrm{inf}}})}_{\text{transition term}} + \underbrace{(V^{\pi^*}_{T,R} - V^{\pi^*}_{T,\hat{R}_{\textrm{inf}}})}_{\text{reward term}} \label{eq:subopt_terms_offline}
\end{align}
We further analyze each term in the following sections.

\subsubsection{Analysis of the transition term} \label{appendix:transition_term}

In this section, we now upper bound the transition term defined in \Cref{eq:subopt_terms_offline}.

\begin{lemma}[Lemma 4, \citet{zhan2023provable}] \label{lemma:transitions}
    Under the event $\mathcal{E}_T$, with probability $1-\delta$, we have for all $\tilde{T} \in \mathcal{T}$, for all $\tilde{R} \in \mathcal{G}_R$, for all $\pi$:
    \begin{equation*}
        \mathbb{E}_{d^{{\pi}}_{T}} [\tilde{R}(\tau)] - \mathbb{E}_{d^{{\pi}}_{\tilde{T}}} [\tilde{R}(\tau)]\leq H R_{\textrm{max}} C_T(\mathcal{F}_T, \pi) \sqrt{\frac{c_T }{N_o} \log \left( \frac{H }{\delta} \mathcal{N}_{\mathcal{F}_T}\left(\frac{1}{N_o}\right)\right) },
    \end{equation*}
    where $c_T>0$ is a constant.
\end{lemma}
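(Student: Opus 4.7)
The plan is to chain three ingredients already available in the paper: the Telescoping Lemma (applied in the form that yields an expectation under $d^{\pi}_T$), the transition concentrability coefficient $C_T(\mathcal{F}_T,\pi)$, and the TV-distance to MLE bound for the transition model. Throughout, I use Assumption 3.2 (so that $0 \leq \tilde{R}(\tau) \leq R_{\textrm{max}}$) and the fact that event $\mathcal{E}_T$ forces $\tilde{T} \in \mathcal{T}$, both of which are required simultaneously for the lemma to hold.

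First, I would apply a mirrored version of the Telescoping Lemma. The statement of that lemma has the per-step sum taken under $d^{\pi}_{\hat{T}}$, but the hybrid-trajectory argument used in its proof can be re-run with the roles of $T$ and $\tilde{T}$ swapped (defining $W_j$ as ``first $j$ steps under $T$, remainder under $\tilde{T}$'') to produce the same per-step TV bound with expectations taken under $d^{\pi}_T$ instead. The crucial step $\vert V \vert \leq R_{\textrm{max}}$ is preserved because $R_{\textrm{max}}$ bounds the full trajectory return. This yields
\begin{equation*}
\mathbb{E}_{d^{\pi}_T}[\tilde{R}(\tau)] - \mathbb{E}_{d^{\pi}_{\tilde{T}}}[\tilde{R}(\tau)] \leq R_{\textrm{max}} \sum_{h=0}^{H-1} \mathbb{E}_{(s_h,a_h)\sim d^{\pi}_{T,h}}\bigl[\Vert T(\cdot|s_h,a_h) - \tilde{T}(\cdot|s_h,a_h)\Vert_1\bigr].
\end{equation*}
Rewriting the per-step expectations as a single expectation under the averaged occupancy $d^{\pi}_T(s,a) = \frac{1}{H}\sum_h d^{\pi}_{T,h}(s,a)$ extracts a factor of $H$, leaving $H R_{\textrm{max}}\,\mathbb{E}_{(s,a) \sim d^{\pi}_T}[\Vert T - \tilde{T} \Vert_1]$.

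Second, I invoke the definition of the transition concentrability coefficient to pass from the on-policy occupancy to the offline data distribution,
\begin{equation*}
\mathbb{E}_{(s,a) \sim d^{\pi}_T}[\Vert T(\cdot|s,a) - \tilde{T}(\cdot|s,a) \Vert_1] \leq C_T(\mathcal{F}_T,\pi) \sqrt{\mathbb{E}_{(s,a) \sim \mathcal{D}_{\textrm{offline}}}[\Vert T - \tilde{T} \Vert_1^2]},
\end{equation*}
and then, under event $\mathcal{E}_T$, apply the TV-distance to MLE bound to the transition class $\mathcal{F}_T$ with $N_o$ samples. This last step controls the radical by $\sqrt{\frac{c_T}{N_o} \log(\frac{H}{\delta}\mathcal{N}_{\mathcal{F}_T}(1/N_o))}$ uniformly over $\tilde{T} \in \mathcal{T}$; the extra $H$ inside the logarithm is picked up from a union bound over the $H$ per-step MLE applications (alternatively absorbed into the universal constant $c_T$). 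Composing the three displays produces the claimed bound.

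The main obstacle is the bookkeeping around the direction of the Telescoping Lemma: the version proved in the appendix gives expectations under $d^{\pi}_{\hat{T}}$, whereas the concentrability coefficient is defined against $d^{\pi}_T$, and the two must be lined up by swapping the hybrid-trajectory construction — an easy but easy-to-miss modification. A secondary care point is that the TV-distance to MLE bound must be invoked uniformly over every $\tilde{T} \in \mathcal{T}$, not just at $\hat{T}$; this uniformity is exactly its content and is what ultimately lets the final bound hold simultaneously for every element of the confidence set, as the lemma statement requires.
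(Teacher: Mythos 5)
Your proposal is correct and follows essentially the same route as the paper's proof: telescoping lemma to get a per-step TV bound under $d^{\pi}_T$, the concentrability coefficient $C_T(\mathcal{F}_T,\pi)$ to pass to the offline distribution, and the MLE TV-distance guarantee (uniform over $\mathcal{T}$ under $\mathcal{E}_T$) to control the radical. In fact you are more explicit than the paper on one point it glosses over — the paper cites its telescoping lemma directly even though that lemma is stated with the expectation under the estimated model, whereas you correctly note that the hybrid-trajectory construction must be re-run with the roles of $T$ and $\tilde{T}$ swapped to place the expectation under $d^{\pi}_T$.
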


\begin{proof}
    From the telescoping lemma (\Cref{lemma:telescoping}), we have:
    \begin{align*}
        V_{T, \tilde{R}}^{\pi} -  V_{\tilde{T}, \tilde{R}}^{\pi} &\leq R_{\textrm{max}} \mathbb{E}_{\tau \sim d^{\pi}_{{T}}} \left[ \sum_{s_j, a_j \in \tau} \Vert T(\cdot| s_j, a_j) - \tilde{T}(\cdot| s_j, a_j) \Vert_1\right] \\
        &\leq H R_{\textrm{max}} \mathbb{E}_{(s,a) \sim d^{\pi}_{{T}}} \left[ \Vert T(\cdot| s, a) - \tilde{T}(\cdot| s, a) \Vert_1\right] \\
        &\leq H R_{\textrm{max}}C_T(\mathcal{F}_T, \pi) \sqrt{\mathbb{E}_{(s,a) \sim D_{\textrm{offline}}} [ \Vert T(\cdot| s, a) - \tilde{T}(\cdot| s, a) \Vert_1^2]} 
    \end{align*}
    
    Under event $\mathcal{E}_T$, by \Cref{lemma:MLE_TV}, we have, with probability $1-\delta$, for all $\tilde{T} \in \mathcal{T}$:
$$\mathbb{E}_{(s,a) \sim D_{\textrm{offline}}} [ \Vert T(\cdot|s,a) - \tilde{T}(\cdot|s,a) \Vert_1^2] \leq \frac{1}{N_o} c_T \log \left( \frac{H }{\delta} \mathcal{N}_{\mathcal{F}_T}\left(\frac{1}{N_o}\right)\right)$$

This concludes our proof.

\end{proof}

\subsubsection{Analysis of the reward term}\label{appendix:reward_term_offline}

Next, we upper bound the reward term defined in \Cref{eq:subopt_terms_offline}. 

As in \citet{zhan2023provable}, we consider the following value function: $V^{\pi}_{T,R} = \mathbb{E}_{\tau \sim d^{\pi}_T} [R(\tau)] - \mathbb{E}_{\tau \sim d_{\textrm{pref}}} [R(\tau)]$, where $d_{\textrm{pref}}$ is a fixed reference trajectory distribution. This baseline subtraction, which doesn't affect either the optimal policy or the analysis of the transition term, is needed as the approximated confidence set is based on the uncertainty in \textit{preference} between two trajectories, not in the reward of a single one.

\begin{definition}[Preference concentrability coefficient]
    The concentrability coefficient w.r.t. reward classes $\mathcal{F}_R$, a target policy $\pi^*$ and a reference trajectory distribution $d_{\textrm{pref}}$ is defined as:
    \begin{equation*}
        C_R(\mathcal{F}_R, \pi^*) =  \frac{\mathbb{E}_{\tau_1 \sim d^{\pi^*}_T, \tau_2 \sim d_{\textup{pref}}} \left[ u_{P_R}(\tau_1, \tau_2) \right]}{\mathbb{E}_{\tau_1, \tau_2 \sim \mathcal{D}_{\textup{offline}}} \left[ u_{P_R}(\tau_1, \tau_2) \right]} 
    \end{equation*}
\end{definition}
Note that, for the purpose of our analysis, our definition differs from that of \citet{zhan2023provable} who instead consider the max ratio of difference in rewards term: $\vert R(\tau_1) - R(\tau_2) - \tilde{R}(\tau_1) + \tilde{R}(\tau_2)\vert_1$ over the entire function class $\mathcal{F_R}$.

\begin{lemma} \label{lemma:preferences_offline}
    Let trajectories for preference elicitation be sampled uniformly from the offline dataset. Under the event $\mathcal{E}_R$, with probability $1-\delta$, we have for all $\tilde{T} \in \mathcal{G}_T$, for all $\tilde{R} \in \mathcal{R}$, for all $\pi$:
    \begin{equation*}
        V^{\pi^*}_{T,R} - V^{\pi^*}_{T,\hat{R}_{\textrm{inf}}} \leq 2\alpha \kappa C_R(\mathcal{F}_R, \pi) 
    \sqrt{\frac{c_R}{N_p}  \log \left(\frac{1}{\delta}\mathcal{N}_{\mathcal{F}_R}\left(\frac{1}{N_p}\right) \right)},
    \end{equation*}
    where $c_R>0$ is a constant and $\kappa = \sup_{r \in [-R_{\textrm{max}}, R_{\textrm{max}}]}\frac{1}{\sigma'(r)}$ measures the degree of non-linearity of the sigmoid function.
\end{lemma}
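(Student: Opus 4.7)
The plan is to convert the value gap on the right-hand side into an expected preference-uncertainty term over a pair of trajectories, then apply the MLE concentration inequality from \Cref{lemma:MLE_TV} to the preference model. First, I would exploit the baseline subtraction introduced above (subtracting $\mathbb{E}_{\tau \sim d_{\textrm{pref}}}[\tilde{R}(\tau)]$ from all value functions, which does not affect the optimum nor the transition term already analysed) to rewrite
\[
V^{\pi^*}_{T,R} - V^{\pi^*}_{T,\hat{R}_{\textrm{inf}}} \;=\; \mathbb{E}_{\tau_1\sim d^{\pi^*}_T,\,\tau_2\sim d_{\textrm{pref}}}\bigl[(R(\tau_1)-R(\tau_2)) - (\hat{R}_{\textrm{inf}}(\tau_1) - \hat{R}_{\textrm{inf}}(\tau_2))\bigr],
\]
which is the natural object since preferences give information about \emph{differences} of returns. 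Next, I would use the Bradley--Terry link $P_{\tilde R}(\tau_1\succ\tau_2)=\sigma(\tilde R(\tau_1)-\tilde R(\tau_2))$ and the mean-value theorem, combined with \Cref{assumption:R_bounded}, to get the pointwise estimate $|(R(\tau_1)-R(\tau_2))-(\hat{R}_{\textrm{inf}}(\tau_1)-\hat{R}_{\textrm{inf}}(\tau_2))| \le \kappa\,|P_R(\tau_1\succ\tau_2)-P_{\hat{R}_{\textrm{inf}}}(\tau_1\succ\tau_2)| \le \kappa\,u_{P_R}(\tau_1,\tau_2)$, where the last step uses $R,\hat R_{\textrm{inf}}\in\mathcal{R}$ (the first by event $\mathcal{E}_R$). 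Taking expectations yields the bound $\kappa\,\mathbb{E}_{\tau_1\sim d^{\pi^*}_T,\tau_2\sim d_{\textrm{pref}}}[u_{P_R}(\tau_1,\tau_2)]$.

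To transfer this expectation to something controllable from the offline sample, I would apply the definition of the preference concentrability coefficient $C_R(\mathcal{F}_R,\pi^*)$ to replace the target distribution $d^{\pi^*}_T\otimes d_{\textrm{pref}}$ by the offline pair distribution:
\[
\mathbb{E}_{\tau_1\sim d^{\pi^*}_T,\,\tau_2\sim d_{\textrm{pref}}}[u_{P_R}(\tau_1,\tau_2)] \;\le\; C_R(\mathcal{F}_R,\pi^*)\;\mathbb{E}_{\tau_1,\tau_2\sim \mathcal{D}_{\textrm{offline}}}[u_{P_R}(\tau_1,\tau_2)].
\]
At this point the task reduces to bounding the average preference uncertainty over the offline distribution of trajectory pairs.

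The core quantitative step is to invoke \Cref{lemma:MLE_TV} applied to the preference model trained on $\mathcal{D}_{\textrm{pref}}$: under $\mathcal{E}_R$, for every $\tilde R\in\mathcal R$, the $L^2$ distance between $P_{\tilde R}$ and $P_R$ averaged over the empirical preference distribution is at most $(c_R/N_p)\log(\mathcal N_{\mathcal F_R}(1/N_p)/\delta)$. Using the triangle inequality $u_{P_R}(\tau_1,\tau_2)\le 2\sup_{\tilde R\in\mathcal R}|P_{\tilde R}(\tau_1\succ\tau_2)-P_R(\tau_1\succ\tau_2)|$ and Jensen's inequality gives $\mathbb{E}_{(\tau_1,\tau_2)\in\mathcal{D}_{\textrm{pref}}}[u_{P_R}(\tau_1,\tau_2)] \le 2\sqrt{(c_R/N_p)\log(\mathcal N_{\mathcal F_R}(1/N_p)/\delta)}$. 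The final, and most delicate, step is to pass from the empirical preference distribution to the uniform pair distribution on $\mathcal D_{\textrm{offline}}$: for uniform OPRL the two distributions coincide in expectation, giving $\alpha=1$, whereas for uncertainty OPRL the sampling rule $\arg\max u_{P_R}$ guarantees that the offline-average uncertainty is no larger than the sampled one, so that the ratio $\alpha := \mathbb E_{\mathcal D_{\textrm{offline}}}[u_{P_R}]/\mathbb E_{\mathcal D_{\textrm{pref}}}[u_{P_R}]\le 1$. Chaining the three bounds produces the stated $2\alpha\kappa C_R(\mathcal F_R,\pi^*)\sqrt{(c_R/N_p)\log(\mathcal N_{\mathcal F_R}(1/N_p)/\delta)}$.

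The main obstacles I anticipate are, first, making the $\kappa$-argument rigorous around the boundary of the reward range (the sigmoid derivative can be small, but boundedness of $\tilde R$ ensures a uniform lower bound $1/\kappa$), and second, the $\alpha$ bookkeeping: the MLE guarantee gives control on the \emph{sampled} distribution, while the concentrability coefficient is defined with respect to the \emph{uniform} distribution over offline pairs, and this mismatch must be handled cleanly to yield an $\alpha$ that is provably $\le 1$ for uncertainty sampling without degrading the rate.
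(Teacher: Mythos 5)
Your proposal is correct and follows essentially the same route as the paper's proof: baseline-subtracted value functions, the mean value theorem (under \Cref{assumption:R_bounded}) to convert return differences into preference differences bounded by $u_{P_R}$, the concentrability coefficient $C_R$ to switch from the target pair distribution to the offline pair distribution, \Cref{lemma:MLE_TV} to bound the expected preference uncertainty over $\mathcal{D}_{\textrm{pref}}$, and finally the observation that under uniform sampling $\mathcal{D}_{\textrm{pref}}$ has the same distribution as offline pairs. The only difference is that you fold the uncertainty-sampling case ($\alpha \leq 1$) into the same argument, whereas the paper handles it separately in \Cref{lemma:alpha}, with the identical mechanism (the sampled pairs maximize $u_{P_R}$, so the offline-average uncertainty is no larger than the sampled one).
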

\begin{proof}
    \begin{align*}%
    V^{\pi^*}_{T,R} - V^{\pi^*}_{T,\hat{R}_{\textrm{inf}}} 
    &= \mathbb{E}_{\tau \sim d^{{\pi}^*}_{T}} [R(\tau)] - \mathbb{E}_{\tau \sim d_{\textrm{pref}}} [{R}(\tau)] -\mathbb{E}_{\tau \sim d^{{\pi}^*}_{T}} [\hat{R}_{\textrm{inf}}(\tau)] + \mathbb{E}_{\tau \sim d_{\textrm{pref}}} [\hat{R}_{\textrm{inf}}(\tau)] \\
    &= \mathbb{E}_{\tau_1 \sim d^{{\pi}^*}_{T}, \tau_2 \sim d_{\textrm{pref}}} [R(\tau_1) - R(\tau_2)] - (\hat{R}_{\textrm{inf}}(\tau_1) - \hat{R}_{\textrm{inf}}(\tau_2)) ] \\
    &\leq \kappa \mathbb{E}_{\tau_1 \sim d^{{\pi}^*}_{T}, \tau_2 \sim d_{\textrm{pref}}} [\vert P_R(\tau_1 \succ \tau_2) - P_{\hat{R}_{\textrm{inf}}}(\tau_1\succ\tau_2)\vert_1 ] \\
    &\leq \kappa \mathbb{E}_{\tau_1 \sim d^{{\pi}^*}_{T}, \tau_2 \sim d_{\textrm{pref}}} [ u_{P_R}(\tau_1, \tau_2) ] \\ \numberthis \label{eq:pref_dist}
    &= \kappa C_R(\mathcal{F}_R, \pi^*) \mathbb{E}_{\tau_1, \tau_2 \sim \mathcal{D}_{\textrm{offline}}} [u_{P_R}(\tau_1, \tau_2)]
\end{align*}
where $\kappa = \sup_{r \in [-R_{\textrm{max}}, R_{\textrm{max}}]}\frac{1}{\sigma'(r)}$ measures the degree of non-linearity of the sigmoid function. In the first inequality, we have applied the mean value theorem, under Assumption \ref{assumption:R_bounded}. In the second inequality, we have used the definition of uncertainty function $u_{P_R}$ as we know $\hat{R}_{\textrm{inf}} \in \mathcal{R}$.

Now, under event $\mathcal{E}_R$, by \Cref{lemma:MLE_TV}, we have, with probability $1-\delta$ for all $\tilde{R} \in \mathcal{R}$:
\begin{equation}
    \mathbb{E}_{(\tau_1, \tau_2) \sim \mathcal{D}_{\textrm{pref}}} [\Vert P_R(\tau_1 \succ \tau_2) - P_{\tilde{R}}(\tau_1 \succ \tau_2) \Vert_1^2] \leq  \sqrt{\frac{c_R}{N_p}  \log \left(\frac{1}{\delta}\mathcal{N}_{\mathcal{F}_R}\left(\frac{1}{N_p}\right) \right)},\label{eq:error_on_prefs_dataset}
\end{equation}
where $c_R>0$ is a constant. This implies the following upper bound for the preference uncertainty function:
\begin{equation}\label{eq:max_u_pref}
    \mathbb{E}_{(\tau_1, \tau_2) \sim \mathcal{D}_{\textrm{pref}}} [u_{P_R}(\tau_1, \tau_2) ]\leq 2 \sqrt{\frac{c_R}{N_p}  \log \left(\frac{1}{\delta}\mathcal{N}_{\mathcal{F}_R}\left(\frac{1}{N_p}\right) \right)}
\end{equation}

Under uniform sampling, the distribution of preferences in $\mathcal{D}_{\textrm{pref}}$ is that of the offline dataset:
\begin{equation*}
    \mathbb{E}_{(\tau_1, \tau_2) \sim \mathcal{D}_{\textrm{offline}}} [u_{P_R}(\tau_1, \tau_2)] = \mathbb{E}_{(\tau_1, \tau_2) \sim \mathcal{D}_{\textrm{pref}}} [u_{P_R}(\tau_1, \tau_2)]
\end{equation*}

Thus,\begin{equation*}
        V^{\pi^*}_{T,R} - V^{\pi^*}_{T,\hat{R}_{\textrm{inf}}} \leq 2 \kappa C_R(\mathcal{F}_T, \pi) \sqrt{\frac{c_R}{N_p}  \log \left(\frac{1}{\delta}\mathcal{N}_{\mathcal{F}_R}\left(\frac{1}{N_p}\right) \right)}.
    \end{equation*}

\end{proof}

\begin{lemma} \label{lemma:alpha}
    Let trajectories for preference elicitation be sampled through uncertainty sampling from the offline dataset. Under the event $\mathcal{E}_R$, with probability $1-\delta$, we have for all $\tilde{T} \in \mathcal{G}_T$, for all $\tilde{R} \in \mathcal{R}$, for all $\pi$:
    \begin{equation*}
        V^{\pi^*}_{T,R} - V^{\pi^*}_{T,\hat{R}_{\textrm{inf}}} \leq 2 \alpha \kappa C_R(\mathcal{F}_T, \pi) \sqrt{\frac{c_R}{N_p}  \log \left(\frac{1}{\delta}\mathcal{N}_{\mathcal{F}_R}\left(\frac{1}{N_p}\right) \right)},
    \end{equation*}
    where $c_R>0$ is a constant and $\alpha\leq 1$.
\end{lemma}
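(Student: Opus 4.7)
The plan is to mimic the argument of Lemma~\ref{lemma:preferences_offline} line by line, and to isolate the one step where the sampling distribution enters. All steps up to and including the inequality in Eq.~\eqref{eq:pref_dist} are agnostic to how the pairs in $\mathcal{D}_{\textup{pref}}$ are chosen, so I would reuse them verbatim to obtain, under event $\mathcal{E}_R$,
\begin{equation*}
V^{\pi^*}_{T,R} - V^{\pi^*}_{T,\hat{R}_{\textup{inf}}} \;\leq\; \kappa\, C_R(\mathcal{F}_R,\pi^*)\, \mathbb{E}_{(\tau_1,\tau_2)\sim\mathcal{D}_{\textup{offline}}}\bigl[u_{P_R}(\tau_1,\tau_2)\bigr].
\end{equation*}
The entire difference between uniform and uncertainty sampling is thus encapsulated in how to control $\mathbb{E}_{\mathcal{D}_{\textup{offline}}}[u_{P_R}]$ given the MLE bound of Eq.~\eqref{eq:max_u_pref}, which only controls $\mathbb{E}_{\mathcal{D}_{\textup{pref}}}[u_{P_R}]$.

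Next I would introduce the ratio
\begin{equation*}
\alpha \;\vcentcolon=\; \frac{\mathbb{E}_{(\tau_1,\tau_2)\sim\mathcal{D}_{\textup{offline}}}[u_{P_R}(\tau_1,\tau_2)]}{\mathbb{E}_{(\tau_1,\tau_2)\sim\mathcal{D}_{\textup{pref}}}[u_{P_R}(\tau_1,\tau_2)]},
\end{equation*}
and argue that $\alpha \le 1$ under uncertainty sampling. The argument is immediate from the definition of the strategy: each pair added to $\mathcal{D}_{\textup{pref}}$ is an argmax of $u_{P_R}$ over $\mathcal{D}_{\textup{offline}}$, so for every iteration $k$ the chosen pair's uncertainty is no smaller than the uniform-average uncertainty across the offline buffer at that same iteration. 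Averaging over all $N_p$ iterations preserves the inequality, yielding $\mathbb{E}_{\mathcal{D}_{\textup{pref}}}[u_{P_R}] \ge \mathbb{E}_{\mathcal{D}_{\textup{offline}}}[u_{P_R}]$, hence $\alpha \le 1$. (For uniform sampling the two distributions coincide and $\alpha = 1$, recovering Lemma~\ref{lemma:preferences_offline}.)

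Combining the two displays with Eq.~\eqref{eq:max_u_pref} then gives
\begin{equation*}
V^{\pi^*}_{T,R} - V^{\pi^*}_{T,\hat{R}_{\textup{inf}}} \;\leq\; \kappa\, C_R(\mathcal{F}_R,\pi^*)\cdot \alpha\cdot 2\sqrt{\frac{c_R}{N_p}\,\log\!\left(\frac{1}{\delta}\mathcal{N}_{\mathcal{F}_R}\!\left(\frac{1}{N_p}\right)\right)},
\end{equation*}
which is exactly the claimed bound. The only subtle point, and likely the main obstacle to writing the argument fully rigorously, is that $u_{P_R}$ is not a fixed function: it depends on the evolving confidence set $\mathcal{R}$, which shrinks as $\mathcal{D}_{\textup{pref}}$ grows. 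To handle this cleanly I would either (i) interpret $u_{P_R}$ throughout as the \emph{final} uncertainty function after all $N_p$ queries and define $\alpha$ with respect to that single function, so that the max-vs-average comparison is a purely pointwise statement, or (ii) give a per-round version of the inequality and then take the average over rounds. Either way, the comparison inequality $\mathbb{E}_{\mathcal{D}_{\textup{pref}}}[u_{P_R}] \ge \mathbb{E}_{\mathcal{D}_{\textup{offline}}}[u_{P_R}]$ only uses the argmax property of the sampling rule and does not require any probabilistic concentration beyond what is already established in event $\mathcal{E}_R$.
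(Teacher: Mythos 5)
Your proposal is correct and follows essentially the same route as the paper: both rest on the same two ingredients, namely the MLE bound of Eq.~\eqref{eq:max_u_pref} applied to $\mathbb{E}_{\mathcal{D}_{\textrm{pref}}}[u_{P_R}]$ and the argmax property of uncertainty sampling giving $\mathbb{E}_{\mathcal{D}_{\textrm{pref}}}[u_{P_R}] \geq \mathbb{E}_{\mathcal{D}_{\textrm{offline}}}[u_{P_R}]$; the paper merely packages your ratio $\alpha$ as the quotient of a modified concentrability coefficient $C_R'$ (with denominator over $\mathcal{D}_{\textrm{pref}}$) by $C_R$, which is algebraically identical to your definition. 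Your closing remark about $u_{P_R}$ depending on the evolving confidence set $\mathcal{R}$ is a genuine subtlety that the paper's own proof also glosses over, so flagging it (and your proposed fixes) only strengthens the argument.
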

\begin{proof}
    The proof follows closely that of \Cref{lemma:preferences_offline}. We introduce the preference concentrability coefficient defined for a general preference dataset:
    $$C_R'(\mathcal{F}_T, \pi^*) =  \frac{\mathbb{E}_{\tau_1 \sim d^{\pi^*}_T, \tau_2 \sim d_{\textrm{pref}}} \left[ u_{P_R}(\tau_1, \tau_2) \right]}{\mathbb{E}_{\tau_1, \tau_2 \sim \mathcal{D}_{\textrm{pref}}} \left[ u_{P_R}(\tau_1, \tau_2) \right]} $$

    We start from \Cref{eq:pref_dist}:
    \begin{align*}
    V^{\pi^*}_{T,R} - V^{\pi^*}_{T,\hat{R}_{\textrm{inf}}} &\leq \kappa \mathbb{E}_{\tau_1 \sim d^{{\pi}^*}_{T}, \tau_2 \sim d_{\textrm{pref}}} [ u_{P_R}(\tau_1, \tau_2) ] \\
    &= \kappa C_R'(\mathcal{F}_R, \pi^*) \mathbb{E}_{\tau_1, \tau_2 \sim \mathcal{D}_{\textrm{pref}}} [ u_{P_R}(\tau_1, \tau_2) ] \\
    &\leq 2 \kappa C_R'(\mathcal{F}_R, \pi^*) \sqrt{\frac{c_R}{N_p}  \log \left(\frac{1}{\delta}\mathcal{N}_{\mathcal{F}_R}\left(\frac{1}{N_p}\right) \right)}
    \end{align*}
    where we have used \Cref{eq:max_u_pref}.

    Now consider the dataset of uncertainty-sampled preferences $\mathcal{D}_{\textrm{pref}}$. By definition, we have:
    $$\mathbb{E}_{\tau_1, \tau_2 \sim \mathcal{D}_{\textrm{pref}}} \left[ u_{P_R}(\tau_1, \tau_2) \right] \geq \mathbb{E}_{\tau_1, \tau_2 \sim \mathcal{D}_{\textrm{offline}}} \left[ u_{P_R}(\tau_1, \tau_2)\right]$$
   
    Thus, we have: $ C_R'(\mathcal{F}_R, \pi^*) \leq C_R(\mathcal{F}_R, \pi^*)$. In other words, we can write: $ C_R'(\mathcal{F}_R, \pi^*) = \alpha C_R(\mathcal{F}_R, \pi^*)$, where $\alpha\leq 1$. This concludes our proof.
\end{proof}

We now conclude the proof of \Cref{thm:sample_complex_offline} under events $\mathcal{E}_R$ and $\mathcal{E}_T$. %

From \Cref{lemma:transitions}, we upper bound the transition term:
$$ V^{{\pi}^*}_{T,\hat{R}_{\textrm{inf}}} - V^{\pi^*}_{\hat{T}_{\textrm{inf}},\hat{R}_{\textrm{inf}}} \leq H R_{\textrm{max}}  C_T(\mathcal{F}_T, \pi^* )\sqrt{\frac{c_T }{N_o} \log \left( \frac{H }{\delta} \mathcal{N}_{\mathcal{F}_T}\left(\frac{1}{N_o}\right)\right) }$$

From \Cref{lemma:preferences_offline,lemma:alpha}, we upper bound the reward term:
$$V^{\pi^*}_{T,R} - V^{\pi^*}_{T,\hat{R}_{\textrm{inf}}} \leq 2 \alpha \kappa C_R(\mathcal{F}_R, \pi^*) \sqrt{\frac{c_R}{N_p}  \log \left(\frac{1}{\delta}\mathcal{N}_{\mathcal{F}_R}\left(\frac{1}{N_p}\right) \right)},$$
where $\alpha=1$ for uniform sampling or $\alpha\leq1$ for uncertainty sampling.

Combining with \Cref{eq:subopt_terms_offline}, we obtain \Cref{thm:sample_complex_offline}.

\subsection{Suboptimality of Sim-OPRL: Proof of \Cref{thm:sample_complex}} \label{appendix:proof_thm_complexity}
\subsubsection{Suboptimality Decomposition}

We decompose the suboptimality slightly differently to \Cref{eq:eval_error}, introducing the optimal offline policy (optimal in the pessimistic model under the \textit{true} reward function): $\pi^*_{\textrm{offline}} = \argmax_{\pi \in \Pi} V_{\hat{T}_{\textrm{inf}}, R}^{\pi}$.
\begin{align*}
    V^{\pi^*} - V^{\hat{\pi}^*} &= V^{{\pi}^*}_{T,R}  - V^{\hat{\pi}^*}_{T,R} \\
    &=( V^{{\pi}^*}_{T,R} - V^{\pi^*_{\textrm{offline}}}_{\hat{T}_{\textrm{inf}},\hat{R}_{\textrm{inf}}}  ) - ( V^{\hat{\pi}^*}_{T,R} - V^{{\pi}^*_{\textrm{offline}}}_{\hat{T}_{\textrm{inf}},\hat{R}_{\textrm{inf}}}  ) \\
    &\leq ( V^{{\pi}^*}_{T,R} - V^{\pi^*_{\textrm{offline}}}_{\hat{T}_{\textrm{inf}},\hat{R}_{\textrm{inf}}}  ) - ( V^{\hat{\pi}^*}_{T,R} - V^{\hat{\pi}^*}_{\hat{T}_{\textrm{inf}},\hat{R}_{\textrm{inf}}}  ) \\
    &\leq V^{{\pi}^*}_{T,R} - V^{\pi^*_{\textrm{offline}}}_{\hat{T}_{\textrm{inf}},\hat{R}_{\textrm{inf}}} \\
    & = (V^{{\pi}^*}_{T,R} - V^{\pi^*}_{\hat{T}_{\textrm{inf}},R}) + (V^{\pi^*}_{\hat{T}_{\textrm{inf}},R} - V^{\pi^*_{\textrm{offline}}}_{\hat{T}_{\textrm{inf}},\hat{R}_{\textrm{inf}}})\\
&\leq \underbrace{(V^{{\pi}^*}_{T,R} - V^{\pi^*}_{\hat{T}_{\textrm{inf}},R})}_{\text{transition term}} + \underbrace{(V^{\pi^*_{\textrm{offline}}}_{\hat{T}_{\textrm{inf}},R} - V^{\pi^*_{\textrm{offline}}}_{\hat{T}_{\textrm{inf}},\hat{R}_{\textrm{inf}}})}_{\text{reward term}} \numberthis \label{eq:subopt_terms}
\end{align*}
where we have followed the same analysis as in \Cref{appendix:subopt_decomp_offline} and used the optimality of $\pi^*_{\textrm{offline}}$ in the last inequality.

The analysis of the transition term is identical to the above (\Cref{appendix:transition_term}). We analyze the reward term next.

\subsubsection{Analysis of the reward term}\label{appendix:reward_term}

\begin{lemma} [Optimal Offline Policy In Set]\label{lemma:optimal_in_S}
Let $\Pi_{\textrm{offline}}$ denote the following set of near-optimal pessimistic policies, under the pessimitic transition model $\hat{T}_{\textrm{inf}}$ and the reward confidence set $\mathcal{R}$:
\begin{align*}\Pi_{\textrm{offline}} = \{ \pi \: \vert \: \pi = \argmax_{\pi \in \Pi} \mathbb{E}_{\tau \sim d^{\pi}_{\hat{T}_{\textrm{inf}}}} \big[ \tilde{R} (\tau) \big] \: \forall \tilde{R} \in \mathcal{R} \} 
\end{align*}
Under event $\mathcal{E}_R$, we have $\pi^*_{\textrm{offline}} \in \Pi_{\textrm{offline}}$.
\end{lemma}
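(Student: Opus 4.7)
The claim amounts to noting that $\pi^*_{\textrm{offline}}$ is itself one of the ``there exists a reward in $\mathcal{R}$ that makes me optimal'' witnesses. To see this cleanly, I would first reconcile the statement of $\Pi_{\textrm{offline}}$ with the one actually used in line~1 of \Cref{alg:model_sampling}: a policy $\pi$ lies in $\Pi_{\textrm{offline}}$ iff there exists $\tilde{R}\in\mathcal{R}$ with $\pi\in\argmax_{\pi'\in\Pi}\mathbb{E}_{\tau\sim d^{\pi'}_{\hat{T}_{\textrm{inf}}}}[\tilde{R}(\tau)]$. The lemma as typeset uses a universal quantifier, but under that reading the set would be empty unless $\mathcal{R}$ is a singleton, so the existential form from the algorithm is the intended one; this small alignment is the only conceptual move.

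With that interpretation, the proof has just two steps. First, invoke \Cref{lemma:MLE_in_CI}: under event $\mathcal{E}_R$, with the choice of $\beta_R$ recorded in \Cref{appendix:confidence}, the true reward $R$ lies in the confidence set $\mathcal{R}$. Second, recall the definition at the start of \Cref{appendix:proof_thm_complexity}, namely $\pi^*_{\textrm{offline}}=\argmax_{\pi\in\Pi}V^{\pi}_{\hat{T}_{\textrm{inf}},R}=\argmax_{\pi\in\Pi}\mathbb{E}_{\tau\sim d^{\pi}_{\hat{T}_{\textrm{inf}}}}[R(\tau)]$. Taking $\tilde{R}=R\in\mathcal{R}$ as the witness immediately yields $\pi^*_{\textrm{offline}}\in\Pi_{\textrm{offline}}$.

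There is no genuine obstacle; the only thing to watch out for is the $\forall$ versus $\exists$ ambiguity in the lemma statement, since any proof must rely on the existential reading. Importantly, nothing about the preference-uncertainty maximization used to pick $\pi_1,\pi_2$, nor about the pessimistic reward model $\hat{R}_{\textrm{inf}}$, enters the argument: this lemma is a pure inclusion statement, and its role downstream (in bounding the reward term of \Cref{eq:subopt_terms}) is simply to certify that the policy whose trajectory distribution we actually need covered by preference queries is retained among the candidates that \Cref{alg:model_sampling} explores.
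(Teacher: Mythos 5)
Your proof is correct and takes essentially the same route as the paper: under $\mathcal{E}_R$ (which is by definition the event $R \in \mathcal{R}$), the true reward $R$ serves as the witness, and since $\pi^*_{\textrm{offline}}$ maximizes $\mathbb{E}_{\tau \sim d^{\pi}_{\hat{T}_{\textrm{inf}}}}[R(\tau)]$, it belongs to $\Pi_{\textrm{offline}}$ under the existential reading of the set definition — the same reading the paper implicitly adopts (matching line 1 of the algorithm). The only detail the paper adds that you omit is a one-line remark that the baseline term $\mathbb{E}_{\tau \sim d_{\textrm{pref}}}[R(\tau)]$ appearing in the shifted value function of the reward-term analysis can be ignored when building $\Pi_{\textrm{offline}}$, since it is policy-independent and hence does not change the argmax; this does not affect the validity of your argument.
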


\begin{proof}
Recall the definition of $\pi^*_{\textrm{offline}}$:  $\pi^*_{\textrm{offline}} = \argmax_{\pi \in \Pi} V_{\hat{T}_{\textrm{inf}}, R}^{\pi}$. Note that there is no need to consider the preference baseline term in $V^{\pi}$ when building $\Pi_{\textrm{offline}}$ since it is independent of the policy.
Under event $\mathcal{E}_R$, we have $R \in \mathcal{R}$. Thus, $\pi^*_{\textrm{offline}} \in \Pi_{\textrm{offline}}$.
\end{proof}

\begin{lemma} \label{lemma:reward_term}
Under event $\mathcal{E}_R$, we have, with probability $1-\delta$:
    \begin{equation*}
        V^{\pi^*_{\textrm{offline}}}_{\hat{T}_{\textrm{inf}},R} - V^{\pi^*_{\textrm{offline}}}_{\hat{T}_{\textrm{inf}},\hat{R}_{\textrm{inf}}} \leq 2\kappa \sqrt{\frac{c_R}{N_p}  \log \left(\frac{1}{\delta}\mathcal{N}_{\mathcal{F}_R}\left(\frac{1}{N_p}\right) \right)}
    \end{equation*}
\end{lemma}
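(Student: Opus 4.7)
The plan is to mirror the proof of \Cref{lemma:preferences_offline}, but exploit the Sim-OPRL sampling strategy to eliminate the reward concentrability coefficient $C_R$. As in that earlier lemma, I would work with the baseline-subtracted value function $V^{\pi}_{\tilde{T},R} = \mathbb{E}_{\tau \sim d^{\pi}_{\tilde{T}}}[R(\tau)] - \mathbb{E}_{\tau \sim d_{\textrm{pref}}}[R(\tau)]$; the key idea is that, for Sim-OPRL, I am free to pick $d_{\textrm{pref}} = d^{\pi^*_{\textrm{offline}}}_{\hat{T}_{\textrm{inf}}}$, since subtracting a policy-independent baseline does not change the argmaxes that define $\pi^*_{\textrm{offline}}$ and $\hat{\pi}^*$.

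With that choice of baseline, the reward-term gap rewrites as a pair expectation,
\begin{equation*}
V^{\pi^*_{\textrm{offline}}}_{\hat{T}_{\textrm{inf}},R} - V^{\pi^*_{\textrm{offline}}}_{\hat{T}_{\textrm{inf}},\hat{R}_{\textrm{inf}}} = \mathbb{E}_{\tau_1 \sim d^{\pi^*_{\textrm{offline}}}_{\hat{T}_{\textrm{inf}}},\, \tau_2 \sim d_{\textrm{pref}}}\bigl[(R(\tau_1){-}R(\tau_2)) - (\hat{R}_{\textrm{inf}}(\tau_1){-}\hat{R}_{\textrm{inf}}(\tau_2))\bigr].
\end{equation*}
Applying the mean value theorem to the monotone link $\sigma$ together with \Cref{assumption:R_bounded}, exactly as in \Cref{eq:pref_dist}, converts the inner reward difference into a preference-probability difference bounded by $\kappa\, u_{P_R}(\tau_1,\tau_2)$, using that both $R$ and $\hat{R}_{\textrm{inf}}$ lie in $\mathcal{R}$ under event $\mathcal{E}_R$.

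Next I would invoke the sampling design. By \Cref{lemma:optimal_in_S} we have $\pi^*_{\textrm{offline}} \in \Pi_{\textrm{offline}}$, and by the maximizing definition of $(\pi_1,\pi_2)$ in line 2 of \Cref{alg:model_sampling}, for any two policies $\pi,\pi' \in \Pi_{\textrm{offline}}$ the expected preference uncertainty of pairs drawn from $(d^{\pi}_{\hat{T}_{\textrm{inf}}}, d^{\pi'}_{\hat{T}_{\textrm{inf}}})$ is upper bounded by that of pairs from $(d^{\pi_1}_{\hat{T}_{\textrm{inf}}}, d^{\pi_2}_{\hat{T}_{\textrm{inf}}})$. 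Taking $\pi=\pi'=\pi^*_{\textrm{offline}}$ (consistent with my choice of $d_{\textrm{pref}}$), this gives
\begin{equation*}
\mathbb{E}_{\tau_1,\tau_2 \sim d^{\pi^*_{\textrm{offline}}}_{\hat{T}_{\textrm{inf}}}}[u_{P_R}(\tau_1,\tau_2)] \le \mathbb{E}_{\tau_1\sim d^{\pi_1}_{\hat{T}_{\textrm{inf}}},\,\tau_2\sim d^{\pi_2}_{\hat{T}_{\textrm{inf}}}}[u_{P_R}(\tau_1,\tau_2)] = \mathbb{E}_{(\tau_1,\tau_2)\sim\mathcal{D}_{\textrm{pref}}}[u_{P_R}(\tau_1,\tau_2)].
\end{equation*}
This is exactly the step that removes $C_R$ from the bound. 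I would close by invoking \Cref{lemma:MLE_TV} applied to the preference MLE on $\mathcal{D}_{\textrm{pref}}$, as in \Cref{eq:max_u_pref}, to bound the right-hand side by $2\sqrt{(c_R/N_p)\log((1/\delta)\mathcal{N}_{\mathcal{F}_R}(1/N_p))}$, and chain the three inequalities to obtain the claim.

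The main technical obstacle is the adaptive nature of the sampling protocol: $(\pi_1,\pi_2)$ at each iteration depend on the reward MLE trained on preferences from earlier iterations, so the pairs in $\mathcal{D}_{\textrm{pref}}$ are not i.i.d.\ from a single fixed distribution. Two points need care: (i) the MLE concentration of \Cref{lemma:MLE_TV} must be invoked in a sequential/martingale form, as is standard in online preference-based RL \citep{chen2022human,zhan2023provable}; and (ii) the optimality-of-$(\pi_1,\pi_2)$ step really compares against the uncertainty functional induced by the \emph{current} confidence set $\mathcal{R}$, so one must argue that the empirical average $\mathbb{E}_{(\tau_1,\tau_2)\sim\mathcal{D}_{\textrm{pref}}}[u_{P_R}]$ dominates the uncertainty at the final iteration (e.g., via monotonicity of $\mathcal{R}$ in the query count, so past uncertainties were at least as large as the current one). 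Modulo these standard measure-theoretic adjustments, the chain of inequalities above yields the stated bound.
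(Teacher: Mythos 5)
Your middle steps are exactly the paper's proof: the mean value theorem conversion to preference probabilities, the bound by $\kappa\, u_{P_R}$ using $\hat{R}_{\textrm{inf}} \in \mathcal{R}$, the domination of the pair involving $\pi^*_{\textrm{offline}}$ by the maximizing pair $(\pi_1,\pi_2)$ via \Cref{lemma:optimal_in_S}, and the final concentration bound from \Cref{lemma:MLE_TV}. The genuine gap is your opening move, the choice $d_{\textrm{pref}} = d^{\pi^*_{\textrm{offline}}}_{\hat{T}_{\textrm{inf}}}$. First, with that choice $\tau_1$ and $\tau_2$ are drawn from the \emph{same} distribution, so your pair expectation is identically zero by linearity; you would be ``proving'' that the reward term vanishes with no preference data at all, which is absurd (it would imply zero query complexity) and signals that the baseline is inconsistent with the rest of the argument. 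Second, and this is the root cause: the baseline is not a free post-hoc analysis parameter, because the subtracted quantity $\mathbb{E}_{\tau \sim d_{\textrm{pref}}}[\tilde{R}(\tau)]$ depends on the candidate reward $\tilde{R}$. Your justification that a policy-independent baseline ``does not change the argmaxes'' is correct for $\pi^*_{\textrm{offline}}$ (a maximization at fixed $R$) but false for $\hat{\pi}^* = \argmax_{\pi} \min_{\tilde{T} \in \mathcal{T}, \tilde{R} \in \mathcal{R}} V^{\pi}_{\tilde{T},\tilde{R}}$: a reward-dependent offset changes the inner minimization over $\mathcal{R}$, hence can change $\hat{R}_{\textrm{inf}}$ and the maximizer $\hat{\pi}^*$. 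Since the decomposition \Cref{eq:subopt_terms} that produces the reward term relies on the optimality of $\hat{\pi}^*$ under precisely the baseline-subtracted pessimistic objective, the baseline must be the one the algorithm itself (conceptually) optimizes with. Third, $d^{\pi^*_{\textrm{offline}}}_{\hat{T}_{\textrm{inf}}}$ is not computable by the learner, as $\pi^*_{\textrm{offline}}$ is defined through the unknown true reward $R$, so no implementable algorithm corresponds to your analysis.

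The fix is small and recovers the paper's proof verbatim: take $d_{\textrm{pref}}$ to be the actual sampling distribution $d^{\pi_1}_{\hat{T}_{\textrm{inf}}}/2 + d^{\pi_2}_{\hat{T}_{\textrm{inf}}}/2$, which the algorithm does know. The reward term then becomes an expectation over $\tau_1 \sim d^{\pi^*_{\textrm{offline}}}_{\hat{T}_{\textrm{inf}}},\ \tau_2 \sim d_{\textrm{pref}}$, which is genuinely nonzero; since $\pi^*_{\textrm{offline}} \in \Pi_{\textrm{offline}}$ and $\tau_2$ ranges over rollouts of $\pi_1,\pi_2 \in \Pi_{\textrm{offline}}$, the maximizing definition of $(\pi_1,\pi_2)$ in \Cref{alg:model_sampling} dominates it by $\mathbb{E}_{(\tau_1,\tau_2)\sim\mathcal{D}_{\textrm{pref}}}[u_{P_R}(\tau_1,\tau_2)]$, and your remaining chain closes the bound. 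Your caveat (ii) about the adaptive, non-i.i.d.\ nature of $\mathcal{D}_{\textrm{pref}}$ is a fair observation, but it is not a deviation from the paper: the paper's own proof invokes \Cref{lemma:MLE_TV} at the final iteration without spelling out the martingale refinement, so that concern applies equally to both arguments.
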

\begin{proof}
\begin{align*}
    V^{\pi^*_{\textrm{offline}}}_{\hat{T}_{\textrm{inf}},R} & -  V^{\pi^*_{\textrm{offline}}}_{\hat{T}_{\textrm{inf}},\hat{R}_{\textrm{inf}}} \\
    &= (V^{\pi^*}_{\hat{T}_{\textrm{inf}},R} - V^{\pi^*}_{\hat{T}_{\textrm{inf}},\hat{R}}) + (V^{\pi^*}_{\hat{T}_{\textrm{inf}},\hat{R}} - V^{\pi^*}_{\hat{T}_{\textrm{inf}},\hat{R}_{\textrm{inf}}}) \\
    &= \mathbb{E}_{\tau \sim d^{{\pi}^*_{\textrm{offline}}}_{\hat{T}_{\textrm{inf}}}} [R(\tau)] - \mathbb{E}_{\tau \sim d_{\textrm{pref}}} [{R}(\tau)] -\mathbb{E}_{\tau \sim d^{{\pi}^*_{\textrm{offline}}}_{\hat{T}_{\textrm{inf}}}} [\hat{R}_{\textrm{inf}}(\tau)] + \mathbb{E}_{\tau \sim d_{\textrm{pref}}} [\hat{R}_{\textrm{inf}}(\tau)] \\
    &= \mathbb{E}_{\tau_1 \sim d^{{\pi}^*_{\textrm{offline}}}_{\hat{T}_{\textrm{inf}}}, \tau_2 \sim d_{\textrm{pref}}} [R(\tau_1) - R(\tau_2)] - \mathbb{E}_{\tau_1 \sim d^{{\pi}^*_{\textrm{offline}}}_{\hat{T}_{\textrm{inf}}}, \tau_2 \sim d_{\textrm{pref}}} [\hat{R}_{\textrm{inf}}(\tau_1) - \hat{R}_{\textrm{inf}}(\tau_2)] \\
    &\leq \kappa \mathbb{E}_{\tau_1 \sim d^{{\pi}^*_{\textrm{offline}}}_{\hat{T}_{\textrm{inf}}}, \tau_2 \sim d_{\textrm{pref}}} [P_R(\tau_1 \succ \tau_2) - P_{\hat{R}_{\textrm{inf}}}(\tau_1 \succ \tau_2)],
\end{align*}
where $\kappa = \sup_{r \in [-R_{\textrm{max}}, R_{\textrm{max}}]}\frac{1}{\sigma'(r)}$ measures the degree of non-linearity of the sigmoid function. We have applied the mean value theorem, under Assumption \ref{assumption:R_bounded}.

As $R_{\textrm{inf}} \in \mathcal{R}$, we have: $P_R(\tau_1 \succ \tau_2) - P_{\hat{R}_{\textrm{inf}}}(\tau_1 \succ \tau_2) \leq u_{P_R}(\tau_1, \tau_2)$.

Let $d_{\textrm{pref}}$ correspond to the distribution of the preference data, which consists of rollouts from exploratory policies within the learned environment model: $d_{\textrm{pref}} = d^{\pi_1}_{\hat{T}_{\textrm{inf}}}/2 + d^{\pi_2}_{\hat{T}_{\textrm{inf}}}/2$. Recall that the near-optimal policy set $\Pi_{\textrm{offline}}$ includes policy $\pi^*_{\textrm{offline}}$ (\Cref{lemma:optimal_in_S}) and that $\pi_1, \pi_2$ are the two more exploratory policies within this set:
\begin{equation*}
    \mathbb{E}_{\tau_1 \sim d^{{\pi}^*_{\textrm{offline}}}_{\hat{T}}, \tau_2 \sim d_{\textrm{pref}}} [u_{P_R}(\tau_1, \tau_2)] \leq    \max_{\pi_1, \pi_2 \in \Pi_{\textrm{offline}}} \mathbb{E}_{\tau_1 \sim d^{\pi_1}_{\hat{T}, \tau_2 \sim d^{\pi_2}_{\hat{T}}}} [u_{P_R}(\tau_1, \tau_2)].
\end{equation*}

Now, under event $\mathcal{E}_R$, by \Cref{lemma:MLE_TV}, we have, with probability $1-\delta$ for all $\tilde{R} \in \mathcal{R}$:
\begin{equation*}
    \mathbb{E}_{(\tau_1, \tau_2) \sim \mathcal{D}_{\textrm{pref}}} [\Vert P_R(\tau_1 \succ \tau_2) - P_{\tilde{R}}(\tau_1 \succ \tau_2) \Vert_1^2] \leq \frac{c_R}{N_p}  \log \left(\frac{1}{\delta}\mathcal{N}_{\mathcal{F}_R}\left(\frac{1}{N_p}\right) \right),
\end{equation*}
where $c_R>0$ is a constant. %
This implies the following upper bound for the preference uncertainty function:
\begin{equation*}
    \mathbb{E}_{(\tau_1, \tau_2) \sim \mathcal{D}_{\textrm{pref}}} [u_{P_R}(\tau_1, \tau_2) ]\leq 2\sqrt{\frac{c_R}{N_p}  \log \left(\frac{1}{\delta}\mathcal{N}_{\mathcal{F}_R}\left(\frac{1}{N_p}\right) \right)}.
\end{equation*}

Thus, we obtain:
\begin{align*}
    V^{\pi^*_{\textrm{offline}}}_{\hat{T}_{\textrm{inf}},R} - V^{\pi^*_{\textrm{offline}}}_{\hat{T}_{\textrm{inf}},\hat{R}_{\textrm{inf}}}
    &\leq 2 \kappa \sqrt{\frac{c_R}{N_p}  \log \left(\frac{1}{\delta}\mathcal{N}_{\mathcal{F}_R}\left(\frac{1}{N_p}\right) \right)}.
\end{align*}

The resulting sample complexity of $\mathcal{O}(\frac{\kappa^2d}{\epsilon^2})$ matches that of active preference learning within a known environment \citep{pacchiano2021dueling,chen2022human}.

\end{proof}

We now conclude the proof of \Cref{thm:sample_complex} under events $\mathcal{E}_R$ and $\mathcal{E}_T$. %

From \Cref{lemma:transitions}, we upper bound the transition term:
$$ V^{{\pi}^*}_{T,R} - V^{\pi^*}_{\hat{T}_{\textrm{inf}},R} \leq H R_{\textrm{max}}  C_T(\mathcal{F}_T, \pi^* )\sqrt{\frac{c_T }{N_o} \log \left( \frac{H }{\delta} \mathcal{N}_{\mathcal{F}_T}\left(\frac{1}{N_o}\right)\right) }.$$

From \Cref{lemma:reward_term}, we upper bound the reward term:
$$V^{\pi^*_{\textrm{offline}}}_{\hat{T}_{\textrm{inf}},R} - V^{\pi^*_{\textrm{offline}}}_{\hat{T}_{\textrm{inf}},\hat{R}_{\textrm{inf}}} \leq 2\kappa\sqrt{\frac{c_R}{N_p}  \log \left(\frac{1}{\delta}\mathcal{N}_{\mathcal{F}_R}\left(\frac{1}{N_p}\right) \right)}.$$

Combining with \Cref{eq:subopt_terms}, we obtain \Cref{thm:sample_complex}.

\section{Additional Related Work} \label{appendix:extra_related_work}

\begin{table}[H]%
\centering
\caption{\footnotesize \textbf{Comparison of broader related work on preference elicitation and preference-based RL.}} \label{tab:related_work_full}
\resizebox*{\textwidth}{!}{
\begin{tabular}{lcccc}
    \toprule
    Framework & Offline & Policy-based Sampling & Robustness Guarantees  & Practical Implementation \\
    \midrule
    \textsc{Preference-based Deep RL} \\
    \quad D-REX \citep{brown20safe} & \cmark & \xmark & \xmark & \cmark\\
    \quad PEBBLE \citep{lee2021pebble} & \xmark  & \xmark & \xmark & \cmark \\
    \quad MRN \citep{liu2022meta} & \xmark  & \xmark & \xmark & \cmark \\
    \quad SURF \citep{park2022surf} & \xmark  & \xmark & \xmark & \cmark \\
    \quad PT \citep{kim2023preference} & \xmark  & \xmark & \xmark & \cmark \\
    \quad IPL \citep{hejna2024inverse} & \cmark  & \xmark & \xmark & \cmark \\
    \midrule
    \textsc{Preference Elicitation} \\
    \quad PbOP \citep{chen2022human} & \xmark & \cmark & \cmark & \xmark\\
    \quad REGIME \citep{zhan2023query}  & \xmark & \cmark & \cmark & \xmark\\
    \quad FREEHAND \citep{zhan2023provable} & \cmark & \xmark & \cmark & \xmark\\
    \quad OPRL \citep{shin2022benchmarks} & \cmark & \xmark & \xmark & \cmark\\ 
    \quad MoP-RL \citep{liu2023efficient} & \xmark  & \xmark & \xmark & \cmark \\
    \quad Max. Regret \citep{wilde2020active} & \xmark & \cmark & \xmark & \cmark \\
    \quad \citet{pmlr-v87-biyik18a}  & \xmark & \xmark & \cmark & \cmark \\
    \quad \citet{biyik2019asking} & \xmark & \xmark & \cmark & \cmark \\
    \midrule
    \quad \textbf{Sim-OPRL (Ours)} & \cmark & \cmark & \cmark & \cmark\\
    \bottomrule
\end{tabular}}\vspace{-0.7em}
\end{table}

In this section, we position this work against additional related literature. We summarize our analysis in \Cref{tab:related_work_full}.

First, we note that significant prior work on preference-based reinforcement learning is centered on improving preference modeling or policy optimization for a fixed (uncertainty-based or uniform) preference elicitation strategy (all methods in the top half of \Cref{tab:related_work_full}). This focus is {orthogonal to our work}, and could certainly be combined with our preference elicitation method for optimal performance.

Next, comparing with other preference elicitation methods in \Cref{tab:related_work_full} (bottom half), we find that almost all works consider a \textbf{setting with online environment interaction}, which makes them incompatible with our problem setting. Online environment exploration eliminates the need for pessimism against transition dynamics \citep{levine2020offline}, whereas we show that it is essential for both theoretical and empirical performance in the offline setting.

The uncertainty-based preference elicitation strategy adopted in most practical PbRL algorithms is to \textbf{minimize uncertainty about preferences}, or to maximize information gain about the reward function. From prior theoretical and empirical work \citep{chen2022human,lindner2021information}, we know that this strategy is often suboptimal compared to actively reducing the set of candidate optimal policies, and both our theoretical and experimental results confirm this.

Finally, note that ours is the first work to provide both a \textbf{theoretical and empirical analysis} of the sample complexity of preference elicitation in the fully offline setting.

\paragraph{Beyond Offline Reinforcement Learning.} With our community's recent surge in interest in preference-based reinforcement learning for its application to language model alignment \citep{stiennon2020learning,ouyang2022}, we foresee many possible applications and extensions of our work beyond offline RL. Sample efficiency in preference collection is of paramount importance in Reinforcement Learning from Human Feedback (RLHF): recent works consider active learning pipelines based on uncertainty sampling to improve efficiency \citep{chen2024cost,melo2024deep}, similar to OPRL. The insights from our work suggest that a model-based elicitation strategy like Sim-OPRL, designed to simulate responses from near-optimal policies, might perform even better. We believe this line of future work could prove critical to the language model alignment community. %

\section{Implementation Details} \label{appendix:implementation}

We trained all models on two 64-core AMD processors or a single NVIDIA RTX2080Ti GPU. The total wall-clock time for running all experiments presented in this paper amounted to less than 72 hours.

\paragraph{Transition and Reward Function Training.} For all baselines, transition and reward models were implemented as linear classifiers (for the Star MDP), as two-layer perceptions with ReLU activation and hidden layer dimension 32 (Gridworld, Sepsis, MiniGrid environments), or as 5-layer MLPs with ReLU activations and hidden sizes $[512, 256, 128, 64, 32]$ for the HalfCheetah environments. Training was carried out for two or one epochs for the transition and reward models respectively, with the Adam optimizer \citep{kingma2014adam} and a learning rate of $10^{-3}$.

We provide a more detailed practical algorithm for Sim-OPRL in \Cref{alg:algo_practical}. For both our method and baselines relying on uncertainty sets (OPRL and PbOP), we estimated uncertainty sets by training models initialized with different random seeds on different bootstraps of the data (sampling 90\% of the data with replacement). We consider ensembles of size $|\mathcal{T}|=|\mathcal{R}|=5$ for both transition and reward models. Hyperparameters $\lambda_T,\lambda_R$ control the degree of pessimism in practice and could be considered equivalent to adjusting margin parameters $\beta_T, \beta_R$ in our conceptual algorithm proposed in \Cref{sec:method_off_rl}. Since the exact values prescribed by our theoretical analysis cannot be estimated, the user must set these parameters themselves. Hyperparameter optimization in offline RL is a challenging problem \citep{levine2020offline}; for our experiments, we simply set $\lambda_T=0.5$, $\lambda_R=0.1$ (StarMDP, Gridworld) and $\lambda_T=\lambda_R=1$ for the Sepsis environment.

\begin{algorithm}[th]
\caption{Sim-OPRL: Practical Algorithm}\label{alg:algo_practical}
\begin{algorithmic}[1]
\Require Observational trajectories dataset $\mathcal{D}_{\textrm{offline}}$. Hyperparameters $\lambda_T, \lambda_R$. %
\Ensure $\hat{\pi}^*$
\State Train an ensemble $\mathcal{T}$ of transition models via bootstrapping on the observational data $\mathcal{D}_{\textrm{offline}}$:$$\hat{T}(\cdot|s,a) = \frac{1}{|\mathcal{T}|}\sum_{\tilde{T} \in \mathcal{T}} \tilde{T}(\cdot|s,a); \quad u_T(s,a) = \max_{T_1, T_2 \in \mathcal{T}} | T_1(\cdot|s,a) - T_2(\cdot|s,a)|_1\cdot R_{\textrm{max}}$$
\State $ \mathcal{D}_{\textrm{pref}} \gets \emptyset$.
\For{$k=1,... N_p$}
    \State Estimate optimal offline policy set: $$\Pi_{\textrm{offline}} = \{ \pi \: \vert \: \pi = \argmax_{\pi \in \Pi} \mathbb{E}_{(s,a) \sim d^{\pi}_{\hat{T}}} \big[ \tilde{R} (s,a)-\lambda_T u_T(s,a)  \big] \: \forall \tilde{R} \in \mathcal{R} \}$$
    \State Identify exploratory policies: $\pi_1, \pi_2 = \argmax_{\pi_1, \pi_2 \in \Pi_{\textrm{offline}}} \mathbb{E}_{\tau_1 \sim d^{\pi_1}_{\hat{T}}, \tau_2 \sim d^{\pi_2}_{\hat{T}}} \left [ u_{P_R} (\tau_1, \tau_2)  \right]$
    \State Rollouts in model: $\tau_1 \sim d^{\pi_1}_{\hat{T}}, \tau_2 \sim d^{\pi_2}_{\hat{T}}$.
    \State Collect preference label $o$ for $ (\tau_1, \tau_2) $.
    \State $\mathcal{D}_{\textrm{pref}} \gets \mathcal{D}_{\textrm{pref}} \cup \{ (\tau_1, \tau_2, o)\}$.
    \State Train an ensemble $\mathcal{R}$ of reward models via bootstrapping of the preference data $\mathcal{D}_{\textrm{pref}}$: $$\hat{R}(s,a) = \frac{1}{|\mathcal{R}|}\sum_{\tilde{R} \in \mathcal{R}} \tilde{R}(s,a); \quad u_R(s,a) = \max_{R_1, R_2 \in \mathcal{R}} | R_1(s,a) - R_2(s,a)|_1$$
\EndFor
\State $\hat{\pi}^* \gets \argmax_{\pi \in \Pi} \mathbb{E}_{(s,a) \sim d^{\pi}_{\hat{T}}} [\hat{R}(s,a) - \lambda_R u_R(s,a) - \lambda_T  u_T(s,a)]$ 
\end{algorithmic}
\end{algorithm}

\paragraph{Near-Optimal Policy Set and Exploratory Policies.} Both Sim-OPRL and PbOP require constructing a set of near-optimal policies within a learned model of the environment. Note that the PbOP algorithm in \citet{chen2022human} proposes to construct the near-optimal policy set by considering all policies that have a preference greater than $1/2$ over \textit{all other policies in $\Pi$}, under a transition and preference uncertainty bonus. This is infeasible to estimate in practice; we modified the algorithm to allow for practical implementation. The motivation in building the set of plausibly optimal policies remains the same, but the theoretical guarantees may not hold.

We build $\Pi_{\textrm{offline}}$ by maintaining a policy model for all $\tilde{R} \in \mathcal{R}$, i.e., each element of the reward ensemble. Policy models are optimized to maximize returns under the transition model $\hat{T}$ and the reward function $\tilde{R} - \lambda_T u_T$ (Sim-OPRL) or $\tilde{R} + \lambda_T u_T $ (PbOP). Next, the most exploratory policies are identified by generating 10 rollouts of each of the candidate policies within the learned (SimOPRL) or true (PbOP) model. The trajectories $(\tau_1, \tau_2)$ maximizing the preference uncertainty function $u_{P_R} (\tau_1, \tau_2)$ are used for preference feedback. In PbOP, the trajectories are then added to the trajectories buffer and the transition model is retrained for 20 (Star MDP, Gridworld) or 200 steps (Sepsis).

\paragraph{Preference Feedback Collection.} Preference labels are provided through the ground-truth reward function associated with every environment. As stated in \Cref{sec:method_off_rl}, for computational efficiency, we sample preferences in batches of 4 (Star MDP, Gridworld) or 100 (Sepsis) to reduce the number of model updates needed.

\paragraph{Policy Optimization.} Policy optimization stages, both in estimating optimal policy sets in Sim-OPRL and PbOP and in outputting final policies, are carried out exactly through linear programming for the Star MDP and Gridworld using \texttt{cvxopt} \citep{diamond2016cvxpy}, based on code from \citet{lindner2021information}, using Proximal Policy Optimization \citep{schulman2017proximal} implemented in \texttt{stable-baselines3} \citep{stable-baselines3} for the Sepsis and MiniGrid environments, and Soft Actor-Critic \citep{haarnoja2018soft} for HalfCheetah. In the latter case, after every preference collection episode, reward and policy models were trained from the checkpoint of the previous iteration, for only 20 steps to minimize computation.

\paragraph{Baselines and Ablations.} We implement both OPRL baselines within our model-based offline preference-based algorithm described in \Cref{sec:method_off_rl}. Uncertainty sampling is taking the pair with maximum preference uncertainty over 45 pairs for every sample, to reduce the load of computing preference uncertainty over the entire trajectory buffer. %

Our ablation study for \Cref{fig:results_simple_ablation} is conducted as follows. For Sim-OPRL without pessimism in the output policy, we output the policy that maximizes the value function under the MLE estimate of the transition and reward function, $\hat{T}$ and $\hat{R}$, after preference acquisition. For Sim-OPRL without pessimism in the simulated rollouts, we estimate the optimal policy set $\Pi_{\textrm{offline}}$ in the MLE estimate of the transition model instead of its pessimistic counterpart. Finally, for Sim-OPRL without optimism in the simulated rollouts, we generate rollouts from any two policies in $\Pi_{\textrm{offline}}$ instead of the most explorative ones.

\section{Environment Details} \label{appendix:envs}

\begin{figure}[ht] %
\centering
\begin{subfigure}[b]{0.47\textwidth}
    \centering
    \frame{\includegraphics[width=0.7\linewidth]{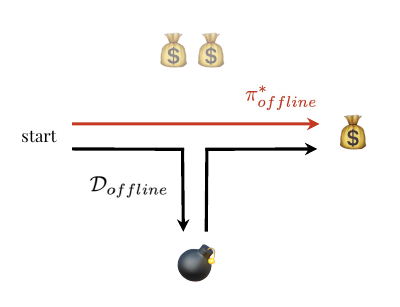}}
\end{subfigure}
\begin{subfigure}[b]{0.47\textwidth}
\centering
    \resizebox{0.7\linewidth}{!}{%
    \begin{tikzpicture}[auto,
        vertex/.style={minimum size=0.8cm,draw,circle,text width=0.8mm}, %
        vertex2/.style={circle,minimum size=0.7cm,fill=light-gray},%
        ]
            \node[vertex,label=center:$s_{1}$] (s1) {};
            \node[vertex,above=1cm of s1,label=center:$s_4$] (s4) {}; %
            \node[right=0.25cm of s4] (r4) {\footnotesize $R=10$};
            \node[vertex, right= 1cm of s1, label=center:$s_2$] (s2) {};
            \node[below=0.2cm of s2] (r2) {\footnotesize $R=6$};
            \node[vertex, below= 1cm of s1, label=center:$s_3$] (s3) {};
            \node[right=0.25cm of s3] (r3) {\footnotesize $R=-1$};
            \node[vertex, left= 1cm of s1, label=center:$s_0$] (s0) {};
    
             \path[-{Stealth[]}] %
              (s0) edge [bend left=20] node[sloped, anchor=center,align=center,above] {$a_0$} (s1)
              (s1) edge [bend left=20] node[sloped, anchor=center,align=center,below] {$a_1$} (s0)
              (s1) edge  [bend left=20] node[sloped, anchor=center,above,align=center] {$a_0$} (s2)
              (s2) edge [bend left=20] node[sloped, anchor=center,align=center,below] {$a_1$} (s1)
              (s1) edge [bend left=20] node[sloped, anchor=center,align=center,above] {$a_3$}  (s3)
              (s3) edge [bend left=20] node[sloped, anchor=center,align=center,above] {$a_2$} (s1)
              (s1) edge [bend left=20] node[sloped, anchor=center,align=center,above] {$a_2$} (s4)
              (s4) edge [bend left=20] node[sloped, anchor=center,align=center,above] {$a_3$} (s1)
              ;
              
    \end{tikzpicture}}
\end{subfigure}

\caption{\footnotesize \textbf{Star MDP}. Transition probabilities are 0.9 for all solid arrows. Omitted actions or complementary transitions keep the state unchanged.} \label{fig:mdp_example}\label{fig:setup_example}
\end{figure}

\paragraph{Star MDP.} We illustrate the transition dynamics underlying the Star MDP in \Cref{fig:mdp_example}. Transition probabilities are 0.9 for all depicted solid arrows, and leave the state unchanged otherwise. Other actions also keep the state unchanged with probability 1. Episodes have length $H=3$ and start from $s_0$. Unless specified otherwise, the offline dataset $\mathcal{D}_{\textrm{offline}}$ consists of 40 trajectories which only cover states $(s_0, s_1, s_3)$ and $(s_3, s_1, s_2)$.

\begin{figure}[ht]
    \centering
\begin{tikzpicture}
\draw[step=1cm,color=gray] (-2,-2) grid (2,2);
\draw[fill=YellowGreen]  (-2,2) -- (-1,2) -- (-1,1) -- (-2,1) -- (-2,2);
\node at (-1.5,+1.5) {Start};
\node at (+1.5,-1.5) {$10$};
\node at (+1.5,+1.5) {$-1$};
\node at (+0.5,+0.5) {$-1$};
\node at (+0.5,+1.5) {$-1$};
\node at (+1.5,+0.5) {$-1$};
\node at (-0.5,-0.5) {$20$};
\draw[black, very thick]  (-1,0) -- (0,0) -- (0,-1) -- (-1,-1) -- (-1,0);
\end{tikzpicture}
    \caption{\footnotesize \textbf{Gridworld environment.} Rewards at every state are indicated if non-zero. Transition probabilities are 0.9. Thick lines indicate an obstacle, through which state transitions have probability zero.}
    \label{fig:gridworld}
\end{figure}

Preferences collected over samples from the offline dataset learn slowly about the negative reward in the bottom state, as it is always included in the sampled trajectories. Instead, simulated rollouts can query a direct comparison between the optimal path and one that includes it. This example illustrates clearly why querying feedback over simulated rollouts achieves better environment returns than over samples from the offline buffer.

\paragraph{Gridworld.} We illustrate the gridworld environment in \Cref{fig:gridworld}. The environment consists of a $4\times 4$ grid with states associated with different rewards, including a negative-reward region in the top-right corner, a high-reward but unreachable state, and a moderate-reward state at the bottom right corner. Each episode starts in the top-left corner. Transition probabilities for each of the four actions (\texttt{top, left, bottom, right}) are 0.9 for the intended direction, and 0.1 for the others; and action \texttt{stay} remains in the current state with probability 1. Transitions beyond the grid limits or through obstacles have probability zero, with the remainder of the probability mass for each action being distributed amongst other directions equally. The offline dataset contains 150 episodes and the behavioral policy is $\epsilon$-optimal with noise $\epsilon=0.1$.  Episodes have length $H=10$. 

\paragraph{MiniGrid-FourRooms.} We also conduct experiments on the \texttt{minigrid-fourrooms-v0} D4RL dataset \citep{fu2020d4rl}, ignoring all reward information in the offline dataset. In this environment, the agent must navigate a maze consisting of four interconnected rooms and reach a green goal square \citep{MinigridMiniworld23}. Agent and goal squares are randomly placed at the beginning of every episode.

\paragraph{HalfCheetah-Random.} The \texttt{halfcheetah-random-v2} dataset is also part of the D4RL benchmark \citep{fu2020d4rl}. Our choice is motivated by \citet{shin2022benchmarks} who identify it as a particularly challenging offline preference-based reinforcement learning task. The dataset consists of 1 million transitions induced by a random policy in the MuJoCo environment Halfcheetah-v2, which rewards agents if they move forward. The observation space is 17-dimensional.

\begin{table}[H]

\caption{\footnotesize \textbf{Transition dynamics of the sepsis simulator environment \citep{oberst2019counterfactual}}. Table adapted from \citep{tang2021model}.} \label{tab:sepsis_transitions}
\resizebox*{\textwidth}{!}{
\begin{tabular}{ccccccl}
\toprule
Step & Variable & Current & New & Change & Variable Affected & Effect \\
\hline
\multirow{4}{*}{1}    & \multirow{4}{*}{Antibiotics}      & \multirow{2}{*}{-}       & \multirow{2}{*}{on}   & \multirow{2}{*}{-}      & Heart Rate & high $\rightarrow$ normal w.p. 0.5   \\
     &          &        &      & &  Systolic Blood Pressure  & high $\rightarrow$ normal w.p. 0.5  \\
\cmidrule(lr){3-5} \cmidrule(lr){6-7}
     &          & \multirow{2}{*}{on}      & \multirow{2}{*}{off}  & \multirow{2}{*}{withdrawn} & Heart Rate & normal $\rightarrow$ high w.p. 0.1  \\
     &          &        &      & &  Systolic Blood Pressure  & normal $\rightarrow$ high w.p. 0.5  \\
\midrule

\multirow{2}{*}{2}    & \multirow{2}{*}{Ventilation}      & \multirow{1}{*}{-}       & \multirow{1}{*}{on}   & \multirow{1}{*}{-}      & Oxygen Saturation & low $\rightarrow$ normal w.p. 0.7  \\
\cmidrule(lr){3-5} \cmidrule(lr){6-7}
& & \multirow{1}{*}{on}      & \multirow{1}{*}{off}  & \multirow{1}{*}{withdrawn}      & Oxygen Saturation & normal $\rightarrow$ low w.p. 0.1  \\
\midrule

\multirow{4}{*}{3}    & \multirow{4}{*}{Vasodilators}      & \multirow{7}{*}{-}       & \multirow{7}{*}{on}   & \multirow{7}{*}{-}      &  & low $\rightarrow$ normal w.p. 0.7 (non-diabetic) \\
     &          &        &      &  &  & normal $\rightarrow$ high w.p. 0.7 (non-diabetic) \\
     &          &        &      & & Systolic Blood Pressure & low $\rightarrow$ normal w.p. 0.5 (diabetic) \\
     &          &        &      & &  & low $\rightarrow$ high w.p. 0.4 (diabetic) \\
     &          &        &      & &  & normal $\rightarrow$ high w.p. 0.9 (diabetic)   \\
\cmidrule(lr){6-7}
     &          &        &      & & \multirow{2}{*}{Blood Glucose} & very low, low $\rightarrow$ normal, normal $\rightarrow$ high, \\
     &          &        &      & &  & high $\rightarrow$ very high w.p. 0.5 (diabetic) \\
\cmidrule(lr){3-5} \cmidrule(lr){6-7}

& & \multirow{4}{*}{on}      & \multirow{4}{*}{off}  & \multirow{4}{*}{withdrawn} & \multirow{4}{*}{Systolic Blood Pressure } & normal $\rightarrow$ low w.p. 0.1 (non-diabetic)  \\
     &          &        &      & &    & high $\rightarrow$ normal w.p. 0.1 (non-diabetic) \\
     &          &        &      & & & normal $\rightarrow$ low w.p. 0.05 (diabetic) \\
     &          &        &      & & & high $\rightarrow$ normal w.p. 0.05 (diabetic)  \\
\midrule
     4 & Heart Rate & \multicolumn{3}{c}{\multirow{4}{*}{fluctuate}} & \multicolumn{2}{l}{Vitals spontaneously fluctuate when not affected} \\
     5 & Systolic Blood Pressure & & & &  \multicolumn{2}{l}{by treatment (either enabled or withdrawn),} \\
     6 & Oxygen Saturation & & & &  \multicolumn{2}{l}{the level fluctuates ±1 w.p. 0.1, except:} \\
     7 & Blood Glucose & & & &  \multicolumn{2}{l}{glucose fluctuates ±1 w.p. 0.3 (diabetic).} \\
\bottomrule
\end{tabular}}
\end{table}

\paragraph{Sepsis Simulation.} The sepsis simulator \citep{oberst2019counterfactual} is a commonly used environment for medically-motivated RL work \citep{tang2021model}. %
We use the original authors' publicly available code: \url{https://github.com/clinicalml/gumbel-max-scm/tree/sim-v2/sepsisSimDiabetes} (MIT license). The state space consists of 1,440 discrete states based on different observational variables (heart rate, blood pressure, oxygen concentration, glucose, diabetes status). The action space consists of three binary treatment options (antibiotic administration, vasopressor administration, and mechanical ventilation). The complex transition dynamics of the environment determine how each treatment affects the value of each vital sign, and are summarized in \Cref{tab:sepsis_transitions}; these were designed to reflect patients' physiology \citep{oberst2019counterfactual}. The ground truth reward function is sparse and only assigns a positive reward of $+1$ to surviving patients and a negative reward of $-1$ if death occurs (3 or more abnormal vitals) during their stay. The offline trajectories dataset includes 10,000 episodes following an $\epsilon$-optimal policy with noise $\epsilon=0.1$ and the episode length is $H=20$.

\section{Additional Results} \label{appendix:extra_experiments}

We include additional results in this section.

\begin{figure}[ht]
    \centering
    \begin{subfigure}[b]{0.45\textwidth}
    \includegraphics[width=\textwidth]{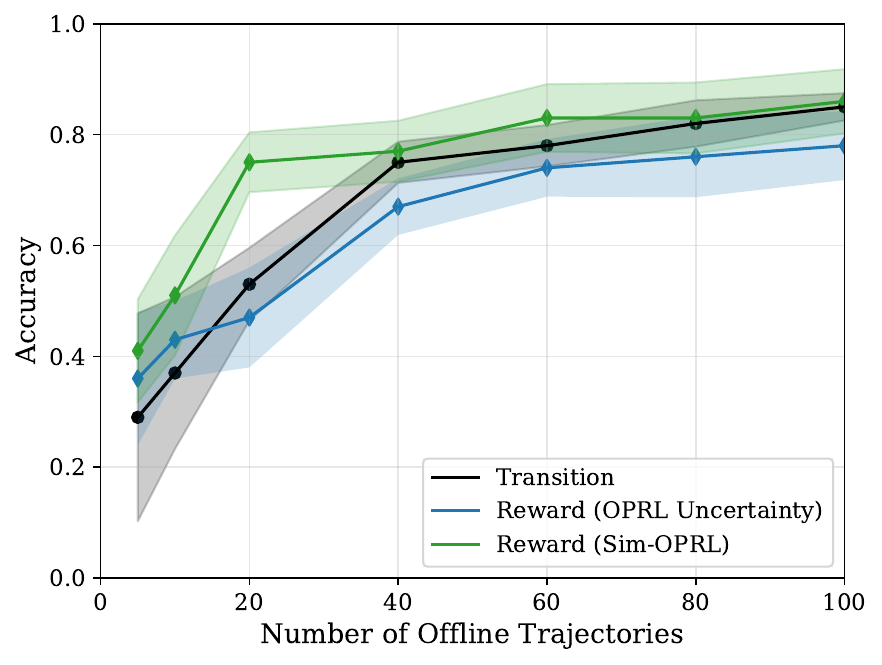}
   \caption{\footnotesize As a function of offline dataset size.}\label{fig:acc_vs_no}
    \end{subfigure}
    \hfill
    \begin{subfigure}[b]{0.45\textwidth}
     \includegraphics[width=\textwidth]{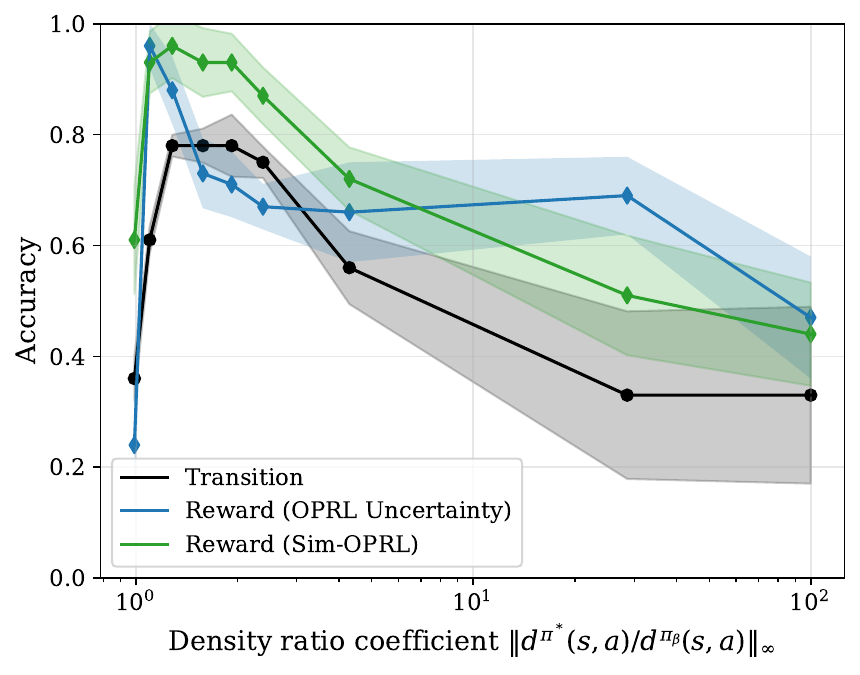}
    \caption{\footnotesize As a function of optimal coverage.}\label{fig:acc_vs_Ct}
    \end{subfigure}
    \caption{\footnotesize\textbf{Transition and preference model accuracy as function of the properties of the observational data} (Star MDP). Preference elicitation is carried out until 10 preferences are queried. Mean and 95\% confidence intervals over 20 experiments. Note that the transition model is the same for the two methods, as they have access to the same dataset.}%
    \label{fig:accuracy_vs_data}
\end{figure}

In \Cref{fig:accuracy_vs_data}, we report the accuracy of the transition and preference model achieved for the Star MDP as we vary the size of optimality of the offline dataset. Accuracy is measured against all possible state transitions and over 100 pairs of random trajectories (random combinations of the 5 states and 4 actions in a sequence of $H=3$). This complements our analysis in \Cref{sec:experiments,fig:complexity_vs_data}. We see a steady improvement in both transition and reward model quality as we increase the amount of observational data in \Cref{fig:acc_vs_no}, which explains the observed dependence of $N_p$ on $N_o$ in \Cref{fig:np_vs_no}.

In \Cref{fig:acc_vs_Ct}, we notice low model performance at both extremes of the x-axis. When the dataset is fully optimal, we find that all trajectories involve the same sequence of actions and states, so learning a transition or reward model from this data is challenging. We reach a similar conclusion at the other end of the spectrum at high density ratios, where the coverage the optimal states reduces. We reach highest performance for both models at intermediate values, when diversity of the observational data is high.

Still, it is important to stress that the highest accuracy of both models does not necessarily translate to the best-performing policy: good performance on the distribution induced by the optimal policy is more important, as formalized by the concentrability coefficients. %

\begin{table}[]
    \centering
    \caption{\textbf{Sample complexity $N_p$ with Sim-OPRL as a function of dataset optimality}, to reach a suboptimality gap of $\epsilon=20$ over normalized returns for the D4RL HalfCheetah environment. Mean and 95\% confidence interval over 6 experiments.}
    \label{tab:halfcheetah_ablation}
    \begin{tabular}{lccc}
    \toprule
Offline Dataset	& OPRL Uniform & OPRL Uncertainty & Sim-OPRL\\
\midrule
HalfCheetah-Random  & 108 $\pm$ 9 & 71 $\pm$ 8 & \textbf{50 $\pm$ 10} \\
HalfCheetah-Medium	& 63 $\pm$ 8 & \textbf{49 $\pm$ 6} & \textbf{36 $\pm$ 8} \\ 
HalfCheetah-Medium-Expert	& 47 $\pm$ 8 & 45 $\pm$ 6 & \textbf{30 $\pm$ 8} \\ 
\bottomrule
    \end{tabular}
    \vspace{1em}
    \caption{\textbf{Policy performance for a fixed preference budget} ($N_p=30$) for different D4RL HalfCheetah datasets. Normalized environment returns, mean and 95\% confidence interval over 6 experiments.}
    \label{tab:halfcheetah_ablation_performance}
    \centering
\begin{tabular}{lccc}
    \toprule
Offline Dataset               & OPRL Uniform & OPRL Uncertainty & Sim-OPRL   \\
\midrule
HalfCheetah-Random        & 17 $\pm$ 7   & \textbf{47 $\pm$ 10}     & \textbf{49 $\pm$ 9} \\
Halfcheetah-Medium        & 43 $\pm$ 6   & \textbf{63 $\pm$ 6}       & \textbf{69 $\pm$ 7} \\
Halfcheetah-Medium-Expert & 57 $\pm$ 6   & \textbf{72 $\pm$ 6}      & \textbf{80 $\pm$ 6} \\
\bottomrule
\end{tabular}    
\end{table}

To complement the analysis with a more complex environment, we also ran different preference elicitation algorithms on D4RL HalfCheetah datasets of varying optimality. In \Cref{tab:halfcheetah_ablation}, we report the number of preference samples needed to achieve a target suboptimality with these different datasets. We reach the same conclusion as above: fewer preferences are needed as the dataset becomes more optimal.

In \Cref{tab:halfcheetah_ablation_performance}, we report policy performance for a fixed preference budget. With a fixed preference budget, policies learned from suboptimal observational datasets achieve lower returns, due to the worse quality of the transition model. \citet{shin2022benchmarks} reach similar conclusions when evaluating policy performance under different OPRL methods (in their Table 2). %

\section{Broader Impact} \label{sec:impact}

Better preference elicitation strategies for offline reinforcement learning have the potential to facilitate and improve decision-making in real-world safety-critical domains like healthcare or economics, by reducing reliance on direct environment interaction and reducing human effort in providing feedback. Potential downsides could include the amplification of biases in the offline data, potentially leading to suboptimal or unfair policies. Thorough evaluation is therefore crucial to mitigate this before deploying models in such real-world applications. In addition, human preferences may not be fully captured by binary comparisons. As noted in our conclusion, we hope that future work will explore richer feedback mechanisms to better model complex decision-making objectives.

\end{document}